\documentclass{article}

\usepackage{microtype}
\usepackage{graphicx}
\usepackage{subcaption}
\usepackage{booktabs}
\usepackage{hyperref}

\usepackage[accepted]{icml2026}

\usepackage{amsmath}
\usepackage{amssymb}
\usepackage{mathtools}
\usepackage{amsthm}
\usepackage{wrapfig}

\usepackage{array}
\usepackage{xurl}
\newcolumntype{P}[1]{>{\centering\arraybackslash}p{#1}}

\usepackage[capitalize,noabbrev]{cleveref}

\newcommand{\RR}{\mathbb{R}}
\newcommand{\NN}{\mathbb{N}}
\newcommand{\cA}{\mathcal{A}}
\newcommand{\cS}{\mathcal{S}}
\newcommand{\cO}{\mathcal{O}}
\newcommand{\cP}{\mathcal{P}}
\newcommand{\cW}{\mathcal{W}}

\newcommand{\ourmethod}{\textsc{LIFT}}
\newcommand{\cw}[1]{\pi_{\mathrm{cw}, #1}}

\newcommand{\eps}{\epsilon}

\theoremstyle{plain}
\newtheorem{theorem}{Theorem}[section]
\newtheorem{prop}[theorem]{Proposition}
\newtheorem{lemma}[theorem]{Lemma}
\newtheorem{cor}[theorem]{Corollary}
\newtheorem{thm}[theorem]{Theorem}
\theoremstyle{definition}
\newtheorem{defn}[theorem]{Definition}

\theoremstyle{remark}

\usepackage[textsize=tiny]{todonotes}

\icmltitlerunning{Trajectory-Level Data Augmentation for Offline Reinforcement Learning}

\begin{document}

\usetikzlibrary{decorations.pathreplacing, decorations.pathmorphing}
\usetikzlibrary{shapes.misc}

\twocolumn[
  \icmltitle{Trajectory-Level Data Augmentation for Offline Reinforcement Learning}
  \icmlsetsymbol{equal}{*}

  \begin{icmlauthorlist}
    \icmlauthor{Tobias Schmähling}{yyy}
    \icmlauthor{Matthias Burkhardt}{yyy}
    \icmlauthor{Tobias Windisch}{yyy}

  \end{icmlauthorlist}

  \icmlaffiliation{yyy}{University of Applied Sciences Kempten, Kempten, Germany}

  \icmlcorrespondingauthor{Tobias Schmähling}{tobias.schmaehling@hs-kempten.de}
  \icmlkeywords{Machine Learning, ICML}

  \vskip 0.3in
]

\printAffiliationsAndNotice{} 

\begin{abstract}
We propose a data augmentation method for offline reinforcement learning, motivated by active positioning problems.
Particularly, our approach enables the training of off-policy models from a limited number of
suboptimal trajectories. We introduce a trajectory-based augmentation technique that exploits task structure and the
geometric relationship between rewards, value functions, and mathematical properties of logging policies. During data
collection, our augmentation supports suboptimal logging policies, leading to higher data quality
and improved offline reinforcement learning performance.
We provide theoretical justification for these strategies and validate them empirically across positioning tasks of varying dimensionality and under partial observability.
\end{abstract}

\section{Introduction}

Offline reinforcement learning promises to learn effective decision-making policies
from static, pre-collected datasets, avoiding the cost and risk of online
exploration~\citep{offline-rl-survey}. This is particularly attractive in real systems, where
trial-and-error interaction is expensive or unsafe. Yet the central challenge of
offline RL is equally well known: because learning is constrained to the support of the dataset,
distribution shift between the learned policy and the data-generating behavior can lead to severe
extrapolation errors and brittle, suboptimal performance. 
Contemporary methods therefore rely on conservative updates, regularizing toward the behavior distribution 
or warm-starting from the logging policy before attempting improvement. However, these algorithmic safeguards do 
not remove the core dependency on the data itself.
Consequently, offline RL performance can depend strongly on the
quality of the logging policy that produced the data. Prior evidence shows that
dataset selection can outweigh algorithmic differences~\citep{dataset-perspective, d4rl, exorl},
suggesting that the logging policy effectively sets the attainable frontier for an offline learner.
While the field has developed a rich set of algorithms for coping with imperfect data (see Section~\ref{s:related_work}),
actionable principles for improving the data-generating process remain scarce. 

We observe an algorithmic gap in what can be done with a given logging policy to improve RL, with
pure offline learning on the one end and offline-to-online fine-tuning on the other. Motivated by
this, we seek to understand ways in the middle, particularly how logging policies can be augmented
already \emph{during collection} in a principled way to generate better data for RL. While prior 
work has studied the effect of exploration ~\cite{zhang2023active}, we focus on exploiting
logged data to improve learning. Here, a key practical obstacle to improve datasets that is easily
overlooked is the \emph{hand-off problem}. In many applications, the logging policy is not a stochastic, exploratory controller,
but a deterministic, scripted process with internal state. Injecting a better action mid-trajectory can invalidate its assumptions, forcing a
restart before execution can safely resume.

\begin{figure}[ht]
    \centering
\scalebox{0.6}{
    \begin{tikzpicture}

    \node[] (initial-collection) at (0,0) {
        \begin{tikzpicture}
        \node[draw, inner sep=1pt, circle, fill=black] at (1,0) (s1) {};
        \node[draw, inner sep=1pt, circle, fill=black] at (1.1,-1) (s2) {};
        \node[draw, inner sep=1pt, circle, fill=black] at (1.9,-1.6) (s3) {};
        \node[draw, inner sep=1pt, circle, fill=black] at (3.1,-1.1) (s4) {};
        \node[draw, inner sep=1pt, circle, fill=black] at (4.1,-0) (s5) {};
        \node[draw, inner sep=1pt, circle, fill=black] at (3.4,0.5) (s6) {};
        \node[draw, inner sep=1pt, circle, fill=black] at (2.9,0.) (s7) {};
        \node[draw, inner sep=1pt, circle, fill=red!80!black] at (3.25,-0.25) (goal) {};

        \draw[-latex, color=blue, thick] (s1) -- (s2);
        \draw[-latex, color=blue, thick] (s2) -- (s3);
        \draw[-latex, color=blue, thick] (s3) -- (s4);
        \draw[-latex, color=blue, thick] (s4) -- (s5);
        \draw[-latex, color=blue, thick] (s5) -- (s6);
        \draw[-latex, color=blue, thick] (s6) -- (s7);
        \draw[-latex, color=blue, thick] (s7) -- (goal);

        \end{tikzpicture}
    };

    \node[above=-0.1cm of initial-collection] {Trajectories of $\textcolor{blue}{\pi_\beta}$};
    
    \node[] (augmented-shortcuts) at (5,0) {
        \begin{tikzpicture}
        \node[draw, inner sep=1pt, circle, fill=black] at (1,0) (s1) {};
        \node[draw, inner sep=1pt, circle, fill=black] at (1.1,-1) (s2) {};
        \node[draw, inner sep=1pt, circle, fill=black] at (1.9,-1.6) (s3) {};
        \node[draw, inner sep=1pt, circle, fill=black] at (3.1,-1.1) (s4) {};
        \node[draw, inner sep=1pt, circle, fill=black] at (4.1,-0) (s5) {};
        \node[draw, inner sep=1pt, circle, fill=black] at (3.4,0.5) (s6) {};
        \node[draw, inner sep=1pt, circle, fill=black] at (2.9,0.) (s7) {};
        \node[draw, inner sep=1pt, circle, fill=red!80!black] at (3.25,-0.25) (goal) {};

        \draw[-latex, color=blue, thick] (s1) -- (s2);
        \draw[-latex, color=blue, thick] (s2) -- (s3);
        \draw[-latex, color=blue, thick] (s3) -- (s4);
        \draw[-latex, color=blue, thick] (s4) -- (s5);
        \draw[-latex, color=blue, thick] (s5) -- (s6);
        \draw[-latex, color=blue, thick] (s6) -- (s7);
        \draw[-latex, color=blue, thick] (s7) -- (goal);

        \draw[-latex, color=red!80!black, dotted,thick] (s2) to[bend left] (s4);
        \draw[-latex, color=red!80!black, dotted,thick] (s5) to[bend right] (goal);
        \draw[-latex, color=red!80!black, dotted,thick] (s4) to[bend right] (goal);
        \draw[-latex, color=red!80!black, dotted,thick] (s4) to[bend left] (s7);

        \end{tikzpicture}
    };

    \node[above=-0.1cm of augmented-shortcuts] {Shortcut augmented trajectories};

    \node[] (augmented-collected) at (10,0) {
        \begin{tikzpicture}
        \node[draw, inner sep=1pt, circle, fill=black] at (1,0) (s1) {};
        \node[draw, inner sep=1pt, circle, fill=black] at (1.1,-1) (s2) {};
        \node[draw, inner sep=1pt, circle, fill=black] at (1.9,-1.6) (s3) {};
        \node[draw, inner sep=1pt, circle, fill=black] at (3.1,-1.1) (s4) {};
        \node[draw, inner sep=1pt, circle, fill=black] at (4.1,-0) (s5) {};
        \node[draw, inner sep=1pt, circle, fill=black] at (3.4,0.5) (s6) {};
        \node[draw, inner sep=1pt, circle, fill=black] at (2.9,0.) (s7) {};
        \node[draw, inner sep=1pt, circle, fill=red!80!black] at (3.25,-0.25) (goal) {};

        \node[draw, inner sep=1pt, circle, fill=black] at (1.9,-1) (s2p) {};
        \node[draw, inner sep=1pt, circle, fill=black] at (2.9,0) (s3p) {};

        \draw[-latex, color=blue, thick] (s1) -- (s2);
        \draw[-latex, color=red!80!black, thick] (s2) -- (s2p);
        \draw[-latex, color=blue, thick] (s2p) -- (s3p);
        \draw[color=blue, dotted, thick] (s2) -- (s3);
        \draw[color=blue, dotted,thick] (s3) -- (s4);
        \draw[color=blue, dotted,thick] (s4) -- (s5);
        \draw[color=blue, dotted,thick] (s5) -- (s6);
        \draw[color=blue, dotted,thick] (s6) -- (s7);
        \draw[-latex,color=blue, thick] (s7) -- (goal);

        \end{tikzpicture}
    };

    \node[above=-0.1cm of augmented-collected] {Trajectories of $\textcolor{blue}{\pi_\beta}$ with
    $\textcolor{red}{a_\theta}$};

    \draw[-latex, thick] (0, -1) to[bend right]
        node[midway, fill=white,align=center] {Compute\\shortcuts} (4, -1);

    \draw[-latex, thick] (5, -1) to[bend right]
node[midway, fill=white,align=center] {Train\\augmentor $\textcolor{red}{a_\theta}$} (9, -1);

\end{tikzpicture}
}
    \caption{
Overview of \ourmethod{}.}\label{fig:overview}
\end{figure}

In this paper, we study logging policy augmentation in the context of
\emph{active positioning problems} that capture both partial observability and 
fine tolerance demands that make online RL particularly costly, while also reflecting the prevalence
of deterministic procedures in practice, making them an ideal testbed for offline RL in
general and logging augmentations in particular.  
Additionally, their contextual and geometric structure enables a theoretically grounded analysis of
when and why augmentations are beneficial.
They require placing an object precisely at a
desired position by an end-effector, spanning a wide range of challenging RL problems, from high-precision
positioning tasks as alignments of lens
systems~\cite{burkhardt2025relign} and camera and telescope
assembly~\cite{camera_alignment,telescope_alignment}, the alignments of laser
optics~\cite{laser_alignment,interferobot}, to robot manipulation
tasks~\cite{fetch}.

\subsection{Contributions}

We introduce \emph{LIFT}, short for logging improvement via fine-tuned
trajectories, a framework that enhances punctual data collection for offline RL. 
Specifically, we propose a novel augmentation scheme (Section~\ref{sec:lift}) that keeps the logging
policy in control while enabling optimistic probing by an \emph{augmentor} trained while data is collected.
The augmentor’s goal is to skip redundant and unnecessary sub-trajectories during collection and to
smooth hand-offs between itself and the logging policy.
A key challenge here is that the augmentor has to suggest beneficial actions while being
trained with very limited data. A central innovation of our work is to leverage the 
geometric structure of the logged trajectories to identify \emph{shortcuts}, that is, actions that
point towards states with higher value. Identifying shortcuts is non-trivial in general due to distortions in
the dynamics and the partial observability. We prove in Section~\ref{sec:theory} under which conditions such shortcuts can be reliably identified 
in logged data, and we devise an algorithm to extract them from this data (Algorithm~\ref{alg:shortcut_computation}).
Finally, Section~\ref{sec:experiments} presents a systematic study that underlines the strength and
generality of our approach by analyzing the effect of the logging policy, transition behavior, dimensionality, and
informativeness of observations on policy performance across a diverse class of active
positioning tasks. We implemented the shortcut augmentation in d3rlpy~\cite{d3rlpy}, following its transition picker protocol, which allows our
static augmentation method to be integrated into any RL algorithm implemented in d3rlpy by adding a single line of code. The source code and 
integration examples are available on GitHub.\footnote{\url{https://github.com/HS-Kempten/lift}}

\subsection{Related Work}\label{s:related_work}

A central challenge in offline RL is overestimating values for out-of-distribution actions.
Methods address this either by constraining the learned policy toward the logging distribution or b
learning pessimistic value functions.  Representative approaches include behavior regularization via
BC losses or divergence penalties~\citep{bcq,td3_bc,rebrac}, pessimistic critics~\citep{cql}, or expectile-based policy
extraction~\citep{iql}. Methods depending on
regularizations are sensitive to hyperparameters and they often limit the policy to stay close to
the behavior, for instance due to safety constraints, which can be detrimental if the behavior is
highly suboptimal. 
Moreover, several studies note that algorithm
performance is highly sensitive to dataset composition~\citep{d4rl, hong2023harnessing}, that is, mixing suboptimal trajectories with
expert data. Prior work has
studied intensively the importance of high-coverage~\cite{exorl,pmlr-v267-wagenmaker25a} and
expertness of datasets~\cite{offline_rl_vs_bc,corrado2024guda} for offline RL. 
This has been underpinned by the investigations in~\cite{dataset-perspective}, where scores are
designed that measure exploitation and exploration capabilities of datasets and how these affect
algorithmic performance of offline RL methods. While \citet{ghugare2024closing} discuss limitations in combinatorial generalization (‘stitching’) 
from an algorithmic perspective, our work addresses a complementary problem at the data level through trajectory-level augmentation.
Increasing the dataset diversity via data augmentations is another line of work to mitigate 
narrow data distributions. 
In~\cite{HER}, an augmentation scheme for sparse reward in robotic manipulation tasks is proposed
that re-labels goals and states in logged trajectories to create additional successful transitions.
Augmentations for problems with image observations have been studied
extensively in the literature, where it was shown that rather simple image
augmentations~\cite{rl_with_augmentations, s4rl}, such as random cropping,
or utilizing causal
techniques~\cite{counterfactual_augmentation} can significantly improve sample efficiency.
Recently, diffusion-based techniques have been proposed 
that generate synthetic trajectories in order to make offline RL more robust~\cite{diffstitch,gta,
synthetic_experience_replay}.
In contrast to purely \emph{offline} augmentations on static datasets, hybrid schemes that actively
enhance data collection are more relevant to our work. A common hybrid approach warm-starts online
reinforcement learning from an offline-trained policy and continues training with newly collected
online data. Prior work shows that, combined with careful sampling schemes and network architectures
\citep{rlpd} or policy regularization \citep{overcoming_exploration_in_rl}, this can yield strong
initializers for online learning. Nevertheless, these methods still require rather long online
fine-tuning or high-quality offline datasets, neither of which is typically available in active positioning
tasks. A more subtle scheme is to let an expert guide the data collection process, like in
GuDA~\cite{corrado2024guda}, where human-guidance is interleaved to direct trajectories toward
success. Another relevant line of work is to weave online transitions into logging policies as in
iterative offline RL (IORL)~\citep{zhang2023active}. Here, exploratory actions are injected to
discover unexplored regions in state-action space while training an offline RL agent on the
generated trajectories. This approach is discussed in Section~\ref{sec:lift}. Our approach is
similar in spirit, but instead of exploring we want to exploit shortcuts in the trajectories
to make hand-offs seamless and effective.

\section{Active Positioning}\label{sec:active_positioning}

In this section, we introduce the specific framework for active positioning problems building
upon the framework for active alignments introduced in~\cite{burkhardt2025relign}. There,
active positioning problems are modelled as an \emph{episodic} and \emph{contextual} POMDP~\cite{context-mdp}. 
Specifically, the state is decomposed in the current position $s\in\cP$ with $\cP$ a bounded subset
of $\RR^m$ and a static context parameter $W\in\cW$, that is $\cS=\cP\times\cW$. The actions can be
selected from a subset $\cA$ of $\RR^d$. 
Applying an action
$a\in\cA$  at state $(s, W)$ gives the new state $(s', W)$
with $s'=f(s, a, W)$, where $f:\cP\times\cA\times\cW\to\RR^d$ is a parametrized \emph{distortion function}.
Throughout we assume that $f(s,0,W)=s$. 
Our running example is $f(s, a, W)=s+W\cdot a$ with $W\in\RR^{d\times d}$, but we also consider
non-linear and non-continuous distortions. Importantly, as $W$ stays constant
throughout each episode, so is the extent of the distortion. One can think of $W$ as variances
introduced by the gripping of an object, variances within an object, or conditions of the goal to be
reached. In robotic arm positioning, for instance, $W$ can model the imprecision of the
end-effector due to load or joint friction as well as where the target $s_W$ is located.
\begin{figure}[ht]
    \centering
\scalebox{0.67}{
\input{./images/relign.tex}
}
\caption{Active positioning of a lens systems~\cite{burkhardt2025relign} (left) and an
end-effector~\cite{fetch} (right).}\label{f:active_positioning}
\end{figure}

In each episode, the goal is to navigate from a random initial position $s_0$ and randomized context
$W$ to a terminal state $s_W\in\RR^d$. The reward observed when applying $a$ at
$(s, W)$ is $R(s, a, W) = -\|f(s, a, W)-s_W\|$, i.e.\
the negative remaining distance to the terminal state.
An episode ends once the state is sufficiently close to $s_W$ or an upper limit of 
steps is reached. Formally, the terminal states are all within the set
 $\{(s, W)\in S:
\|s-s_W\|\le\theta\}$.  Typically, $W$ cannot be observed directly, often even $s$ cannot.
Instead, an often high-dimensional and noised output
$O(s, W)\in\mathcal{O}$ is observed, which is controlled by a conditional probability
density function depending on $s$ and $W$. In robotic arm positioning, the observation can come from
a camera mounted on the end-effector or from sensors measuring forces and torques.
We call $(\cP, \cW, \cO, f, \gamma)$ an \emph{active positioning
problem}. This framework covers various industrial use cases, from robot arm positioning, to active
alignments of optical devices (Figure~\ref{f:active_positioning}).

Although active positioning problems can also be considered as black-box optimization
problems~\cite{burkhardt2025relign}, they are inherently RL problems where symmetries and
ambiguities in the need to be actively explored. 
For instance, the observation space is typically highly symmetric and context-dependent: states $s$
and $s'$ that are far apart can yield very similar observations $O(s, W) \approx O(s', W)$, while the
same state can produce very different observations $O(s, W)$ and $O(s, W')$ under different contexts.
Additionally, safety constraints and physical limitations often restrict the action space $\cA$ so
that the optimal state cannot be reached in one step and a sequence of informed actions is required.
In the RL formulation, a \emph{policy} ~$\pi:\cA\times\cO\to\RR$ is a mapping of
observations and actions to likelihood and the dynamics of the combined system works as follows:
At a given state $(s, W)$, $O(s, W)$
is observed, an action $a$ is sampled from 
$\pi(\cdot, O(s, W))$, and the system moves to the new state $s'=f(s, a, W)$.
Note that $a$ and $s$ do not need to have same dimensionality.
Starting from $(s_0, W)\in\cS$, the dynamics 
yields a trajectory $(s_0, W),\ldots,(s_k, W)$.
The goal is to find~$\pi$ maximizing
$J(\pi):=\mathbb{E}_{s_0, W}\left[\sum_{i=0}^{k} -\gamma^i \|s_i-s_W\|\right]$, where
$\gamma\in(0,1)$ is a \emph{discount factor}. 
Clearly,
$J(\pi)=\mathbb{E}_{s_0, W} [V^\pi(s_0, W)]=\mathbb{E}_{s_0} [V^\pi(s_0)]$
with $V^\pi$ the state-value function and
$V^\pi(s) := \mathbb{E}_{W\sim\cW} [V^\pi(s, W)]$. 

\section{Theory of Shortcut Augmentations}\label{sec:theory}
In active positioning, good trajectories reach the optimal position in as few steps as possible.
Although most logging policies used in applications visit states that are close to the optimal
state, they often produce long and redundant trajectories. 
Our core idea is to train agents on synthetic trajectories distilled from these imperfect data,
which are more direct and goal-reaching.
Intuitively, we want the agent to \emph{skip} parts of the trajectory that do not add much value — for
example, going straight instead of replicating zig-zag movements or detours present in the logged
data (Figure~\ref{fig:overview}). However, improving logged trajectories is not straightforward.
For instance, assume a collected trajectory of $\pi_\beta$ contains a sub-trajectory $(s_i, W),
(s_{i+1}, W), \ldots, (s_j, W)$ with actions $a_i, \ldots, a_{j-1}$, representing a long detour,
like a zig-zag movement, from $s_i$ to $s_j$. Clearly, going directly from $s_i$ to $s_j$ would yield a
trajectory with higher return. However, naively applying the accumulated action $a = a_i + a_{i+1} +
\ldots + a_{j-1}$ at $s_i$ will not necessarily land exactly at $s_j$ due to distortions in the dynamics
induced by~$f$. Even small misplacements, that is ending up close to $s_j$ but not exactly at $s_j$, can cause
significant value degradation if the value function $V^{\pi_\beta}$ is not stable in the vicinity of $s_j$. Worse,
applying $a$ at $s_i$ may even move us in the opposite direction, away from $s_j$, with no guarantee that
the new state has a higher value than $s_i$. Here, the length of the action $a$, the value gap between
$s_i$ and $s_j$, the stability of $V^{\pi_\beta}$ around $s_j$, and the distortion in the dynamics at
$s_i$ all play a role. 
In this section, we identify conditions under which the accumulated action $a$ is guaranteed to be beneficial.
All proofs are in Section~\ref{s:proofs}. 
We call a policy $\pi$ \emph{distance-improving} if for all $W\in\cW$ we have for two subsequent
states $(s_i, W)$ and $(s_j, W)$ with $i<j$ visited by the policy that $\|s_j-s_W\|<\|s_i-s_W\|$.
In other words, the reward along a trajectory of $\pi$ is strictly increasing. We restrict to deterministic logging policies $\pi$, so that the contextual but deterministic dynamics given
by~$f$ implies that $V^\pi(s, W)$ is exactly the return of $\pi$ starting from $(s, W)$.

\begin{prop}\label{prop:augmentations}
    Let $\pi$ be distance-improving and $(s, W), (s', W)\in\cS$ on a
    trajectory where $(s, W)$ is prior to
    $(s',W)$, then
    $\gamma V^\pi(s', W)- V^\pi(s, W)\ge \|s'-s_W\|$.
\end{prop}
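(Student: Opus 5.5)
The plan is a direct termwise comparison of the two discounted returns. Since $\pi$ is deterministic and the dynamics $f$ are deterministic for fixed $W$, both $V^\pi(s,W)$ and $V^\pi(s',W)$ are exact sums of negative distances along one and the same trajectory. The inequality should then follow from shifting indices and using that distances decrease along the trajectory.

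\emph{Setup.} Write the trajectory of $\pi$ from $(s,W)$ as $t_0=s,t_1,\dots,t_k$, and put $d_i:=\|t_i-s_W\|\ge 0$. Let $n\ge 1$ be the index with $t_n=s'$. I follow the convention of $J(\pi)$, so that
\[
V^\pi(s,W)=-\sum_{i=0}^{k}\gamma^i d_i .
\]
The trajectory from $(s',W)$ is the suffix $t_n,\dots,t_k$, so
\[
\gamma V^\pi(s',W)=-\sum_{i=0}^{k-n}\gamma^{i+1}d_{n+i}=-\sum_{i=1}^{k-n+1}\gamma^{i}d_{n+i-1}.
\]
I set $d_j:=0$ for indices beyond the trajectory end, so both sums can be taken over the same range $i\ge 1$; the extra terms are zero.

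\emph{Key step.} Subtracting gives
\[
\gamma V^\pi(s',W)-V^\pi(s,W)=d_0+\sum_{i\ge 1}\gamma^i\bigl(d_i-d_{n+i-1}\bigr).
\]
Because $n\ge1$, we have $n+i-1\ge i$. Distance-improvement then gives $d_i\ge d_{n+i-1}$ whenever both indices lie on the trajectory. If $n+i-1$ exceeds $k$ but $i\le k$, the padded value $0$ makes the inequality trivial. If $i>k$, both terms are $0$. Hence every bracket is nonnegative, and
\[
\gamma V^\pi(s',W)-V^\pi(s,W)\ge d_0\ge d_n=\|s'-s_W\|,
\]
where the last inequality uses distance-improvement once more.

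\emph{Main obstacle.} The step needing care is the bookkeeping of the reward convention and the episode lengths, not any analytic estimate. If rewards are instead indexed by the next state, i.e.\ $V^\pi(s,W)=-\sum_i\gamma^i d_{i+1}$ as suggested by $R(s,a,W)=-\|f(s,a,W)-s_W\|$, the same index shift yields
\[
\gamma V^\pi(s',W)-V^\pi(s,W)=d_1+\sum_{i\ge1}\gamma^i\bigl(d_{i+1}-d_{n+i}\bigr)\ge d_1\ge d_n .
\]
So the claim holds under either convention. The unequal lengths of the two truncated episodes are handled by the zero padding, which is harmless since $d_j\ge0$.
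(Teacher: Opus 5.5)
Your proof is correct, but it takes a different route from the paper's. The paper inducts on the number of steps from $s$ to $s'$. The base case is the one-step Bellman identity $V^\pi(s,W)=-\|s_1-s_W\|+\gamma V^\pi(s_1,W)$. The inductive step splits $\gamma V^\pi(s',W)-V^\pi(s,W)$ into $\gamma V^\pi(s',W)-V^\pi(s_1,W)$ plus $V^\pi(s_1,W)-V^\pi(s,W)=(1-\gamma)V^\pi(s_1,W)+\|s_1-s_W\|$. The second piece is nonnegative by an auxiliary lemma, $(1-\gamma)V^\pi(s,W)\ge-\|s-s_W\|$, obtained by bounding every future distance by the current one. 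In effect the paper shows that $V^\pi$ is monotone along trajectories and then telescopes.

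You instead write both returns out, shift indices, and compare term by term. This needs neither induction nor the lemma. It also gives a sharper intermediate bound, $\|s_1-s_W\|$ (resp.\ $\|s-s_W\|$ under the $J(\pi)$ convention), before you weaken it to $\|s'-s_W\|$. Your remark that the claim holds under both reward-indexing conventions is worthwhile, since the paper's definition of $J(\pi)$ and its proofs index rewards inconsistently.

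The cost of your route is the zero-padding bookkeeping for unequal episode lengths. You also need the explicit assumption that the trajectory from $(s',W)$ is the suffix of the one from $(s,W)$. The paper's Bellman recursion uses this same assumption implicitly (deterministic, stateless $\pi$ with state-based termination), but it absorbs termination into the base case.
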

Focusing on distance-improving logging policies allows us
to formalize what it means for an action to be beneficial.
\begin{defn}\label{defn:shortcut}
    Let $\pi$ be a policy, $(s, W)\in\cS$ a state, and
    $a\in\cA$ an action with $s'=f(s, a, W)$. If 
    $\gamma V^{\pi}(s', W) -V^\pi(s, W)\ge \|s'-s_W\|$, then
    $a$ is a $\pi$-\emph{shortcut} at $(s, W)$.
\end{defn}
Note that shortcuts depend on the latent information $W$, not~$s$ alone. The
remainder of this section studies how to find shortcuts in offline trajectories. To do so, consider
a short trajectory $(s_0, W), (s_1, W), (s_2, W)$ from a distance-improving policy $\pi$ with
actions $a_0$ and $a_1$ (Figure~\ref{f:movement_uncertainty}). Clearly, any action $a$ with
$s_2=f(s_0, a, W)$ is a $\pi$-shortcut and thus beneficial. However, because of non-linearities in~$f$, applying $a_0+a_1$ at $s_0$ is not guaranteed to reach $s_2$. 
Hence, we must ensure that $a_0+a_1$ leads near $s_2$ requiring to control the placement errors induced by $f$.
For linear dynamics $f(s,a,W)=s+W\cdot a$ with $W\in\RR^{m\times d}$, any accumulated action is a
shortcut, irrespective of $V^\pi$:
\begin{prop}\label{prop:shortcuts_for_linear_dynamics}
Let $f(s, a, W)=s+W\cdot a$, $(s_i, W)$, $(s_j,
W)$ with $i<j$ on a trajectory of a 
distance improving policy $\pi$ and $a_i,\ldots,a_{j-1}$ the actions $\pi$ applied to
get from $s_i$ to $s_j$. 
Then $\sum_{k=i}^{j-1}a_k$ is a $\pi$-shortcut for $s_i$.
\end{prop}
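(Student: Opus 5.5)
The plan is to show that, under linear dynamics, the accumulated action lands \emph{exactly} on $s_j$. Once that is established, the claim reduces directly to Proposition~\ref{prop:augmentations}.

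\textbf{Telescoping the dynamics.} Along the trajectory we have $s_{k+1}=f(s_k,a_k,W)=s_k+W a_k$ for $k=i,\ldots,j-1$. The context $W$ is constant throughout an episode, so the same matrix appears in every step. Summing these identities telescopes:
\[
s_j - s_i=\sum_{k=i}^{j-1}(s_{k+1}-s_k)=\sum_{k=i}^{j-1}W a_k = W\sum_{k=i}^{j-1}a_k .
\]
Setting $a:=\sum_{k=i}^{j-1}a_k$, this gives
\[
f(s_i,a,W)=s_i+W a=s_j .
\]
The only structural ingredients are that the dynamics are state-independent and linear in $a$, together with the constancy of $W$.

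\textbf{Invoking Proposition~\ref{prop:augmentations}.} Set $s'=f(s_i,a,W)=s_j$. Both $(s_i,W)$ and $(s_j,W)$ lie on a trajectory of the distance-improving policy $\pi$, with $(s_i,W)$ prior to $(s_j,W)$ since $i<j$. Proposition~\ref{prop:augmentations} therefore yields
\[
\gamma V^\pi(s_j,W)-V^\pi(s_i,W)\ge\|s_j-s_W\| .
\]
Because $s'=s_j$, this is exactly the inequality required in Definition~\ref{defn:shortcut}, so $a$ is a $\pi$-shortcut at $(s_i,W)$.

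\textbf{Points to check.} There is no real obstacle here; the substance lies in Proposition~\ref{prop:augmentations}. The one subtlety is admissibility: Definition~\ref{defn:shortcut} requires $a\in\cA$, but the accumulated action may leave a constrained action set. I would either assume $\cA$ is closed under the relevant sums (for instance $\cA=\RR^d$, or a convex cone), or read the statement with that implicit hypothesis and note it explicitly. I would also note that $V^\pi(s_j,W)$ is well defined as the deterministic return from $s_j$. This uses only that $\pi$ is deterministic and applied from the state $(s_j,W)$, independent of how that state was reached, which is exactly why the shortcut is valid irrespective of the form of $V^\pi$.
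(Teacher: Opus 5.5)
Your proof is correct and is the same as the paper's: unroll the linear dynamics to get $f(s_i,\sum_{k=i}^{j-1}a_k,W)=s_j$, then apply Proposition~\ref{prop:augmentations} to the pair $(s_i,W)$, $(s_j,W)$. Your caveat that the accumulated action must lie in $\cA$ is fair; the paper leaves this implicit in the proposition and enforces it only through the check $\hat a_i\in\cA$ in Algorithm~\ref{alg:shortcut_computation}.
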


\begin{figure}[ht]
    \centering
\subfloat[Movement uncertainty]{\label{f:movement_uncertainty}
    \scalebox{0.8}{
    \begin{tikzpicture}

        \node[draw, inner sep=13pt, circle, fill=gray!30!white] at (4,1.7) (s3) {};
        \node[anchor=west] at (-0.1,2.2)  {$s'=f(s_0, a_0+a_1, W)$};

        \node[draw, inner sep=2pt, circle, fill=black, label=below:$s_0$] at (0,0) (s0) {};
        \node[draw, inner sep=2pt, circle, fill=black, label=below:$s_1$] at (3,0) (s1) {};
        \node[draw, inner sep=2pt, circle, fill=black, label=above:$s_2$] at (4,1.7) (s2) {};

        \draw[->,thick] (s0) to node[midway, fill=white] {$a_0$} (s1);
        \draw[->,thick] (s1) to node[midway, fill=white] {$a_1$} (s2);
        \draw[->,thick, bend left] (s0) to node[midway, fill=white] {$a_0+a_1$} (s3);

\end{tikzpicture}}
}
\hspace{0.5cm}
\subfloat[An $f$-contraction $\pi$]{
    \scalebox{0.8}{
    \begin{tikzpicture}

        \node[draw, inner sep=2pt, circle, fill=black, label=below:$s$] at (0,0) (s0) {};
        \node[draw, inner sep=2pt, circle, fill=black] at (0.5,1.4) (s2) {};
        \node[draw, inner sep=2pt, circle, fill=black, label=below:$s'$] at (3.1,0.7) (s1) {};
        \node[draw, inner sep=2pt, circle, fill=black] at (1.2,1.7) (s3) {};

        \draw[->,thick] (s0) to node[midway, fill=white, scale=0.8] {$\pi(O(s, W))$} (s2);
        \draw[->,thick] (s1) to node[midway, fill=white, scale=0.8] {$\pi(O(s', W))$} (s3);

        \draw[dotted] (s0) to (s1);
        \draw[dotted] (s2) to (s3);

    \end{tikzpicture}
}
}
\caption{Interactions of policy with movement dynamics.}
\end{figure}

Extending Proposition~\ref{prop:shortcuts_for_linear_dynamics} to non-linear dynamics~$f$ is not
trivial. Generally, we want to have that accumulating actions along a trajectory does not lead
to too much placement uncertainty, which is typically the case in real-world positioning problems.
We formalize this as follows:

\begin{defn}[Linear placement-errors]\label{def:linear_placement_errors}
    A distortion function $f$ has \emph{linear placement-errors} (LPE) if there is a constant $L_f$
    so that for
    any action-chain $a_0,\ldots,a_{k-1}$ executed from $(s_0, W)$ with
    $s_i=f(s_{i-1}, a_{i-1}, W)$,
    we have:
$\|f(s_0, \sum_{i=0}^{k-1}a_i, W) - s_k\|\le L_f\cdot \sum_{i=0}^{k-1}\|a_i\|$.
\end{defn}

Intuitively, the LPE property means that although a system distorts movements, the mismatch
introduced when regrouping actions cannot grow faster than linearly with the size of
the path taken. This actually includes a wide range of functions where the distortion depends
on the state
only:

\begin{prop}\label{prop:lpe_for_bounded_transform}
    Let $f(s, a, W)=s+g(s, W)\cdot a$ with $g:\cS\to\RR^{m\times d}$ a bounded
    matrix-function. Then $f$ has
    LPE with $L_f=2\cdot\sup_{\cS}\|g\|$.
\end{prop}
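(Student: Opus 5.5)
We prove the bound by direct computation, telescoping the executed chain and applying the triangle inequality. Write $G:=\sup_{\cS}\|g\|$, which is finite by assumption, and use the operator norm induced by the vector norms. Fix an action chain $a_0,\ldots,a_{k-1}$ executed from $(s_0,W)$ with $s_i=f(s_{i-1},a_{i-1},W)=s_{i-1}+g(s_{i-1},W)\,a_{i-1}$. Telescoping gives
\begin{equation*}
s_k = s_0+\sum_{i=0}^{k-1} g(s_i,W)\,a_i .
\end{equation*}
The regrouped action yields
\begin{equation*}
f\Bigl(s_0,\sum_{i=0}^{k-1}a_i,W\Bigr)=s_0+\sum_{i=0}^{k-1} g(s_0,W)\,a_i .
\end{equation*}

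Subtracting the two expressions, the difference is $\sum_{i=0}^{k-1}\bigl(g(s_0,W)-g(s_i,W)\bigr)a_i$. The triangle inequality and submultiplicativity of the operator norm then bound its norm by
\begin{equation*}
\sum_{i=0}^{k-1}\bigl\|g(s_0,W)-g(s_i,W)\bigr\|\,\|a_i\| .
\end{equation*}
Each factor satisfies $\|g(s_0,W)-g(s_i,W)\|\le \|g(s_0,W)\|+\|g(s_i,W)\|\le 2G$. Hence the placement error is at most $2G\sum_{i}\|a_i\|$, which is exactly the LPE property of Definition~\ref{def:linear_placement_errors} with $L_f=2\sup_{\cS}\|g\|$.

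There is no real obstacle here. The only points to get right are that every visited $(s_i,W)$ lies in $\cS$, so that the supremum applies, and that we never need continuity of $g$. Because the difference of matrices is crudely bounded by the sum of their norms, no regularity of $g$ is used; this is also where the factor $2$ comes from. The $i=0$ term vanishes, so the bound could be slightly sharpened, but that is not needed.
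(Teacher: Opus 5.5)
Your proposal is correct and follows the paper's proof essentially step for step. Like the paper, you unravel the chain to $s_k=s_0+\sum_{i=0}^{k-1} g(s_i,W)a_i$, subtract this from $f(s_0,\sum_i a_i,W)=s_0+g(s_0,W)\sum_i a_i$, and bound each factor $\|g(s_0,W)-g(s_i,W)\|$ by $2\sup_{\cS}\|g\|$ using the triangle inequality in the induced norm.
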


As we will see, when the distortion term also depends on the action, i.e. $g(s, a, W)$, things
become more involved for small actions $a$ even if $g$ is bounded and LPE does not follow without
additional assumptions (see Section~\ref{s:movement_distortion}).
In Proposition~\ref{prop:linear_placement_erros}, we introduce an even stronger property which
suffices to imply LPE for distortion functions of common active positioning problems, like
linear movement dynamics. More specifically, it follows directly that a linear movement-dynamics of
the form $f(s, a, W)=s+Wa$ has LPE with $L_f=0$. 

Having gathered a notion of placement errors, we now need to control the stability of the value
function. Specifically, even when we can precisely reach $s_j$ from $s_i$, the value function
$V^\pi$ can change drastically in the vicinity of $s_j$, making it hard to guarantee that applying
the accumulated action $a$ at $s_i$ is indeed beneficial. To control this, we have to impose good
properties on $V^\pi$. We
call a value function $V:\cS\to\RR$ \emph{$L_V$-Lipschitz continuous} if for all $(s, W), (s', W)\in\cS$
we have $|V(s, W)-V(s', W)|\le L_V\cdot\|s-s'\|$. This is the final ingredient to prove our main statement:

\begin{thm}\label{thm:shortcuts}
Let $\pi$ be distance improving, $V^\pi$ is $L_V$-Lipschitz continuous and, let $f$ has
$L_f$-placement errors. Let $(s_i, W)$ and $(s_j,
W)$ on a trajectory of $\pi$ and let $a=\sum_{k=i}^{j-1}a_k$ be the sum of the chain of actions $\pi$ undertook to
get from $s_i$ to $s_j$. Then
$a$ is a $\pi$-shortcut for $s_i$ if
\begin{equation*}
\begin{split}
\gamma\cdot V^\pi(s_j, W) - V^\pi(s_i, W) - \|s_j-s_W\| \\
\ge (\gamma\cdot L_V+1)\cdot L_f\cdot \sum_{k=i}^{j-1}\|a_k\|
\end{split}
\end{equation*}
\end{thm}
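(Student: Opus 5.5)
Let $s'=f(s_i,a,W)$ be the landing state of the accumulated action. By Definition~\ref{defn:shortcut}, we must show $\gamma V^\pi(s',W)-V^\pi(s_i,W)\ge\|s'-s_W\|$. The plan is to compare every quantity at $s'$ with the corresponding quantity at the true endpoint $s_j$. We then absorb the discrepancy using the hypothesis of the theorem. The only link between $s'$ and $s_j$ is the LPE property of Definition~\ref{def:linear_placement_errors}.

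\textbf{Bounding the placement error.} The states $s_i,\ldots,s_j$ are produced by the action chain $a_i,\ldots,a_{j-1}$ executed from $(s_i,W)$ under $f$. Applying Definition~\ref{def:linear_placement_errors} to this chain, with starting state $s_i$ and $k=j-i$, gives
\[
\delta:=\|s'-s_j\|\le L_f\sum_{k=i}^{j-1}\|a_k\|.
\]

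\textbf{Transferring the two terms.} Lipschitz continuity of $V^\pi$ in the position, at fixed $W$, gives $V^\pi(s',W)\ge V^\pi(s_j,W)-L_V\delta$. The triangle inequality gives $\|s'-s_W\|\le\|s_j-s_W\|+\delta$. Combining these,
\[
\gamma V^\pi(s',W)-V^\pi(s_i,W)-\|s'-s_W\|\ge \gamma V^\pi(s_j,W)-V^\pi(s_i,W)-\|s_j-s_W\|-(\gamma L_V+1)\delta .
\]
By the hypothesis and the bound on $\delta$, the right-hand side is at least
\[
(\gamma L_V+1)\Bigl(L_f\sum_{k=i}^{j-1}\|a_k\|-\delta\Bigr)\ge 0 .
\]
This is exactly the shortcut condition, so $a$ is a $\pi$-shortcut at $s_i$.

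\textbf{Potential subtleties.} I expect no real obstacle; the argument is short. The few points to check are these. First, $s'$ must lie in $\cP$ so that $V^\pi(s',W)$ is defined and the Lipschitz bound applies; I would take this as implicit in the setting. Second, $a$ must lie in $\cA$, which I would also assume implicitly. Third, the Lipschitz bound is used at fixed $W$, which matches the paper's definition. The distance-improving assumption is not needed in this chain of inequalities. It only explains why the left-hand side of the hypothesis is nonnegative at $L_f=0$, in view of Proposition~\ref{prop:augmentations}, and so why the hypothesis is satisfiable at all.
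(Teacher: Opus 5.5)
Your proof is correct and follows essentially the same route as the paper's: you bound $\|f(s_i,a,W)-s_j\|$ via LPE, transfer the value term with the Lipschitz bound and the distance term with the triangle inequality, and then absorb the $(\gamma L_V+1)L_f\sum_{k=i}^{j-1}\|a_k\|$ slack using the hypothesis. Your remark that the distance-improving assumption is never used in this chain of inequalities is also accurate for the paper's argument.
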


Note that if $j=i+1$, the left hand side in Theorem~\ref{thm:shortcuts} is zero. However, in that
case, $a_i$ is, by definition, the only shortcut from $(s_i, W)$ to $(s_j, W)$ as its the direct
connection from $s_i$ to $s_j$.
Proposition~\ref{prop:shortcuts_for_linear_dynamics} for $f(s, a,
W)=s+W\cdot a$ arises as a special case of
Theorem~\ref{thm:shortcuts} because $L_f=0$ implies that the right-hand side is $0$ and the
left-hand side is always non-negative due to Proposition~\ref{prop:augmentations}. However, 
Theorem~\ref{thm:shortcuts} requires $V^\pi$ to be Lipschitz continuous, where no assumptions on
$\pi$ are necessary in Proposition~\ref{prop:shortcuts_for_linear_dynamics}.
The next condition helps to ensure that $V^\pi$ is indeed
Lipschitz continuous (see Proposition~\ref{prop:lipschitz_value_function}), which requires a
beneficial interplay with~$f$:

\begin{defn}[$f$-contraction]
    A policy $\pi$ is an $f$-\emph{contraction} if 
    for all $(s, W),(s', W)$
    with respective observations with $o=O(s, W)$ and $o'=O(s', W)$, we have
$$\|f(s, \pi(o), W)-f(s',\pi(o'), W)\| \le \|s-s'\|.$$
\end{defn}

\begin{cor}\label{cor:shortcuts}
Let $\pi$ be distance improving $f$-contraction and let $f$ have LPE with
constant $L_f$. Let $(s_i, W)$ and $(s_j,
W)$ on a trajectory of $\pi$ and let $a=\sum_{k=i}^{j-1}a_k$ be the sum of the chain of actions $\pi$ undertook to
get from $s_i$ to $s_j$. Then
$a$ is a shortcut for $s_i$ if
\begin{equation*}
    \gamma\cdot V^\pi(s_j, W) - V^\pi(s_i, W) -\|s_j-s_W\|\ge \frac{L_f}{1-\gamma}\cdot
\sum_{k=i}^{j-1}\|a_k\|.
\end{equation*}
\end{cor}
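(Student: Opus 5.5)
The plan is to reduce the corollary to Theorem~\ref{thm:shortcuts}. The only hypothesis of the theorem missing from the corollary is Lipschitz continuity of $V^\pi$, and I expect the $f$-contraction property to supply it with the constant $L_V = \frac{1}{1-\gamma}$. Plugging this constant into the right-hand side of the theorem gives
\[
(\gamma L_V + 1)\,L_f = \Bigl(\frac{\gamma}{1-\gamma}+1\Bigr)L_f = \frac{L_f}{1-\gamma},
\]
which is exactly the bound in the corollary. So once Lipschitz continuity is in place, the statement follows verbatim from Theorem~\ref{thm:shortcuts}.

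\textbf{Step 1: the two trajectories stay close.} Fix $W$ and two start positions $s, s'$. Let $s_0=s, s_1,\ldots$ and $s_0'=s', s_1',\ldots$ be the deterministic trajectories of $\pi$, so that $s_{k+1}=f(s_k,\pi(O(s_k,W)),W)$, and likewise for $s'_k$. Applying the $f$-contraction inequality once per step and inducting on $k$ gives $\|s_k-s'_k\|\le\|s-s'\|$ for all $k$.

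\textbf{Step 2: bound the value gap.} Because $\pi$ and the dynamics are deterministic, $V^\pi(s,W)=-\sum_k\gamma^k\|s_k-s_W\|$. The reverse triangle inequality then gives
\[
|V^\pi(s,W)-V^\pi(s',W)| \le \sum_k \gamma^k\,\bigl|\|s_k-s_W\|-\|s'_k-s_W\|\bigr| \le \sum_k\gamma^k\|s_k-s'_k\| \le \frac{\|s-s'\|}{1-\gamma}.
\]
This is the Lipschitz estimate with $L_V=\frac{1}{1-\gamma}$. It should match the content of Proposition~\ref{prop:lipschitz_value_function}, which the paper cites for this purpose and which I would invoke directly if its constant agrees.

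\textbf{Main obstacle: episode termination.} The two trajectories may enter the terminal set $\{\|s-s_W\|\le\theta\}$ or hit the step limit at different times, so the two sums in Step 2 need not run over the same index range. My first attempt is to pad the shorter trajectory by letting it remain in place with action $0$, using $f(s,0,W)=s$. This makes the index sets match, and the contraction bound still holds for the padded steps provided the padded terms are counted in the return, i.e. the value is defined with an infinite or common horizon.

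If the paper's convention instead assigns zero reward after termination, the padding no longer reproduces the true return. In that case I would bound the extra terms directly: for a state inside the terminal set the missing reward term is at most $\theta$ in magnitude. Alternatively, I would adopt the common-horizon convention already implicit in Proposition~\ref{prop:lipschitz_value_function}.

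Either way, once $L_V$ is fixed, the final step is just the substitution into Theorem~\ref{thm:shortcuts} displayed above.
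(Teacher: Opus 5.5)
Your argument is the paper's proof: the paper obtains $L_V=\frac{1}{1-\gamma}$ from the $f$-contraction property via Proposition~\ref{prop:lipschitz_value_function} (your unrolled sum is that proposition's induction written out) and substitutes it into Theorem~\ref{thm:shortcuts}, using $(\gamma L_V+1)L_f=\frac{L_f}{1-\gamma}$. On your termination remark: padding a finished trajectory with the zero action does not preserve $\|s_k-s'_k\|\le\|s-s'\|$, because the contraction inequality only compares two $\pi$-steps, and a $\theta$-bound on the missing terms gives an additive error rather than a Lipschitz constant, so only a convention in which both trajectories keep applying $\pi$ over a common horizon makes this step rigorous---the paper's proposition silently skips the same mixed-termination case, and Appendix~\ref{app:contractions} itself concedes that the threshold $\theta$ makes $V^\pi$ discontinuous.
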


Being an $f$-contraction is a stronger requirement than mere distance improvement. We refer to
Section~\ref{app:contractions} for a discussion and examples of $f$-contractions and Lipschitz value
functions in real-world policies. In practice, many active positioning policies do not satisfy the
contraction property globally, yet this is not required for identifying useful shortcuts as shown in
our experiments.

\section{Trajectory Augmentation via \ourmethod{}}\label{sec:lift}

The idea of iterative reinforcement learning is to enrich logging policies with exploratory steps
while collecting data~\citep{zhang2023active}, mostly in order to improve coverage of the
state-action space. Specifically, an \emph{uncertainty
model} $E_\theta(s, a)$ is trained with $E_\theta(s, \cdot)$ a probability distribution on $\cA$ for each $s\in S$. Given a
dataset $D$, $E_\theta$ is trained by minimizing
$\mathbb{E}_{(o,a)\sim D}\big[ -\log(E_\theta(s,a)) + \mathcal{R}(\theta)\big]$
with $\mathcal{R}(\theta)$ a regularization term. Intuitively, $E_\theta(s, a)$ can be seen as the
probability that action $a$ has been seen for state $s$ in $D$. Actions with small probability
$E_\theta(s, a)$ at state $s$ are considered as exploratory actions and should be selected according
to some fixed probability $p$ enriching a given logging policy $\pi_\beta$ during rollout.
These \emph{exploratory actions} are rather rare and thus help keeping the system safe and naturally
close to the logging policy $\pi_\beta$ that generated the data. 
Although this approach seems appealing, a central part has been
underexplored in current literature, namely that static logging policies may not deal well with
intermediate exploratory steps. 
In practice, arbitrary exploratory steps may lead to states from which the logging policy cannot
recover well, resulting in lower overall returns.
We build upon this idea, but instead of selecting actions that have not been seen in the data, we
advocate to train a $Q$-function $Q_\theta$ on some initial dataset $D$ and select actions having
high $Q$-values. Formally, we set $a_\theta(s, a)=\max_{a'\in\cA}Q_\theta(s, a')$ where $Q_\theta$
can be trained with any offline RL method, like CQL or IQL. We call $a_\theta$ an \emph{augmentor}.
By that, we aim to enrich the dataset with actions that are likely to be beneficial for $\pi_\beta$
in the sense of higher returns. 
While this idea is quite universal and it remains unclear how actions
that ease hand-offs look like in general. 
Moreover, in order that the augmentor provides useful steps, it has to be trained well already with limited
data. The idea of LIFT is to show the augmentor data of \emph{good behavior} by applying augmentation to the
logged data that emphasizes such behavior. Clearly, when $a_\theta$ to suggest at $o=O(s, W)$
$\pi_\beta$-shortcuts (Definition~\ref{defn:shortcut}), a logging policy with higher return can be obtained
by combining them (see Proposition~\ref{prop:augmented_is_better} for details):
\begin{equation*}
    \pi_{\textnormal{aug}}(o) :=
\begin{cases}
    a_\theta(o) \quad\textnormal{if } a_\theta(o)\textnormal{ is a $\pi_\beta$-shortcut at } (s,
    W)\\
\pi_\beta(o) \quad\textnormal{otherwise}
\end{cases}\hspace{-0.2cm}.
\end{equation*}
This can be seen as a specialization of the policy improvement
theorem~\citep[Section~4.2]{sutton2018reinforcement} to active positioning.
For the remainder, we discuss how to train $a_\theta$ in
order that it suggests $\pi_\beta$-shortcuts for active positioning problems. However, we want to
emphasize that LIFT in general is not tied to this form of backbone-augmentations.

Theorem~\ref{thm:shortcuts} gives a condition when and how to augment a
trajectory 
$(o_0, a_0, r_0),\ldots,(o_n, a_n, r_n)$ with latent states $s_i=f(s_{i-1}, a_{i-1}, W)$,
observations $o_i=\mathcal{O}(s_i, W)$, rewards
$r_i=-\|s_{i+1}-s_W\|$, and actions $a_i=\pi_\beta(o_i)$
from a logging policy $\pi_\beta$. To convey them into a practical algorithm, let $C\in\RR_{\ge 0}$ be a constant and let
$G_i=V^{\pi_\beta}(s_i, W)=\sum_{k=i}^n\gamma^{k-i}r_{k}$ be the returns of $\pi_\beta$.
Now, take any pair $(i,j)$ with $i<j$, let $\hat a=\sum_{k=i}^{j-1}a_i$ be a shortcut candidate and check if
$\gamma G_j - G_i +r_{j-1}\ge C\cdot\sum_{k=i}^{j-1}\|a_k\|$
with some constant $C$ holds true.
Clearly, without prior information on $f$ and $\pi_\beta$, the exact value of $C$ remains unclear, and thus
it has to be considered a regularization hyperparameter of our method. If $C=0$, all pairs are considered shortcuts, if $C$ is large, only very few pairs where high reward
is gained in a few short steps are considered shortcuts. 
If the inequality is valid for $(i,j)$, we can assume that $\hat a$ is a shortcut and ideally, we
would add the tuple
$(o_i, \hat a, -\|s_j'-s_W\|, o_j')$ with $s_j'=f(s_i, \hat a, W)$ and $o_j'=O(s_j', W)$ to
the dataset.
However, due to the movement uncertainty, there is a gap between the position $s_j'$ the shortcut
leads to and the observed state $s_j$. Particularly, the image observation
$O(s_j', W)$ and the reward $-\|s_j'-s_W\|$ differ from the actually observed ones, namely $o_j$
and $r_{j-1}$. We argue, however, that in many practical applications, this gap is small, for instance if $L_f=0$ as in linear movement dynamics $f(s, a, W)=s+W\cdot
a$ (see Proposition~\ref{prop:shortcuts_for_linear_dynamics}). Thus, we add 
$(o_i, a, r_{j-1}, o_j)$ to the training dataset. Algorithm~\ref{alg:shortcut_computation}
summarizes our shortcut sampling procedure, and we want to
emphasize that it can be added to any offline RL method that samples from an offline dataset, like
to minimize the Bellman error or related temporal difference errors as in CQL. Note that for a given
input tuple, the runtime of Algorithm~\ref{alg:shortcut_computation} is linear in the trajectory
length.
Observe that the synthetic shortcuts are
only used to obtain the augmentor $a_\theta$, which in turn is only used to fine-tune the logging policy, and the
collected dataset consists of real data only. The precise procedure is described in
Algorithm~\ref{alg:data_augmentation}.
For that, they must have good hand-over properties and thus we
augment the dataset $D$ with shortcuts computed via Algorithm~\ref{alg:shortcut_computation} when
training $Q_\theta$. 
\begin{algorithm}[H]
  \caption{Shortcut sampling}
  \begin{algorithmic}[1]
    \REQUIRE $C\ge 0$, $i\in[n]$, $\{(o_0, a_0, r_0),\ldots,(o_n, a_n, r_n)\}$
    \ENSURE Tuple $(o_i, \hat a, r_{j-1}, o_j)$
    \STATE Compute returns $G_0\ldots, G_n$ for trajectory
    \STATE $S= ( )$
    \FOR{$j=i+1\cdots n$}
      \STATE $\hat a_i \gets \sum_{k=i}^{j-1}a_k$
      \IF{$\gamma G_j- G_i+r_{j-1}\ge C\cdot \sum_{k=i}^{j-1}\|a_k\|$ and $\hat a_i\in\cA$}
        \STATE Add $(o_i, \hat a_i, r_{j-1}, o_j)$ to $S$
      \ENDIF
    \ENDFOR
    \STATE Let $m=|S|$ and let $\hat r=(\hat r_1,\ldots, \hat r_m)$ be the rewards of the tuples in $S$
    \STATE Let $\rho\sim \hat r - \min_{i}\hat r_i$ a mass function
    \STATE Sample $(o_i, \hat a_i, r_{j-1}, o_j)$ from $S$ w.r.t.~$\rho$
    \STATE \textbf{return} $(o_i, \hat a_i, r_{j-1}, o_j)$
  \end{algorithmic}
  \label{alg:shortcut_computation}
\end{algorithm}

    \begin{algorithm}[H]
      \caption{\ourmethod{}}
      \begin{algorithmic}[1]
          \REQUIRE $\pi_\beta, n\in\NN, a_\theta, p\in[0,1]$
        \ENSURE Dataset $D$ with $n$ trajectories
        \STATE Initialize $D\leftarrow\emptyset$
        \REPEAT
          \STATE Sample $o_0$ from environment
          \STATE Set $\mathrm{done}=\mathrm{false}, \tau = ( ), i=0$
          \WHILE{$\mathrm{done}$ is false}
            \STATE $a_i=\pi_\beta(o_i)$
            \IF{$\mathrm{rand}() \le p$}
              \STATE $a_i= a_\theta(o_i, a_i)$
            \ENDIF
            \STATE $o_{i+1}, r_i, \mathrm{done} = \texttt{env.step}(a_i)$
            \STATE Reset $\pi_\beta$ at $o_{i+1}$ (if $a_i$ was augmented by $a_\theta$)
            \STATE Add $(o_i, a_i, r_i)$ to $\tau$, $i=i+1$
          \ENDWHILE
          \STATE Add trajectory $\tau$ to $D$
          \IF{train augmentor}
            \STATE Train $a_\theta$ on $D$ with help of Algorithm~\ref{alg:shortcut_computation}
          \ENDIF
        \UNTIL{$|D| = n$}
        \STATE \textbf{return} $D$
      \end{algorithmic}
      \label{alg:data_augmentation}
    \end{algorithm}

\vspace{-0.5cm}
\section{Experiments}\label{sec:experiments}

Our experiments address two main questions: Can shortcut augmentations improve pure offline
RL and can they be leveraged during data collection by training an augmentor in
comparison to warm-start RL? We test different distortions $f$, observation
types $\mathcal{O}$, and levels of logging expertness. 

\subsection{Environments}

In order to analyze different movement distortions and observation types in isolation, we conducted
our experiments in semi-realistic active positioning environments designed
to keep real world characteristics and entail small sim-to-real gaps. Throughout, we use
$-\|s-s_W\|$ as reward signal, which is easy to compute in simulations, as one typically has access
to latent information $(s, W)$. When data is coming from a real system,
In real systems, this signal can easily be added in hindsight to finished episode once $s_W$ is
uncovered by the logging policy.

\subsubsection{Movement distortions}\label{s:movement_distortion}

We consider different movement distortions, some of them have linear
forms, like $f_{\text{blend}}$ and $f_{\text{rot}}$ both with $L_f=0$. We also use
non-linear distortions, like $f_{\text{scale}}$ and $f_{\text{sin}}$ which have LPE with $L_f>0$ and
one non-continuous distortion $f_{\text{regrot}}$ also having
LPE which is not contracting. Moreover, we test a dynamics $f_{\text{sqrt}}$ that does not satisfy the LPE property. We refer to
Section~\ref{app:properties_of_distortions} for their precise mathematical definitions and
proofs of their properties. Figure~\ref{f:movement_distortions} illustrates an
overview of the different distortions in two dimensions. 

\begin{figure}[ht]
    \centering
    \scalebox{1.1}{
    \begin{tikzpicture}

        \node[] (void) at (0,0) {\includegraphics[width=1.7cm]{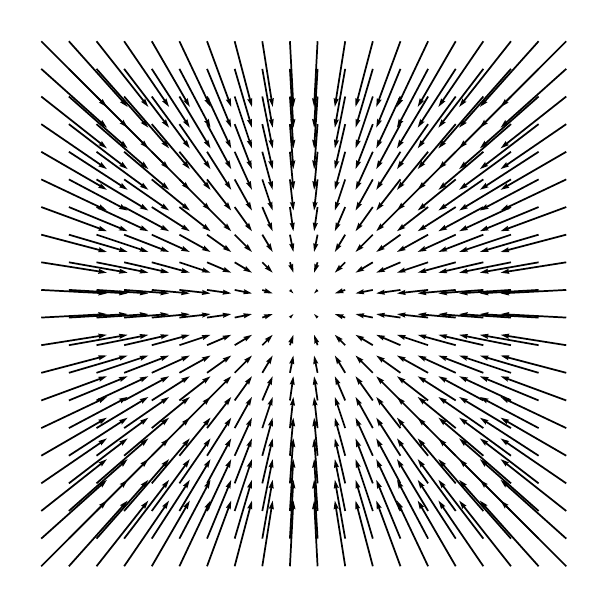}};
        \node[] (blend) at (1.9,0) {\includegraphics[width=1.7cm]{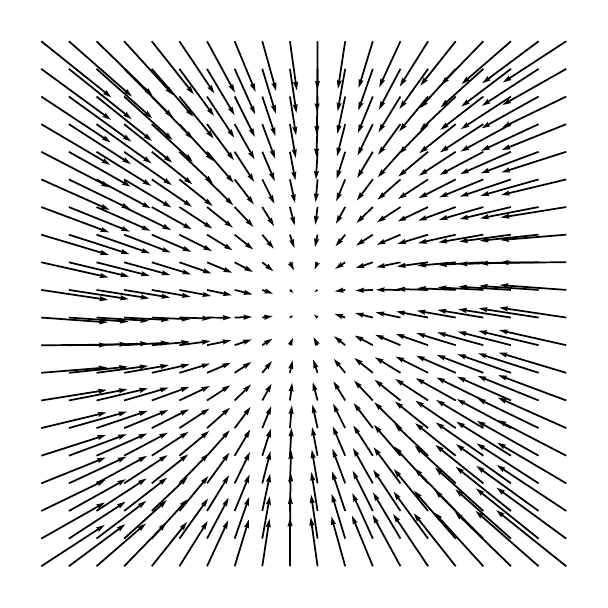}};
        \node[] (rot) at (3.8,0) {\includegraphics[width=1.7cm]{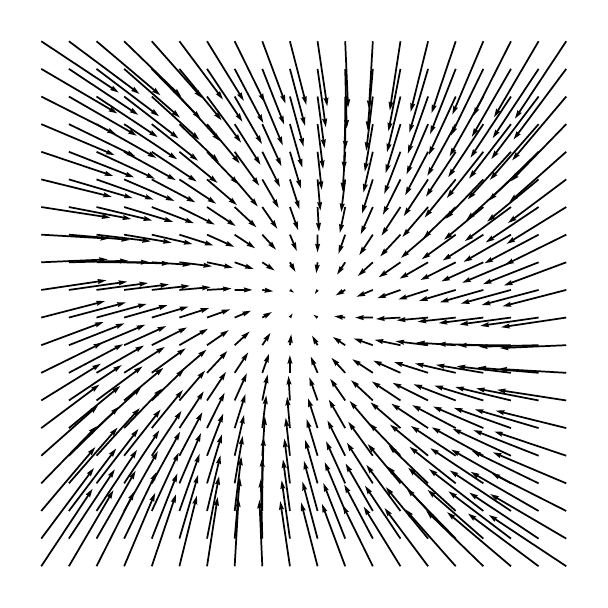}};
        \node[] (scale) at (5.7,0) {\includegraphics[width=1.7cm]{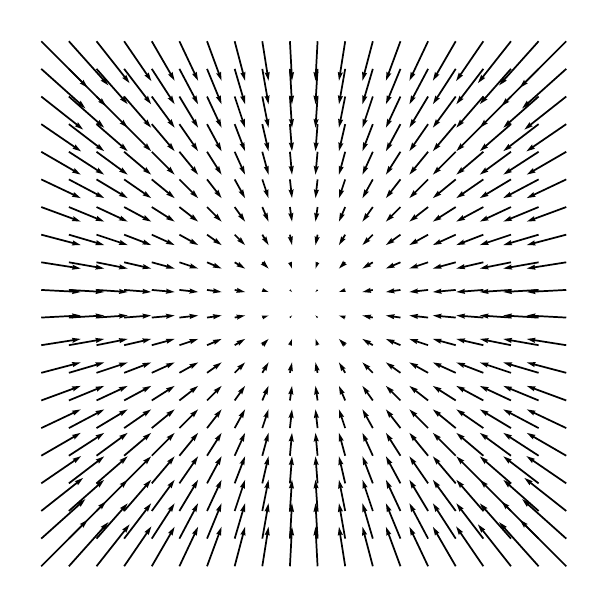}};

        \node[] (regrot) at (1.9,-2.5) {\includegraphics[width=1.7cm]{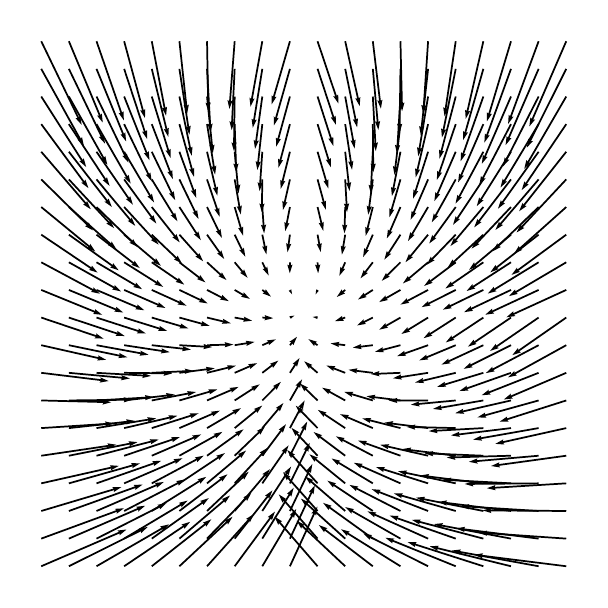}};
        \node[] (sin) at (3.8,-2.5) {\includegraphics[width=1.7cm]{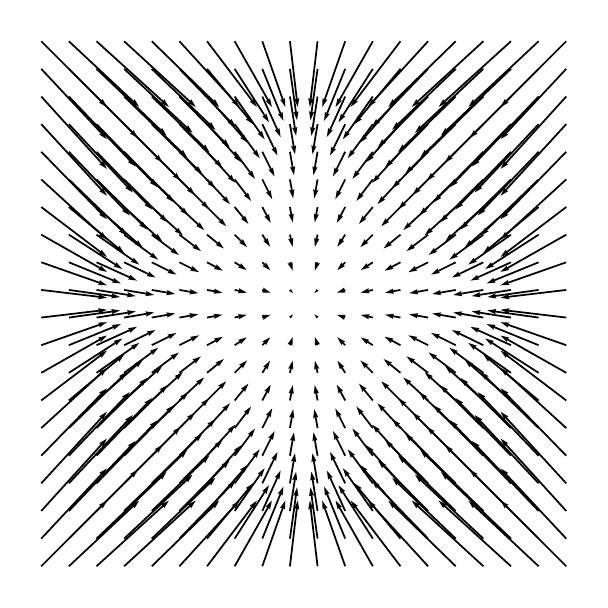}};
        \node[] (sqrt) at (5.7,-2.5) {\includegraphics[width=1.7cm]{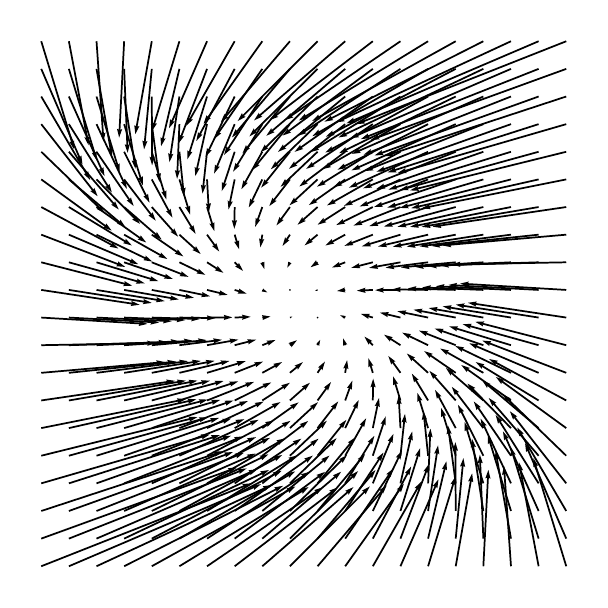}};

        \node[above=0.2cm of void, anchor=north, scale=0.8] {No distortion};
        \node[above=0.2cm of blend, anchor=north, scale=0.8] {$f_{\text{blend}}$};
        \node[above=0.2cm of rot, anchor=north, scale=0.8] {$f_{\text{rot}}$};
        \node[above=0.2cm of scale, anchor=north, scale=0.8] {$f_{\text{scale}}$};
        \node[above=0.2cm of regrot, anchor=north, scale=0.8] {$f_{\text{regrot}}$};
        \node[above=0.2cm of sin, anchor=north, scale=0.8] {$f_{\text{sin}}$};
        \node[above=0.2cm of sqrt, anchor=north, scale=0.8] {$f_{\text{sqrt}}$};

        \node[below=0.2cm of blend, anchor=south, scale=0.8] {$L_{f_{\text{blend}}}=0$};
        \node[below=0.2cm of rot, anchor=south, scale=0.8] {$L_{f_{\text{rot}}}=0$};
        \node[below=0.2cm of scale, anchor=south, scale=0.8] {$L_{f_{\text{scale}}}=2\cdot\lambda$};
        \node[below=0.2cm of regrot, anchor=south, scale=0.8] {$L_{f_{\text{regrot}}}=2$};
        \node[below=0.2cm of sin, anchor=south, scale=0.8] {$L_{f_{\text{sin}}}=\sigma\cdot\sqrt{d}$};
        \node[below=0.2cm of sqrt, anchor=south, scale=0.8] {$L_{f_{\text{sqrt}}}=\infty$};

    \end{tikzpicture}
}
    \caption{Movement distortions used when applying actions $\text{clip}_\lambda (s_W-s)$.}\label{f:movement_distortions}
\end{figure}

\subsubsection{Observations}\label{s:observation_types}

A canonical type of observation is when the position can be observed directly, i.e.,
$\mathcal{O}_{\text{PO}}(s, W)=s$. Here, we need to fix the optimum $s_W=s^*$, because it is impossible to infer
$s_W$ without observing $W$ (see also Section~\ref{app:logging_policy}). Roughly speaking, these are
scenarios where it is known where the optimum is, but not how to get there. We will evaluate these
scenarios in $d=2$ and $d=5$ dimensions.
Our motivation stems from scenarios where observations are drawn from optical
sensors and hence we test our method on different image generators
(Figure~\ref{fig:image_observations}). The first comes from active
alignments problems from camera assembly, where a lens objective has to be positioned 
relative to a sensor to obtain optimal optical performance~\cite{active_alignments_with_deep_learning}. 
Here, $s$ relates to the
position of the lens objective and $W$ to variances in the lenses of the objective and distortions in the movement dynamics. 
At each position $s$, light is sent through
the lens system creating an image
$\mathcal{O}_{\text{LP}}(s, W)$ on a sensor. The task is to position the objective with variances
$W$ precisely to an individual optimum~$s_W$ (Figure~\ref{f:active_positioning})
As some information about $W$ is contained in the image implicitly, it is possible to design
algorithms that leverage the image information to move towards $s_W$. We use the realistic generator
from~\cite{burkhardt2025relign} where light is sent in the form of a \emph{Siemens star}
producing images whose contrast and sharpness are sensitive to small misalignments.

We also run experiments in the \emph{Fetch Reach} environments~\cite{fetch}, where a robotic arm has
to reach a desired position $s_W$. Here, we use the vanilla environment
$\mathcal{O}_{\text{Fetch}}(s, W)=s-s_W$ where the distance to the target is observed. In
Section~\ref{app:fetch} we study the effect of shortcut augmentation for harder variants using image
observations $\mathcal{O}_{\text{FetchImg}}$ and reaching multiple goals subsequently from offline
data alone.

Our last image generator is the \emph{light tunnel} from~\cite{gamella2025chamber}, 
where light is sent through two polarizers whose angles dictate how 
it passes through to an optical sensor.
Here, each position $s$ of the polarizers filters out certain wavelengths of the light creating a
image~$\mathcal{I}(s)$ at the sensor. Here, $\mathcal{I}(s)$ does not depend on the context
$W$ but only on the relative difference of the angles of the polarizers, i.e. many
states lead to the same image. To add context, we sample
in each episode $s_W$ uniformly from the box $[0, 2\pi]^2$ and set
$\mathcal{O}_{\text{LT}}(s, W)=\mathcal{I}(s)-\mathcal{I}(s_W)$. In our experiments, we use the
decoder of the autoencoder trained on images from the real system provided in the data repository
of~\cite{gamella2025chamber}.

\begin{figure}[H]
    \centering
    \begin{tikzpicture}
        \node[] (lp_start) at (0,0) {\includegraphics[height=0.8cm]{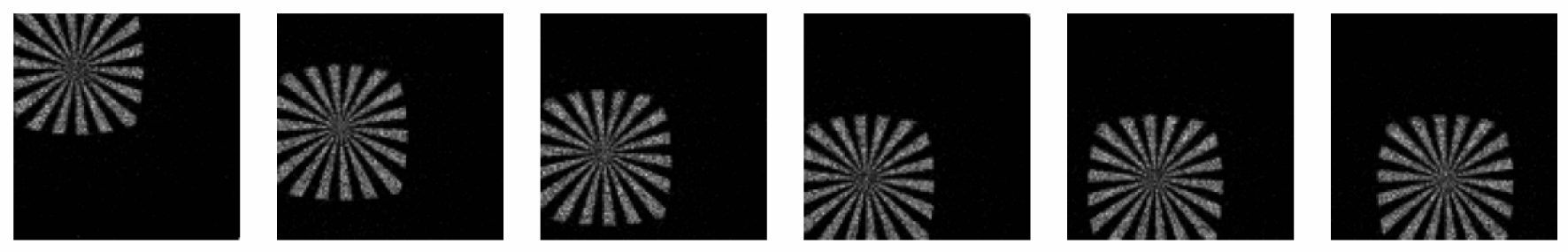}};
        \node[right=1cm of lp_start] {\includegraphics[height=0.8cm,trim={5cm 0 0 0},clip]{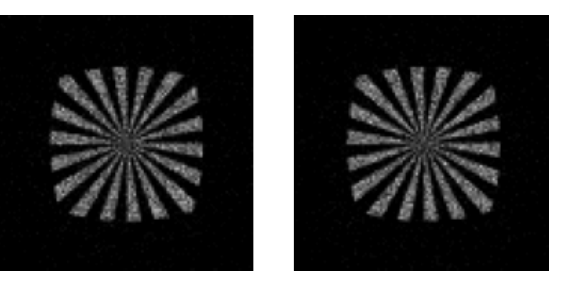}};

        \node[] (lt_start) at (0,-1.0) {\includegraphics[height=0.8cm]{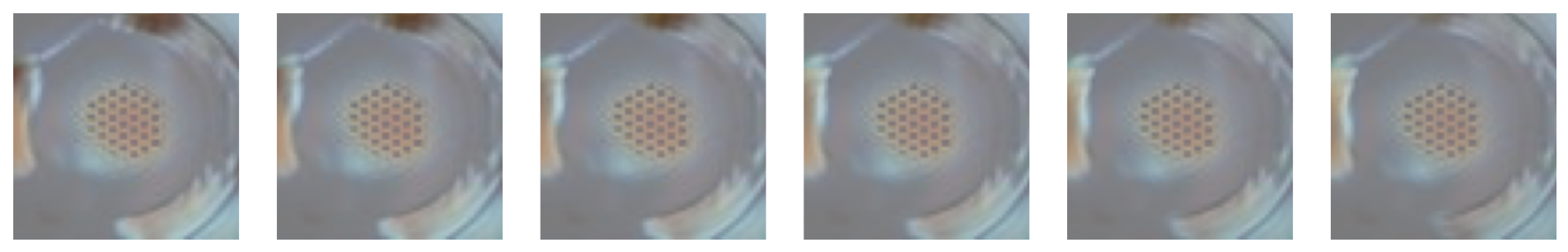}};
        \node[right=1cm of lt_start] {\includegraphics[height=0.8cm,trim={5cm 0 0 0},clip]{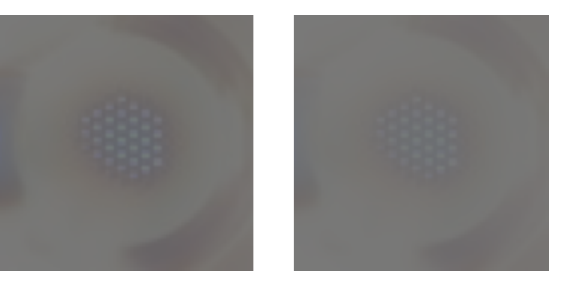}};

        \node[] (fr_start) at (0,-2.0) {\includegraphics[height=0.8cm]{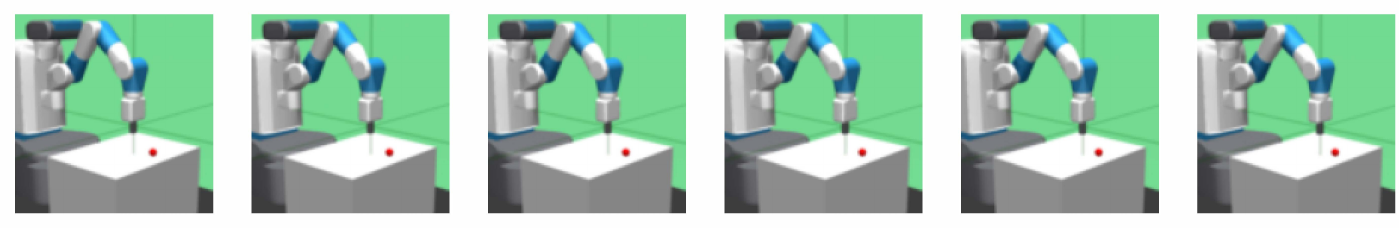}};
        \node[right=1cm of fr_start] {\includegraphics[height=0.8cm,trim={4cm 0 0 0},clip]{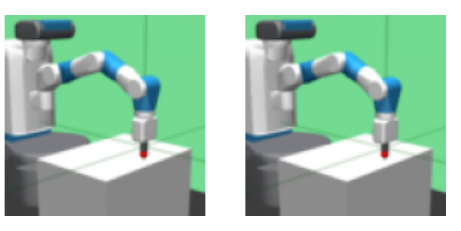}};

        \node[right=0 of lp_start, scale=1.5] {$\cdots$};
        \node[right=0 of lt_start, scale=1.5] {$\cdots$};
        \node[right=0 of fr_start, scale=1.5] {$\cdots$};
    \end{tikzpicture}
    \caption{
        Exemplary trajectories of $\cw{l}$ executed in
    $\mathcal{O}_{\text{LP}}$, $\mathcal{O}_{\text{LT}}$, and $\mathcal{O}_{\text{FetchImg}}$ (top to bottom).}\label{fig:image_observations}

\end{figure}

\subsubsection{Logging policies}\label{s:logging_policy}
In most offline RL benchmarks, logging policies are obtained by training online RL algorithms
partially or fully to obtain policies of different expertness~\citep{d4rl}. However,
in many real-world continuous-control settings, logging policies are hand-crafted, highly
structured, and systematically suboptimal routines. 
This is particularly common in active positioning tasks, where expert routines rely on relatively
simple mechanisms yet can be applied across a wide range of systems with only minimal adjustments.
A representative example are optical alignment procedures, in which system performance is improved
iteratively by sequentially adjusting individual degrees of freedom and evaluating a measured signal
such as coupling efficiency or spot quality~\citep{alignment_optical_systems,alignment_decam,
alignment_strategies}. Similar principles also apply to other positioning and manipulation tasks that
use coordinate-based or heuristic search strategies. These methods usually start with
rough movements and reduce the step size over time until the target is reached.
\citep[Section~3.1]{active_alignments_with_deep_learning}.
To study offline RL under such structured but imperfect data in a controlled and reproducible
manner, we require a logging policy that reliably reaches the target while producing trajectories
that are suboptimal in both direction and number of steps, and whose expertness can be varied
systematically. We distill these principles into a synthetic logging policy referred to as the
\emph{coordinate walk} $\cw{l}$.

Across all scenarios we study, successful control requires that relevant
displacement information — essentially $s - s_W$ — is inferable from the observation $\mathcal{O}(s,
W)$, since otherwise the task is not solvable. Even when $s - s_W$ is inferable, however, the task
may still be unsolvable without any information about the movement distortion $f$. This requirement
is discussed more formally in Appendix~\ref{app:logging_policy} and instantiated concretely in
Section~\ref{s:observation_types} for the various observation settings we study. To generate
reliable trajectories across these different observation settings and distortion regimes, we gave
the logging policy direct access to
$s - s_W$. Importantly, this does not make the task trivial, we simply assume the logging policy
already has a reliable way to infer the relevant information from $\mathcal{O}(s, W)$. 
Inspired by real-world logging policies as described above, we constructed a structured logging
policy that optimizes coordinate by coordinate.
That is, actions are chosen along coordinate axes
until the corresponding coordinate of $s$ matches that of $s_W$. Once all dimensions have been
traversed, the step size $l$ is reduced and the procedure is repeated, resulting in a reliable but
As a result, the logging policy can reach the target for the movement distortions we consider, but
it does so highly inefficiently, including overshoots, detours, and movement in the wrong direction.
In Section~\ref{sec:ablation_logging_policy}, we show that our method is not dependent on structured
logging policies. 

By varying the initial step size, the expertness of the logging policy can be adjusted (see
Figure~\ref{f:step_size_effect}). Figure~\ref{fig:example_trajectories} shows trajectories of the
coordinate walk executed under different movement distortions.
To model realistic hand-overs between logging policies and augmentors, we assume the internal state of the policy,
i.e. the current step size $l$ and dimensions already optimized, is
reset to the initial values once the policy is reset. To avoid making our mathematical framework
introduced in Section~\ref{sec:theory} too specific for these types of resets, we assume stateless
policies there. For most states, $V^{\pi_{l_2}}(s, W)\ge V^{\pi_{l_1}}(s,
W)$ for two step sizes $l_1<l_2$ holds true and thus Theorem~\ref{thm:shortcuts} holds in this
setting. In Section~\ref{app:contractions}, 
a detailed discussion on the contraction-property and LPE of $\cw{l}$ is given.

\subsection{Results}

Section~\ref{sec:lift} gives rise to two algorithms. First, a purely offline
one that takes a static dataset collected from some logging policy and trains an offline RL
algorithm with shortcut augmentations. In our experiments, we use CQL and denote this algorithm as
CQL-SC. Second, an iterative offline RL algorithm that collects data with an augmented logging
policy where CQL is trained on the collected data, called \ourmethod{}. If the subsequently trained
CQL also uses shortcuts, we denote this algorithm as \ourmethod{}-SC. By default, we use
Algorithm~\ref{alg:data_augmentation} with $p=0.6$, limit augmentations per trajectory to $20$.
In Section~\ref{sec:C_analyse}, we study in detail the sensitivity of our method to the choice of
the hyperparameter $C$. Larger values of $C$ are more restrictive in terms of which augmentations
are sampled. Although better policies can be obtained by tuning $C$, particularly when $L_f$ is
comparatively large like in $f_{\mathrm{regrot}}$, we set $C=0$ in all experiments to ensure a fair
comparison and to avoid introducing additional inductive biases into our method. A detailed
hyperparameter analysis is given in Section~\ref{sec:effect_of_hyperparameters}.

First, we analyze the effect of different augmentations while collecting data and the effect
of using shortcuts in the CQL training afterward. 
Beside naive augmentations as adding gaussian noise $\pi_\beta(o)+\eps$ or randomly scaling actions
$\pi_\beta(o)\cdot\eps$ with $\eps=2\cdot\exp(\eta), \eta \sim \mathcal{N}(0, \sigma)$, 
we also use uniformly sampled actions from $\cA$ and IORL-like augmentations
based on an uncertainty model as in~\cite{zhang2023active}.
We run these experiments in $(\mathcal{O}_{\text{PO}}, f_{\text{blend}})$ with step size $0.025$ in
$d=5$ dimensions, collected $3$ independent datasets consisting of $100$ trajectories each and
trained $3$ independent CQL policies on each of them. The LIFT augmentor is trained once after $50$
trajectories.
The averaged convergences to $s_W$ of the CQL policies, each evaluated on $20$ randomly drawn
contexts are shown in
Figure~\ref{fig:comparision_with_augmentations}. Once can see that independently whether shortcuts are
used in the training afterward, the best CQL policies is obtained when trained on the data
collected with \ourmethod{}. Moreover, we see that when training takes place with shortcuts, every
policy can be improved. This finding is underpinned when computing the dataset characteristics
introduced in~\cite{dataset-perspective} shown in Figure~\ref{fig:dataset_scores}. \ourmethod{}
creates trajectories having the highest average returns reproducing findings
in~\cite{dataset-perspective} that this correlates with CQL performance. On the other hand,
\ourmethod{} does not explore as well as other methods, showing a clear differentiation to
IORL that has been explicitly designed to explore well. However, high exploration comes at the price
of an impeded hand-off back to the logging policy, leading to low trajectory qualities for IORL and
random actions.

\begin{figure}[ht]
\subfloat[Comparison of online augmentations.]{
            \begin{tikzpicture}
        \node[] (lp_start) at (0,0) { \includegraphics[width=7.8cm]{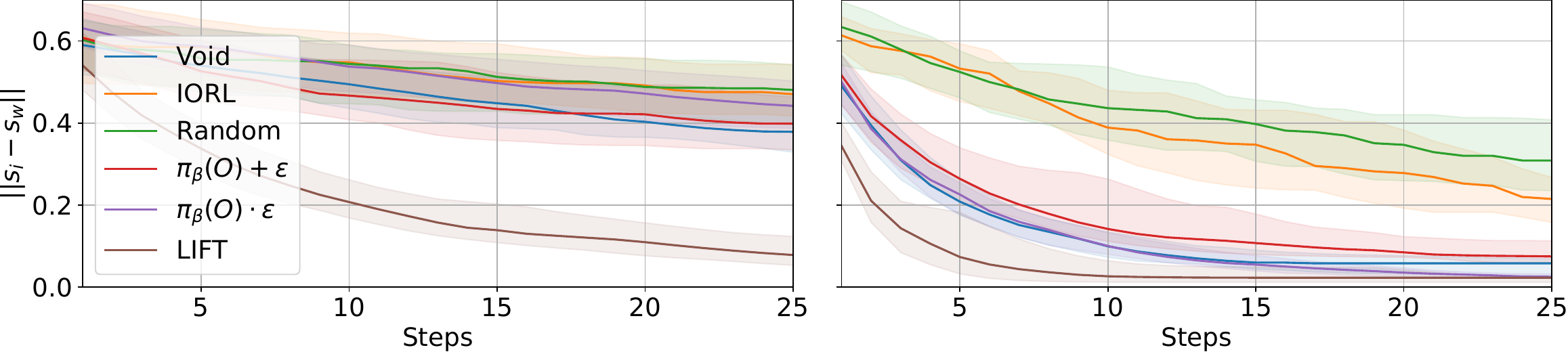}};
            \node[scale=0.8, align=center] at (-1.8, 1) {Without Shortcuts};
          \node[scale=0.8, align=center] at (1.95, 1) {With Shortcuts};
    \end{tikzpicture}
    \label{fig:comparision_with_augmentations} 
}
    \hfill
    \subfloat[Dataset properties.]{
        \begin{tikzpicture}
        \node[] (lp_start) at (0,0) { \includegraphics[width=7.8cm]{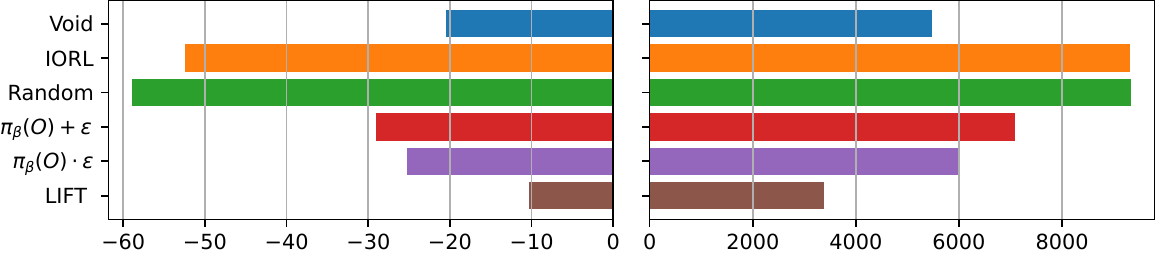} };
            \node[scale=0.8, align=center] at (-1.6, 1) {Trajectory Quality};
          \node[scale=0.8, align=center] at (1.9, 1) {State Exploration};
    \end{tikzpicture}
    \label{fig:dataset_scores}
    }
    \caption{Experiments in $(\mathcal{O}_{\text{PO}}, f_{\text{blend}})$ with $l=0.025$ and $d=5$.}
\end{figure}
 
In our second type of experiments, we evaluate how our methods compare under different
movement distortions and observation types. 
In $\mathcal{O}_{\text{PO}}$, algorithms collect a total of $n=100$ and $n=500$ trajectories for $d=2$ and $d=5$ respectively, where
the LIFT augmentor is trained once after $50$ and $100$ collected trajectories respectively. In
$\mathcal{O}_{\text{LP}}$, we collect $500$ trajectories and LIFT is trained once after $100$
episodes. In $\mathcal{O}_{\text{LT}}$, we collect only $100$ trajectories and LIFT is trained once after $50$
collected trajectories. Here, we additionally compare to SAC~\cite{sac} trained with a mixture of offline and online data as
done in warm-start RL that is restricted to the same number of trajectories as in our offline
datasets. Specifically, in a scenario with $n$ episodes, the replay buffer of SAC is
initialized with the same number of trajectories collected by the logging policy the \ourmethod{} augmentor
obtains in training, e.g. $m=50$ for $\mathcal{O}_{\text{LT}}$.
Moreover, we also compare to diffusion-based techniques, like GTA~\cite{gta} that generate
synthetic transitions and Diffusion-QL (DQL)~\cite{diffusion_ql} that learns a diffusion-based
policy.
Figure~\ref{fig:main_result} presents selected comparisons across the multiple scenarios 
and all comparisons can be found in Section~\ref{app:additional_experiments}.
In all tested environments, we see that CQL policies trained offline on data from \ourmethod{}
have
better performance than these trained on unaugmented data from the logging policy.
This effect fades a bit when adding shortcuts to the subsequent offline training: In most scenarios,
the performance of
\ourmethod{}-SC is better or equal than CQL-SC. This is, for instance, not the case
when using image data from $\mathcal{O}_{\text{LP}}$, where CQL training on data obtained
from \ourmethod{}-SC showed high variance.
Studying the effect of shortcuts in isolation, CQL-SC consistently outperforms CQL and
\ourmethod{}-SC consistently outperforms \ourmethod{}, making \ourmethod{}-SC the best of our
methods.
Comparing LIFT-SC to SAC with offline data, we see a clear picture:
SAC stays ahead in all low-dimensional cases for $\mathcal{O}_{\text{PO}}$, and
\ourmethod{}-SC outperforms SAC almost consistently over all movement dynamics and expert-levels of
the logging policy in
$\mathcal{O}_{\text{PO}}$ for $d=5$ (see Appendix~\ref{sec:sac_vs_lift}), as well as in image-based scenarios.
Interestingly, for $f_{\text{regrot}}$ where the contraction property is violated, augmentations
with shortcut fail, whereas in $f_{\text{sqrt}}$, where LPE does not hold, augmentations 
still help but the advantage over SAC is negligible.

 \begin{figure}[ht]
    \centering
    \begin{tikzpicture}
        \node[] at (0,0) {\includegraphics[width=\columnwidth]{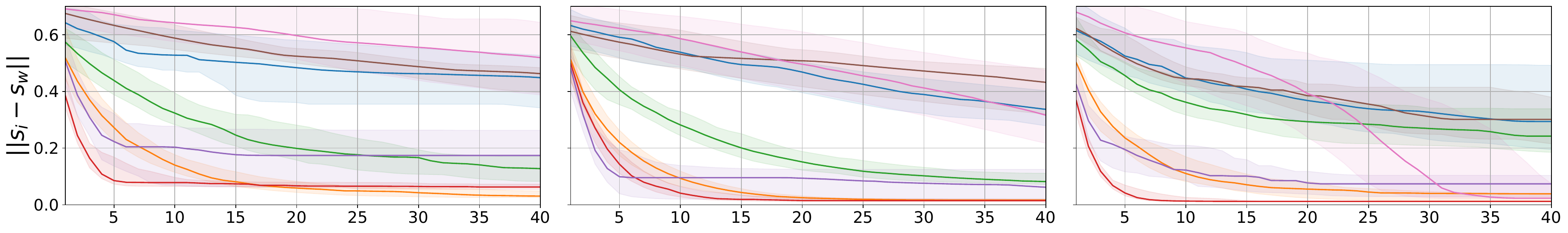}};
        \node[scale=0.5, align=center] at (-2.5, 0.8) {$\mathcal{O}_{\text{PO}}, f_{\text{regrot}}$\\$d=5, l=0.0125$};
        \node[scale=0.5, align=center] at (0, 0.8) {$\mathcal{O}_{\text{PO}}, f_{\text{scale}}$\\$d=5, l=0.025$};
        \node[scale=0.5, align=center] at (2.7, 0.8) {$\mathcal{O}_{\text{PO}}, f_{\text{rot}}$\\$d=5, l=0.05$};
        \node[] at (0,-1.8) {\includegraphics[width=\columnwidth]{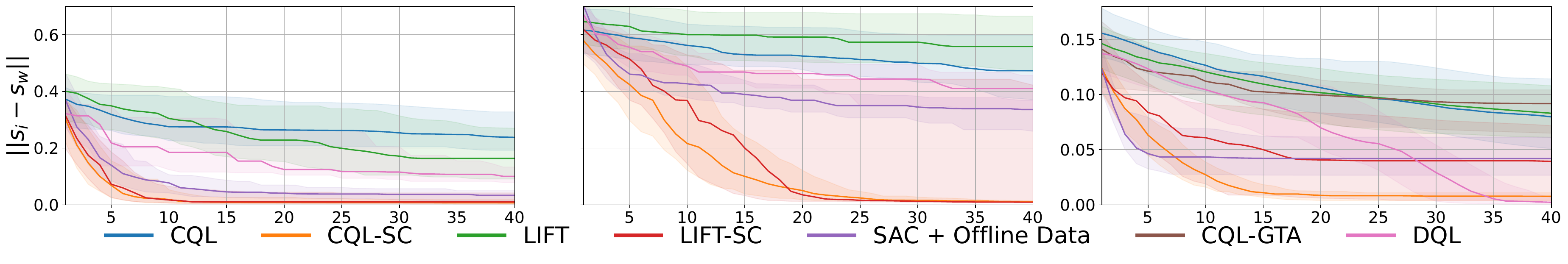}};
        \node[scale=0.5, align=center] at (-2.5, -0.9) {$\mathcal{O}_{\text{LT}}, f_{\text{blend}}$\\$d=2, l=0.025$};
        \node[scale=0.5, align=center] at (0, -0.9) {$\mathcal{O}_{\text{LP}}, f_{\text{blend}}$\\$d=5, l=0.025$};
        \node[scale=0.5, align=center] at (2.7, -0.9) {$\mathcal{O}_{\mathrm{Fetch}}, f_{\text{fetch}}$\\ $d=3, l=0.2$};
    \end{tikzpicture}
    \caption{Comparisons of our methods for selected scenarios.}
    \label{fig:main_result}
\end{figure}

Finally, we analyse the effect of absence of structure in the logging policy on the performance of
the shortcut augmentation by injecting noise into the $\cw{l}$. The results are in presented in
Section~\ref{sec:ablation_logging_policy} and in the tested scenarios, we found that shortcut
augmentation consistently yields better policies, suggesting that benefits of shortcuts are not
limited to structured logging policies. 

\section{Discussion}

We demonstrate that shortcut augmentations can consistently improve the effectiveness of offline
RL in active positioning problems in both, theoretical and experimental
validations. In particular, we find that augmentations provide the largest gains in complex
scenarios with higher action dimensionality or partial observability, where plain offline RL often
fails. This suggests that exploiting task structure to expand data coverage is a promising
alternative to relying solely on behavior regularization. Compared to warm-start RL, \ourmethod{}
offers a more data-efficient way to leverage suboptimal expert routines: by selectively taking
shortcuts suggested by an off-policy learner, we improve dataset quality without requiring extensive
online fine-tuning. 
Nevertheless, our approach has limitations. Shortcut validity depends on assumptions about the
distortion function and value function regularity, which may not hold in all real-world positioning
systems. Moreover, our experiments are limited to semi-realistic simulators; future work should
validate these methods on physical platforms, especially in robotic alignment tasks. 
Another open question is how to combine shortcut augmentation with model-based methods or world models to further
improve sample efficiency.
We believe that the principles underlying \ourmethod{} are broadly applicable beyond the scenarios
studied in in this work where expert routines exist but are suboptimal. We hope this work encourages a
more systematic treatment of data augmentation strategies for offline RL in structured industrial
tasks.

\section*{Acknowledgments}
This research was funded by the German Federal Ministry of Research, Technology and Space (BMFTR)
under grant number 13FH605KX2. TW is funded by the \emph{Hightech Agenda Bavaria}. We thank our
colleagues Michael Layh and Martin Wenzel for helpful discussions and feedback on the manuscript.

\section*{Impact Statement}
This paper presents work whose goal is to advance the field
of Machine Learning. There are many potential societal
consequences of our work, none which we feel must be
specifically highlighted here.

\bibliography{lift}
\bibliographystyle{icml2026}

\newpage
\appendix
\onecolumn
\section{Proofs for Section~\ref{sec:theory}}\label{s:proofs}

\begin{prop}\label{prop:augmented_is_better}
Let $\pi_\beta$ and $a_\theta$ be two policies and $o=O(s, W)$, then $J(\pi_{\text{aug}})\ge J(\pi_\beta)$ with $\pi_{\text{aug}}$
defined as follows:
\begin{equation*}
    \pi_{\textnormal{aug}}(o) :=
\begin{cases}
    a_\theta(o) \quad\textnormal{if } a_\theta(o)\textnormal{ is a $\pi_\beta$-shortcut at } (s,
    W)\\
\pi_\beta(o) \quad\textnormal{otherwise}
\end{cases}.
\end{equation*}
\end{prop}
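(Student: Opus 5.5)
The plan is to recognise the shortcut condition of Definition~\ref{defn:shortcut} as the one-step improvement condition $Q^{\pi_\beta}(s,a)\ge V^{\pi_\beta}(s,W)$, and then run the standard policy improvement argument \citep[Section~4.2]{sutton2018reinforcement} along the deterministic trajectory of $\pi_{\text{aug}}$.

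\emph{Step 1: the shortcut condition is a $Q$-inequality.} Since $\pi_\beta$ and $f$ are deterministic, $V^{\pi_\beta}(s,W)$ is exactly the discounted return from $(s,W)$. I will use the reward convention $r=R(s,a,W)=-\|f(s,a,W)-s_W\|$, so the Bellman identity is $V^{\pi_\beta}(s,W)=-\|s_1-s_W\|+\gamma V^{\pi_\beta}(s_1,W)$ with $s_1=f(s,\pi_\beta(O(s,W)),W)$. Define $Q^{\pi_\beta}(s,a,W):=-\|f(s,a,W)-s_W\|+\gamma V^{\pi_\beta}(f(s,a,W),W)$. With $s'=f(s,a,W)$, the shortcut condition $\gamma V^{\pi_\beta}(s',W)-V^{\pi_\beta}(s,W)\ge\|s'-s_W\|$ rearranges to exactly $Q^{\pi_\beta}(s,a,W)\ge V^{\pi_\beta}(s,W)$. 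In the other case of the definition of $\pi_{\text{aug}}$ we have $a=\pi_\beta(o)$, and the inequality holds with equality. Hence for every state, $Q^{\pi_\beta}(s,\pi_{\text{aug}}(o),W)\ge V^{\pi_\beta}(s,W)$. I regard $\pi_{\text{aug}}$ as a deterministic Markov policy on the latent state $(s,W)$; this is legitimate because the case distinction depends on $(s,W)$, which is fixed along the trajectory.

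\emph{Step 2: telescoping along the augmented trajectory.} Fix $(s_0,W)$ and let $s_0,s_1,\ldots$ be the trajectory of $\pi_{\text{aug}}$, with rewards $r_k$. Iterating Step~1 gives
\begin{equation*}
V^{\pi_\beta}(s_0,W)\le r_0+\gamma V^{\pi_\beta}(s_1,W)\le r_0+\gamma r_1+\gamma^2V^{\pi_\beta}(s_2,W)\le\cdots\le\sum_{k=0}^{n-1}\gamma^k r_k+\gamma^nV^{\pi_\beta}(s_n,W).
\end{equation*}
The sum on the right is the partial return of $\pi_{\text{aug}}$. Once the remainder term is shown to vanish, this yields $V^{\pi_{\text{aug}}}(s_0,W)\ge V^{\pi_\beta}(s_0,W)$ for every $(s_0,W)$. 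Taking expectations over $s_0$ and $W$ then gives $J(\pi_{\text{aug}})\ge J(\pi_\beta)$.

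\emph{Main obstacle: the remainder term and episode termination.} I expect the main difficulty to be controlling $\gamma^nV^{\pi_\beta}(s_n,W)$ and reconciling the episodic setup. For the remainder, $\cP$ is bounded, so rewards are bounded by $\mathrm{diam}$-type constants and $|V^{\pi_\beta}|\le B/(1-\gamma)$. Therefore $\gamma^nV^{\pi_\beta}(s_n,W)\to0$ as $n\to\infty$.

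For termination, I would treat terminal states ($\|s-s_W\|\le\theta$) as absorbing with value $0$. If the augmented trajectory reaches such a state at step $n$, the remainder is exactly $0$ and the telescoped sum is the full return. The step limit needs separate care: I would fold the step counter into the state, so that finite-horizon backward induction applies with the same one-step inequality at each stage. Alternatively, I would note that the paper's stateless-policy idealisation in Section~\ref{sec:theory} lets us work in the infinite-horizon discounted setting.

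Finally, $J$ as displayed in Section~\ref{sec:active_positioning} includes the initial term $-\|s_0-s_W\|$. This term is identical for both policies, so it does not affect the comparison.
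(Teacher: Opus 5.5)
Your argument is correct, but it takes a different route from the paper. The paper first reduces to a policy $\pi_a$ that deviates from $\pi_\beta$ at a single state $(\tilde s, W)$. It computes the value of $\pi_a$ by splicing the $\pi_\beta$-trajectory, $V^{\pi_a}(s,W)=V^{\pi_\beta}(s,W)-\gamma^t V^{\pi_\beta}(\tilde s,W)-\gamma^t\|s'-s_W\|+\gamma^{t+1}V^{\pi_\beta}(s',W)$, and notes that the shortcut condition makes the correction nonnegative. It then asserts that the general case follows by ``applying the statement repeatedly''. You instead handle all augmented states at once: you telescope the one-step inequality $Q^{\pi_\beta}(s,\pi_{\text{aug}}(o),W)\ge V^{\pi_\beta}(s,W)$ along the trajectory of $\pi_{\text{aug}}$. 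This is the standard policy improvement proof, which the paper only cites as motivation. The paper's splice is explicit and elementary for a single deviation, but your version is the more robust one, for three reasons. First, the splice formula silently assumes that the $\pi_\beta$-trajectory from $s'$ never returns to $\tilde s$; otherwise $\pi_a$ would take $a$ again. Second, the repetition step is not automatic: after the first modification, $a_\theta$ is only known to be a $\pi_\beta$-shortcut at the remaining states, not a shortcut for the already-modified policy, whose values may have increased. Third, there may be infinitely many augmented states. Your telescoping only ever uses the shortcut condition relative to the fixed $\pi_\beta$, so it avoids all three issues. The price is that you must show the remainder $\gamma^nV^{\pi_\beta}(s_n,W)$ vanishes, which your boundedness and absorbing-terminal-state argument does. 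Your remarks on the step limit, on $\pi_{\text{aug}}$ depending on the latent $(s,W)$ rather than on $o$ alone, and on the $i=0$ term of $J$ address idealisations that the paper also leaves implicit.
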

\begin{proof}[Proof of Proposition~\ref{prop:augmented_is_better}]

We denote $\pi_\beta$ simply by $\pi$ in the following.
It suffices to show that the statement holds if augmentation only is applied at one single state
$(\tilde s, W)$ as we than can apply the statement repeatedly. That is, there
exists an action $a$ that satisfies:
$$\gamma\cdot V^{\pi}(f(\tilde s, a, W), W) - \|f(\tilde s, a, W)-s_W\| \ge V^\pi(\tilde s, W) $$

Let $\pi_a$ be the policy that uses $a$ at $\tilde s$ and on all other states coincides with $\pi$. First,
we show that $J(\pi_a)\ge J(\pi)$. It suffices to show that $V^{\pi_a}(s)\ge V^{\pi}(s)$ for all $s\in S$.
Let $(s, W)$ be an initial state. If the trajectory of $\pi$ does not traverse $\tilde s$, then $V^{\pi_a}(s)=
V^{\pi}(s)$. Assume differently that the trajectory visits $\tilde s$ at the $t$-th step.
Then, the trajectory starting at $s$ follows $\pi$ till $\tilde s$, then chooses the shortcut $a$, and then follows
$\pi$ from $s'=f(\tilde s,a, W)$. The value for this trajectory is:

$$V^{\pi_a}(s, W)=V^\pi(s, W) - \gamma^t\cdot V^\pi(\tilde s, W) - \gamma^t\|s'-s_W\|+\gamma^{t+1}
V^\pi(s', W).$$
From the assumption of $(\tilde s, a)$, we have

$$\gamma^t\cdot (-V^\pi(\tilde s, W) - \|s'-s_W\|+\gamma\cdot V^\pi(s', W))\ge 0$$
and hence $V^{\pi_a}(s, W)\ge V^\pi(s, W)$.
\end{proof}

\begin{lemma}\label{lemma:lower_bound}
    Let $\pi$ be distance-improving, then $(1-\gamma)V^\pi(s, W)\ge -\|s-s_W\|$ for all
    $(s, W)$.
\end{lemma}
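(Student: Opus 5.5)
The plan is to write $V^\pi(s,W)$ as the explicit discounted return along the deterministic trajectory of $\pi$ and bound each reward term using distance improvement. Since $\pi$ is deterministic and the dynamics given by $f$ are deterministic for fixed $W$, starting from $(s,W)=(s_0,W)$ the policy produces a unique trajectory $(s_0,W),(s_1,W),\ldots,(s_k,W)$ with $s_{i+1}=f(s_i,\pi(O(s_i,W)),W)$. Following the reward convention $R(s_i,a_i,W)=-\|s_{i+1}-s_W\|$, we get
\begin{equation*}
V^\pi(s,W)=-\sum_{i=0}^{k-1}\gamma^i\|s_{i+1}-s_W\|.
\end{equation*}
The same argument also works for the objective's indexing $\sum_{i}\gamma^i(-\|s_i-s_W\|)$, as noted below.

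The key step uses distance improvement. All states on the trajectory satisfy $\|s_{i}-s_W\|<\|s_0-s_W\|$ for $i\ge 1$, and trivially $\le$ for $i=0$. Hence each reward is at least $-\|s-s_W\|$, and
\begin{equation*}
V^\pi(s,W)\ge -\|s-s_W\|\sum_{i=0}^{k-1}\gamma^i\ge -\|s-s_W\|\sum_{i=0}^{\infty}\gamma^i=-\frac{\|s-s_W\|}{1-\gamma}.
\end{equation*}
For a finite episode we extend the geometric sum to infinity, which only lowers the bound because the summand $-\|s-s_W\|\gamma^i$ is nonpositive. Multiplying by $1-\gamma>0$ gives the claim $(1-\gamma)V^\pi(s,W)\ge -\|s-s_W\|$. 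If the value is instead indexed as $\sum_{i=0}^k-\gamma^i\|s_i-s_W\|$, the identical bound holds term by term, since the $i=0$ term equals $-\|s-s_W\|$ exactly and the later terms are strictly larger.

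The only delicate point is bookkeeping rather than mathematics. We must make sure that the reward convention (reward at $s_{i+1}$ versus $s_i$) and the episode termination, whether at the threshold $\theta$ or at a step limit, do not break the termwise bound. Both conventions yield rewards bounded below by $-\|s-s_W\|$, and truncation only removes nonpositive terms, so neither causes trouble. Determinism of $\pi$, assumed in Section~\ref{sec:theory}, is what makes $V^\pi$ equal to this single return with no expectation to handle.
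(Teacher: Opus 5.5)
Your proposal is correct and follows essentially the same route as the paper: write $V^\pi(s,W)$ as the deterministic discounted return $-\sum_{i=1}^{k}\gamma^{i-1}\|s_i-s_W\|$, bound each term below by $-\|s-s_W\|$ using distance improvement, and compare with a geometric series. The only cosmetic difference is that the paper keeps the finite factor $\frac{1-\gamma^k}{1-\gamma}$ and drops $1-\gamma^k\le 1$ at the end, where you extend to the infinite sum; the two are equivalent.
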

\begin{proof}
Let $(s_0, W), (s_1, W), \ldots, (s_k, W)$ be a trajectory of $\pi$ starting at $s=s_0$, then
$$V^\pi(s, W)=-\sum_{i=1}^k\gamma^{i-1}\|s_i-s_W\|\ge -\|s-s_W\|\sum_{i=0}^{k-1}\gamma^i =
-\|s-s_W\|\cdot\frac{1-\gamma^k}{1-\gamma}$$
where we have used that $\pi$ is distance improving in every step.
Finally, $(1-\gamma)V^\pi(s, W)\ge -\|s-s_W\|(1-\gamma^k)\ge -\|s-s_W\|$.
\end{proof}

\begin{proof}[Proof of Proposition~\ref{prop:augmentations}]

    Assume that $\tau=(s_0,\ldots, s_k)$ is the sub-trajectory of $\pi$ starting at $s=s_0$ and
    ending at $s'=s_k$. We prove the statement via induction on $k$. Note that since $s'\neq s$, we
    have $k\ge 1$. Let $k=1$, then
    $$V^\pi(s, W)=-\|s_1-s_W\| + \gamma\cdot V^\pi(s', W)$$
    and the claim holds. Now, assume the statement holds from $s_1$ to $s_k=s'$, then
    $$\gamma V^\pi(s', W)-V^\pi(s_1, W)\ge \|s'-s_W\|$$
    by the induction hypothesis. Furthermore, we have
        \begin{align*}
            \gamma V^\pi(s', W)-V^\pi(s ,W)&= \gamma V^\pi(s', W)-V^\pi(s_1, W)+V^\pi(s_1, W)-V^\pi(s, W)\\
                            &\ge \|s'-s_W\| + V^\pi(s_1, W)-V^\pi(s, W)\\
                            &= \|s'-s_W\| + V^\pi(s_1, W)-(-\|s_1-s_W\|+\gamma V^\pi(s_1, W))\\
                            &= \|s'-s_W\| + (1-\gamma)V^\pi(s_1, W) + \|s_1-s_W\|
        \end{align*}
Using Lemma~\ref{lemma:lower_bound}, we have $(1-\gamma)V^\pi(s_1, W) + \|s_1-s_W\|\ge 0$ and the claim
follows.
\end{proof}

\begin{proof}[Proof of Proposition~\ref{prop:shortcuts_for_linear_dynamics}]
    Since Proposition~\ref{prop:augmentations} gives that $\gamma V^\pi(s_j, W)-V^\pi(s_i,
    W)\ge\|s_j-s_W\|$, it is left to prove that $f(s_i, a, W)=s_j$. We have

    $$f(s_i, a, W)=s_i+W\cdot\sum_{k=i}^{j-1}a_i=s_i+W\cdot a_i +W\cdot a_{i+1}+\ldots + W\cdot
    a_{j-1}.$$
    Let $s_{i+1},\ldots,s_{j-2}$ be the intermediate states, i.e. $s_k=f(s_{k-1}, a_{k-1}, W)$, then
    replacing $s_k=s_k-1+W\cdot a_{k-1}$ in the equation above from $k=i$ to $k=j-1$ gives the
    claim.
\end{proof}

\begin{proof}[Proof of Proposition~\ref{prop:lpe_for_bounded_transform}]
Let $a_0,\ldots, a_{k-1}$ a chain of actions and set $A=\sum_{i=0}^{k-1}=a_i$, $(s_0, W)$ an initial
state and set $s_i=f(s_{i-1}, a_{i-1}, W)$. Recursively unraveling the definition of $f$ yields
$$s_k=s_0+\sum_{i=0}g(s_i, W)\cdot a_i$$
and consequently
\begin{align*}f(s_0,A,W)-s_k
&= g(s_0, W)\sum_{i=0}^{k-1}a_i-\sum_{i=0}^{k-1}g(s_i,W))a_i\\
&= \sum_{i=0}^{k-1}\bigl(g(s_0, W)-g(s_i, W)\bigr)a_i.
\end{align*}
Taking norms and using the induced matrix norm on $\RR^{m\times d}$ gives
$$
\bigl\|f(s_0,A,W)-s_k\bigr\|
\le \sum_{i=0}^{k-1}\bigl\|g(s_0, W)-g(s_i, W)\bigr\|\cdot\|a_i\|.
$$
By the assumption on $g$, we have
$$\|g(s_0, W)-g(s_i,W)\|\le\|g(s_0, W)\|+\|g(s_i,
W)\|\le 2\cdot\sup_{\cS\times\cW}\|g\|$$ independently of the actions
for all $i$ and the claim follows.
\end{proof}

\begin{proof}[Proof of Theorem~\ref{thm:shortcuts}]
For brevity, we omit $W$ in the notation of the value function.
We have to show that $\gamma V^\pi(f(s_i, a, W)) - V^\pi(s_i)\ge \|f(s_i, a, W)-s_W\|$. Because $f$
has linear-placement errors, it follows directly from Definition~\ref{def:linear_placement_errors}
that $\|f(s_i, a, W) - s_j\|\le L_f\cdot \sum_{k=i}^{j-1}\|a_k\|$ and
thus
$$\|f(s_i, a, W)-s_W\|=\|f(s_i, a, W) - s_j+s_j - s_W\|\le L_f\cdot \sum_{k=i}^{j-1}\|a_k\| +
\|s_j-s_W\|.$$
On the other hand, using the Lipschitz continuity of $V^\pi$, we get
\begin{align*}
    \gamma V^\pi(f(s_i, a, W)) - V^\pi(s_i)&\ge \gamma\cdot (V^\pi(s_j) - L_V\cdot \|f(s_i, a, W)-s_j\|) - V^\pi(s_i)\\
    &\ge \gamma\cdot V^\pi(s_j) - V^\pi(s_i) - \gamma\cdot L_V\cdot  L_f\cdot \sum_{k=i}^{j-1}\|a_k\|
\end{align*}
Now, as the inequality from the theorem statement holds, we have 
$$\gamma\cdot V^\pi(s_j) - V^\pi(s_i) \ge (\gamma\cdot L_V +1)\cdot L_f\cdot
\sum_{k=i}^{j-1}\|a_k\| +\|s_j-s_W\|$$
and plugging this into the upper equation gives the claim.
\end{proof}

\begin{prop}\label{prop:lipschitz_value_function}
    Let $\pi$ be an $f$-contraction. Then $V^\pi$ is $\frac{1}{1-\gamma}$-Lipschitz continuous in
    the states.
\end{prop}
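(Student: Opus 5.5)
We couple the two trajectories of $\pi$ started at $(s,W)$ and $(s',W)$ under the same context $W$ and compare their returns term by term. Let $s_0=s$, $s_0'=s'$ and define recursively $s_{t+1}=f(s_t,\pi(O(s_t,W)),W)$ and $s'_{t+1}=f(s'_t,\pi(O(s'_t,W)),W)$. Because $\pi$ is deterministic and the dynamics given by $f$ are deterministic, $V^\pi(s,W)=-\sum_{t\ge 0}\gamma^t\|s_{t+1}-s_W\|$, and analogously for $s'$. The $f$-contraction property applies to the pair $(s_t,W),(s'_t,W)$ at every step. By induction on $t$ it therefore gives
\[
\|s_{t}-s'_{t}\|\le\|s-s'\| \quad\text{for all } t\ge 0 .
\]

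Next we bound the difference of the value functions with the reverse triangle inequality:
\[
|V^\pi(s,W)-V^\pi(s',W)|\le\sum_{t\ge0}\gamma^t\bigl|\|s'_{t+1}-s_W\|-\|s_{t+1}-s_W\|\bigr|\le\sum_{t\ge0}\gamma^t\|s_{t+1}-s'_{t+1}\|.
\]
Inserting the uniform bound from the first step gives at most $\|s-s'\|\sum_t\gamma^t=\frac{1}{1-\gamma}\|s-s'\|$, which is the claimed constant.

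The main obstacle is the episodic structure: the two trajectories may terminate at different times $k$ and $k'$, either because they enter the $\theta$-ball around $s_W$ or because they hit the step limit. To handle this, we treat a terminated trajectory as absorbing, meaning it stays at its last state and collects no further reward. Suppose the trajectory from $s$ ends first, at $k<k'$. The remaining terms of the other trajectory, $\sum_{t\ge k}\gamma^t\|s'_{t+1}-s_W\|$, must then be bounded directly.

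For these leftover terms we use $\|s'_{t+1}-s_W\|\le\|s'_{t+1}-s_{t+1}\|+\|s_{t+1}-s_W\|$, extending $s_t$ by continuing to apply $f$ and $\pi$ past termination. The first summand is controlled by the contraction property exactly as before. The second summand is the residual distance of the terminated trajectory and need not be zero. If the termination convention is not exactly reward-zero absorption, this residual adds an extra error of order $\theta$. In that case we would state the result for the non-truncated returns, with both trajectories run the same number of steps, and note that the paper's formula for $V^\pi$ via trajectories implicitly uses this convention. Under this convention the clean telescoping bound above holds verbatim and yields the $\frac{1}{1-\gamma}$-Lipschitz property.
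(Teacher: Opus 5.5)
Your core computation is the paper's argument in unrolled form. The paper runs a one-step Bellman induction, $|V^\pi(s,W)-V^\pi(s',W)|\le(\gamma L+1)\|s_1-s_1'\|=L\|s_1-s_1'\|\le L\|s-s'\|$ with $L=\frac{1}{1-\gamma}$. You instead iterate the contraction to get $\|s_t-s_t'\|\le\|s-s'\|$ for all $t$ and sum the geometric series. If both coupled trajectories are run for the same number of steps (or forever), your derivation is correct.

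The termination issue you raise is real, but your way out of it is not justified as written. The paper's $V^\pi$ is the \emph{truncated} return, which vanishes on the $\theta$-ball. This is the base case of the paper's induction, and Lemma~\ref{lemma:lower_bound} writes $V^\pi(s,W)=-\sum_{i=1}^k\gamma^{i-1}\|s_i-s_W\|$ with $k$ the termination time. So it is not true that the paper implicitly uses non-truncated returns.

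Under the paper's convention the residual you isolate cannot be removed, because the proposition itself fails. Take $d=1$, $s_W=0$, $\theta=1$, $f(s,a,W)=s+a$, $O(s,W)=s$ and $\pi(o)=-o/2$. This $\pi$ is an $f$-contraction (and distance-improving). Yet $V^\pi(1,W)=0$ while $V^\pi(1+\epsilon,W)=-(1+\epsilon)/2$ for small $\epsilon>0$, so no Lipschitz bound holds across the boundary of the terminal region.

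The paper's proof has exactly this hole. Its inductive step uses $V^\pi(s,W)=-\|s_1-s_W\|+\gamma V^\pi(s_1,W)$ even when only one of $s,s'$ is terminal, and there that recursion is false. The paper itself remarks in Appendix~\ref{app:contractions} that the threshold $\theta$ induces discontinuities in the value function.

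What you have actually proved is the statement for the non-truncated value $\tilde V^\pi(s,W)=-\sum_{t\ge0}\gamma^t\|s_{t+1}-s_W\|$. You should present this as a changed definition rather than attribute it to the paper. It coincides with the paper's $V^\pi$, for instance, when $\theta=0$, there is no step limit, and the closed-loop map fixes $s_W$.

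One smaller point: the residual $\|s_{t+1}-s_W\|$ of the terminated trajectory arises precisely \emph{under} zero-reward absorption, not when the convention deviates from it. The corresponding sentence of your proposal has this logic reversed.
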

\begin{proof}
    Define $L=\frac{1}{1-\gamma}$ and let $(s,W)$ and $(s', W)$ be two states. 
    We prove via
    induction over the combined number of steps $k$ needed to reach the optimality region
    around $s_W$ starting at $s$ and $s'$ that
    $$|V^\pi(s, W)-V^\pi(s', W)|\le L\cdot \|s-s'\|.$$
    If $k=0$, then $s$ and $s'$ are both within the optimality region, i.e. $\|s-s_W\|\le \theta$
    and $\|s'-s_W\|\le \theta$, then $V^\pi(s, W)=V^\pi(s', W)=0$ and the claim holds.
    Now, let $o=O(s, W)$ and $o'=O(s', W)$ be the observations at $s$ and $s'$ and
    $s_1=f(s,\pi(o), W)$ and $s_1'=f(s', \pi(o'), W)$ be the next states after one step of
    $\pi$. Particularly, the induction hypothesis holds for $s_1$ and $s_1'$, i.e.
    $|V^\pi(s_1,W)-V^\pi(s_1', W)|\le L\cdot \|s_1-s_1'\|$. Since
    $V^\pi(s)=-\|s_1-s_W\|+\gamma V^\pi(s, W)$ and $V^\pi(s')=-\|s_1'-s_W\|+\gamma V^\pi(s', W)$, we have

    \begin{align*}
        |V^\pi(s) - V^\pi(s')|&=|\gamma\cdot V^\pi(s_1, W) - \gamma\cdot V^\pi(s'_1, W) - \|s_1-s_W\| + \|s_1'-s_W\||\\
                              &\le \gamma\cdot |V^\pi(s_1, W) - V^\pi(s'_1, W)| + |\|s_1-s_W\| - \|s_1'-s_W\||\\
                              &\le \gamma\cdot L\cdot \|s_1-s_1'\| + \|s_1-s_1'\|\\
                              &\le (\gamma\cdot L+1)\cdot \|s_1-s_1'\|\\
                              &= L\cdot \|s_1-s_1'\|
    \end{align*}
    where the last equation is due to $L=\frac{1}{1-\gamma}$. Finally, because $\pi$ is an
    $f$-contraction, we have $\|s_1-s_1'\|=\|f(s, \pi(o), W)-f(s', \pi(o'), W)\|\le\|s-s'\|$ and the
    claim follows.
\end{proof}

\begin{proof}[Proof of Corollary~\ref{cor:shortcuts}]
    Because $\pi$ is an $f$-contraction, $V^\pi$ is $\frac{1}{1-\gamma}$-Lipschitz continuous by
    Proposition~\ref{prop:lipschitz_value_function}. Plugging $L_V=\frac{1}{1-\gamma}$ into Theorem~\ref{thm:shortcuts} gives the claim.
\end{proof}

\section{Movement distortion functions}\label{app:properties_of_distortions}

In this section, we formally define the different movement distortions $f$ we consider in our experiments.
The first set of distortions are linear distortions of the form 
$f(s, a, W)=s+W\cdot a$ with $W\in\RR^{d\times d}$ a distortion matrix, more specific, we use
$$f_{\text{blend}}(s, a, W)=s+(I_{d\times d}+W)\cdot a,\quad W\sim\mathcal{N}_{d\times d}(0,\sigma)$$
For $W\in\RR$ a scalar, let
$R_W=\begin{pmatrix}\cos(W) & -\sin(W) \\ \sin(W) &
\cos(W)\end{pmatrix}$ be a two-dimensional rotation matrix. We rise this to a high-dimensional
rotation matrix where adjacent dimensions are rotated, i.e.,
$$\text{Rot}_W=\text{diag}(R_W,\ldots, R_W)\in\mathbb{R}^{d\times d}$$
where $\text{diag}(A_1,\ldots, A_k)$ is the block-diagonal matrix with blocks $A_1,\ldots, A_k$ on the
diagonal.

$$f_{\text{rot}}(s, a, W)=s+\text{Rot}_W \cdot a,\quad W\sim \mathcal{N}(0,\sigma)$$

The next distortion function is a scaling-based one which does not depend on a latent context $W$:
$$f_{\text{scale}}(s, a, W)=s+\text{clip}_{C, \lambda}\left(\|s-s_W\|\right)\cdot a$$
with some constant $0<C<\lambda$ to ensure that the steps are not to small so that the optimum can be reached in finitely many steps.

The next set of distortions is again a rotation-based one, but one where the rotation matrix depends
on the region. For that, we assume the position space $\cP$ is decomposed into $c$-many
non-overlapping subsets
$\cP_1, \ldots, \cP_c$ such that $\cup_{i=1}^c \cP_i=\cP$. Then

$$f_{\text{regrot}}(s, a, W)=s+\sum_{i=1}^c \mathbf{1}_{s\in\cP_i}\cdot \text{Rot}_{W_i}\cdot a,\quad
W\in\mathcal{N}_c(\mu,\sigma), \mu\in\RR^c$$
As $\cP_i\cap\cP_j=\emptyset$ for $i\ne j$, only one rotation matrix is active at a time, depending
on the state.

In our experiments, we used $c=4$ and
divided $\cP$ into four sets depending on in which quadrant of $\RR^2$ the first two dimensions
reside. Moreover, we set $\mu=(-0.3, 0.6, -0.3, 0.6)$.

The next distortion is one where a non-linear offset is added which depends on both, the state and the action:
$$f_{\text{sin}}(s, a, W)=s+a+W\cdot\sin(s)\circ \cos(s)\cdot\|a\|,\quad W\sim\mathcal{U}(0, \sigma)$$
where $\sin$ and $\cos$ are applied component-wise and $\circ$ denote the element-wise
multiplication. Finally, we consider a distortion function that does not have linear placement errors:
$$f_{\text{sqrt}}(s, a, W)=s+(I_{d\times d}+W)\cdot\sqrt{\|a\|}\cdot a,\quad
W\sim\mathcal{N}_{d\times d}(0,\sigma).$$

\subsection{Linear placement-errors}

We begin by proving a stronger conditions, which is easier to check and implies LPE:

\begin{prop}\label{prop:linear_placement_erros}
    Let $f$ be a distortion function and assume there exists a constant $L_f$
    such that for all states $(s,W)$ and actions $a, a'\in\cA$ 
$$\|f(s, a+a', W) - f(f(s, a, W), a', W)\|\le L_f\cdot \|a\|$$
Then $f$ has LPE with constant $L_f$.
\end{prop}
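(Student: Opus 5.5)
The plan is to avoid Lipschitz control of $f$ in the state variable altogether. Instead, I would telescope along the trajectory: at each step I peel off one action from the accumulated sum, and the hypothesis bounds exactly the cost of that peeling.

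Fix an action chain $a_0,\ldots,a_{k-1}$ executed from $(s_0,W)$ with $s_i=f(s_{i-1},a_{i-1},W)$. Define the tail sums
$$A_m=\sum_{i=m}^{k-1}a_i,\qquad m=0,\ldots,k,$$
so that $A_0=\sum_{i=0}^{k-1}a_i$ and $A_k=0$. Consider the auxiliary points $t_m=f(s_m,A_m,W)$, which mean "walk honestly for $m$ steps, then apply the rest of the chain as one lumped action."
\begin{itemize}
\item The endpoints are $t_0=f(s_0,\sum_i a_i,W)$, the quantity appearing in Definition~\ref{def:linear_placement_errors}.
\item By the standing assumption $f(s,0,W)=s$, we get $t_k=f(s_k,0,W)=s_k$.
\end{itemize}
Hence the quantity to bound is $\|t_0-t_k\|$. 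The triangle inequality gives $\|t_0-t_k\|\le\sum_{m=0}^{k-1}\|t_m-t_{m+1}\|$.

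Each increment is handled by the hypothesis with $s=s_m$, $a=a_m$, and $a'=A_{m+1}$. Since $A_m=a_m+A_{m+1}$ and $s_{m+1}=f(s_m,a_m,W)$, we have
$$t_m-t_{m+1}=f(s_m,a_m+A_{m+1},W)-f\bigl(f(s_m,a_m,W),A_{m+1},W\bigr).$$
The hypothesis therefore bounds its norm by $L_f\|a_m\|$. Summing over $m$ yields $\|f(s_0,\sum_i a_i,W)-s_k\|\le L_f\sum_{i=0}^{k-1}\|a_i\|$, which is LPE with the same constant $L_f$. (Equivalently, one could run an induction on $k$ that peels off the first action, but the telescoping form makes the bookkeeping transparent.)

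The only delicate point, and the one I expect to need care, is admissibility. The hypothesis is stated for $a,a'\in\cA$, so I must ensure that the tail sums $A_{m+1}$ and the combined actions $a_m+A_{m+1}$ lie in $\cA$. I would handle this by noting that Definition~\ref{def:linear_placement_errors} already implicitly requires $f(s_0,\sum_i a_i,W)$ to be defined for the chain. I would then assume, as the paper does tacitly, that $\cA$ is closed under the partial sums involved, for instance $\cA=\RR^d$ or that $f$ is defined on all of $\RR^d$ in the action argument. If this is unavailable, the statement should be read as holding for chains whose tail sums are admissible, consistent with the check $\hat a_i\in\cA$ in Algorithm~\ref{alg:shortcut_computation}.
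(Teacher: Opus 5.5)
Your proposal is correct and is essentially the paper's proof: it uses the same tail sums $\tilde a_i=\sum_{j=i}^{k-1}a_j$ and auxiliary points $\tilde s_i=f(s_i,\tilde a_i,W)$, gets $\tilde s_k=s_k$ from $f(s,0,W)=s$, bounds each increment by $L_f\|a_i\|$ via the hypothesis, and finishes with the triangle inequality. Your remark that the tail sums must lie in $\cA$ is a legitimate caveat that the paper's proof leaves implicit.
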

\begin{proof}
    For $i\in\{0,\dots,k\}$, define the tail sums $\tilde a_i := \sum_{j=i}^{k-1} a_j$
and the states $\tilde s_i := f(s_i, \tilde a_i, W)$.
By definition $\tilde s_0 = f(s_0,a_0+\ldots+ a_{k-1},W)$ and,
since $\tilde a_k=0$ and $f(s,0,W)=s$, we also have $\tilde s_k = s_k$. Thus, we have to prove that
$\|\tilde s_0 - \tilde s_k\| \le L_f \sum_{i=0}^{k-1} \|a_i\|$.
Now, for any $i\in \{0,\dots,k-1\}$ we have
$$\|\tilde s_i - \tilde s_{i+1}\|
  = \|f(s_i,a_i+\tilde a_{i+1},W) - f(s_{i+1},\tilde a_{i+1},W)\|
  \;\le\; L_f \|a_i\|.$$
because of the assumptions on $f$ from the statement of the proposition.
Summing these inequalities and applying the triangle inequality yields
$$\|\tilde s_0 - \tilde s_k\|
  \le\sum_{i=0}^{k-1} \|\tilde s_i - \tilde s_{i+1}\|
  \le L_f \sum_{i=0}^{k-1} \|a_i\|.$$
\end{proof}

LPE and the proposition of Proposition~\ref{prop:linear_placement_erros} are not equivalent: Consider
$f(s, a)=s+\text{sign}(s)\cdot a$. Then its easy to show that $f$ has linear-placement errors with
$L_f=2$, but it does not have the property from Proposition~\ref{prop:linear_placement_erros}.

\begin{prop}
    The distortion $f_{\mathrm{blend}}$ has LPE with $L_{f_{\text{blend}}}=0$.
\end{prop}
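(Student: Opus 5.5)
We will verify the sufficient condition of Proposition~\ref{prop:linear_placement_erros} with constant $L_f=0$; LPE with $L_{f_{\text{blend}}}=0$ then follows directly from that proposition. Write $M:=I_{d\times d}+W$, so that $f_{\text{blend}}(s,a,W)=s+M\cdot a$. This has exactly the form $f(s,a,W)=s+W'\cdot a$ with the constant matrix $W'=M$, so the same computation as in the proof of Proposition~\ref{prop:shortcuts_for_linear_dynamics} applies.

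Fix a state $(s,W)$ and actions $a,a'\in\cA$. The key step is to compute both sides of the defining inequality using linearity of matrix multiplication:
\begin{align*}
f(f(s,a,W),a',W) &= (s+M a)+M a' = s+M(a+a') = f(s,a+a',W).
\end{align*}
Hence $\|f(s,a+a',W)-f(f(s,a,W),a',W)\|=0\le 0\cdot\|a\|$, which is the hypothesis of Proposition~\ref{prop:linear_placement_erros} with $L_f=0$. Note also that $f_{\text{blend}}(s,0,W)=s$ holds, which that proposition's proof uses.

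Alternatively, one can argue directly from Definition~\ref{def:linear_placement_errors}. Unravel the chain to get $s_k=s_0+\sum_{i=0}^{k-1}M a_i=s_0+M\sum_i a_i=f(s_0,\sum_i a_i,W)$. The placement error is therefore identically zero.

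There is essentially no obstacle here. The only points needing care are that $M$ does not depend on $s$, since $W$ is fixed within an episode, and that the sum $a+a'$ is treated algebraically in $\RR^d$, so no requirement $a+a'\in\cA$ is needed for the identity itself. If $f$ is only defined on $\cA$, I would remark that the identity extends trivially because $f_{\text{blend}}$ is given by a formula valid on all of $\RR^d$.
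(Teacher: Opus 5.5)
Your proposal is correct and follows the paper's proof, which is just "straight-forward application of Proposition~\ref{prop:linear_placement_erros}": your identity $f(f(s,a,W),a',W)=f(s,a+a',W)$ is exactly what makes that application work with $L_f=0$. Your remark that the identity holds on all of $\RR^d$ is also a useful addition. It covers the tail sums in that proposition's proof, which need not lie in $\cA$.
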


\begin{proof}
    Straight-forward application of Proposition~\ref{prop:linear_placement_erros}.
\end{proof}

\begin{prop}
    The distortion $f_{\mathrm{rot}}$ has LPE with $L_{f_{\text{rot}}}=0$.
\end{prop}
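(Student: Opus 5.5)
The plan is to reduce the claim to Proposition~\ref{prop:linear_placement_erros}, exactly as was done for $f_{\mathrm{blend}}$. That proposition says LPE with constant $L_f$ follows from the one-step regrouping bound
\[
\|f(s, a+a', W) - f(f(s, a, W), a', W)\|\le L_f\|a\| \quad \text{for all } s,\ a,\ a'.
\]
It therefore suffices to show that the left-hand side vanishes identically for $f_{\mathrm{rot}}$; then the hypothesis holds with $L_f=0$.

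The computation is direct. The matrix $\text{Rot}_W$ does not depend on $s$ or $a$, only on the episode context $W$, so $f_{\mathrm{rot}}$ is affine in the action with a state-independent linear part. Hence
\[
f(f(s,a,W),a',W) = s + \text{Rot}_W a + \text{Rot}_W a'
\]
and
\[
f(s,a+a',W) = s + \text{Rot}_W(a+a').
\]
By linearity of matrix multiplication these coincide, so the difference has norm $0\le 0\cdot\|a\|$.

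Applying Proposition~\ref{prop:linear_placement_erros} then gives LPE with $L_{f_{\mathrm{rot}}}=0$. Alternatively, one could unroll the chain as in the proof of Proposition~\ref{prop:shortcuts_for_linear_dynamics}: there $s_k = s_0 + \text{Rot}_W\sum_i a_i = f(s_0,\sum_i a_i, W)$, so the placement error is exactly zero.

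There is no real obstacle here. The only points to check are that the block-diagonal structure (including the case of odd $d$, if a trailing identity block is used) still yields a fixed matrix, and that the required condition $f(s,0,W)=s$ holds. Both are immediate.
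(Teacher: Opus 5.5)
Your proposal is correct and matches the paper's proof, which is a one-line appeal to Proposition~\ref{prop:linear_placement_erros}; you supply the omitted check that $f_{\mathrm{rot}}(s,a+a',W)=f_{\mathrm{rot}}(f_{\mathrm{rot}}(s,a,W),a',W)$, which holds because $\text{Rot}_W$ does not depend on $s$ or $a$. Your remark about odd $d$ is a fair extra: the paper's block-diagonal definition tacitly assumes even $d$ although experiments use $d=5$, and any fixed completion (e.g.\ a trailing identity block) still gives zero regrouping error.
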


\begin{proof}
    Straight-forward application of Proposition~\ref{prop:linear_placement_erros}.
\end{proof}

\begin{prop} The distortion $f_{\mathrm{scale}}$ has LPE with $L_{f_{\text{scale}}}=2\cdot\lambda$.
\end{prop}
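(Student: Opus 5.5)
The natural route is to view $f_{\mathrm{scale}}$ as a special case of Proposition~\ref{prop:lpe_for_bounded_transform}. Write
$$f_{\mathrm{scale}}(s,a,W)=s+g(s,W)\cdot a,\qquad g(s,W):=\mathrm{clip}_{C,\lambda}\bigl(\|s-s_W\|\bigr)\cdot I_{d\times d}.$$
The distortion then has exactly the form $s+g(s,W)a$, where $g$ depends only on the state and the context, not on the action. This is the key structural point. The action-dependent case (such as $f_{\mathrm{sqrt}}$) is where LPE can fail, so I will verify explicitly that no action dependence enters through $g$.

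The first step is to bound $g$. Since the clip function maps into $[C,\lambda]$ with $0<C<\lambda$, and the induced operator norm of a scalar multiple of the identity equals the absolute value of the scalar, we get
$$\|g(s,W)\|=\mathrm{clip}_{C,\lambda}\bigl(\|s-s_W\|\bigr)\le\lambda$$
for all $(s,W)$. Hence $\sup_{\cS}\|g\|\le\lambda$. Proposition~\ref{prop:lpe_for_bounded_transform} then gives LPE with constant $2\sup\|g\|\le 2\lambda$. Since LPE with a constant $L$ implies LPE with any larger constant, this yields LPE with $L_{f_{\mathrm{scale}}}=2\lambda$ as claimed.

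If one prefers a self-contained argument, the proof of Proposition~\ref{prop:lpe_for_bounded_transform} can be rerun directly. Unrolling the recursion gives $s_k=s_0+\sum_{i=0}^{k-1}c_i a_i$ with $c_i=\mathrm{clip}_{C,\lambda}(\|s_i-s_W\|)$. Therefore
$$f(s_0,\textstyle\sum_i a_i,W)-s_k=\sum_i (c_0-c_i)\,a_i,$$
and $|c_0-c_i|\le\lambda$. This in fact gives the sharper constant $\lambda-C$, since both scalars lie in $[C,\lambda]$, which is in particular at most $2\lambda$.

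The only real obstacle is bookkeeping. One must confirm that the paper's $\mathrm{clip}_{C,\lambda}$ denotes clipping into the interval $[C,\lambda]$, so that the scalar factor is bounded by $\lambda$. One must also ensure that the matrix-norm convention used in Proposition~\ref{prop:lpe_for_bounded_transform} makes $\|cI\|=|c|$, which holds for any induced norm.
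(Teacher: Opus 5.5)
Your proposal is correct and is the paper's own argument: write $f_{\mathrm{scale}}(s,a,W)=s+g(s,W)a$ with $g(s,W)=\mathrm{clip}_{C,\lambda}(\|s-s_W\|)\,I_{d\times d}$, observe $\sup\|g\|\le\lambda$, and apply Proposition~\ref{prop:lpe_for_bounded_transform}. Your direct unrolling is also valid and gives the sharper constant $\lambda-C$, because both scalars $c_0$ and $c_i$ lie in $[C,\lambda]$.
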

\begin{proof}
We write $f_{\mathrm{scale}}(s,a,W)=s+g(s, W)\cdot a$ with
$g(s, W)=\mathrm{clip}_{C,\lambda}(\|s-s_W\|)\cdot I_d$ with $I_d$ the identity function of
$\RR^{d\times d}$. Clearly $g$ is bounded and we have $\sup_{\cS\times\cW}\|g\|=\lambda$ and the claim follows by an application of
Proposition~\ref{prop:lpe_for_bounded_transform}.
\end{proof}

\begin{prop}
The distortion $f_{\text{regrot}}$ has LPE with $L_{f_{\text{regrot}}}=2$.
\end{prop}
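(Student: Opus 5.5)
The plan is to write $f_{\text{regrot}}$ in the form $f(s,a,W)=s+g(s,W)\cdot a$ with
$$g(s,W)=\sum_{i=1}^c \mathbf{1}_{s\in\cP_i}\cdot\text{Rot}_{W_i},$$
and then apply Proposition~\ref{prop:lpe_for_bounded_transform}. The essential observation is that the distortion depends only on the state (through the region indicator) and on the context $W$, never on the action. So, despite the discontinuity across region boundaries, $f_{\text{regrot}}$ falls exactly into the class covered by that proposition. The proof of Proposition~\ref{prop:lpe_for_bounded_transform} uses only boundedness of $g$ and no continuity, which is why the discontinuity causes no trouble.

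The steps are as follows.

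\begin{enumerate}
\item Verify that $g$ is well defined. Since the $\cP_i$ partition $\cP$, exactly one indicator equals $1$ at each $s$. Hence $g(s,W)=\text{Rot}_{W_{i(s)}}$, where $i(s)$ is the unique index with $s\in\cP_{i(s)}$.
\item Bound the operator norm of $g$. Each $\text{Rot}_{W_i}=\text{diag}(R_{W_i},\ldots,R_{W_i})$ is block diagonal with $2\times2$ rotation blocks. It is therefore orthogonal, so in the induced Euclidean norm $\|\text{Rot}_{W_i}\|=1$ for every value of $W_i$. Consequently $\sup_{\cS\times\cW}\|g\|=1$.
\item Invoke Proposition~\ref{prop:lpe_for_bounded_transform}, which gives LPE with $L_{f_{\text{regrot}}}=2\cdot 1=2$.
\end{enumerate}

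The main point to check carefully is step~2. The norm must be the induced $\ell_2$ operator norm used in the proof of Proposition~\ref{prop:lpe_for_bounded_transform}; orthogonality gives norm exactly $1$ there, whereas another matrix norm such as Frobenius would not. A minor additional detail arises when $d$ is odd: the block-diagonal construction then needs a trailing $1$ or similar padding, which is still orthogonal, so the bound is unaffected.

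As a sanity check, one can also argue directly. From the telescoping identity in the proof of Proposition~\ref{prop:lpe_for_bounded_transform},
$$f(s_0,A,W)-s_k=\sum_{i}\bigl(g(s_0,W)-g(s_i,W)\bigr)a_i,$$
and $\|\text{Rot}_{W_p}-\text{Rot}_{W_q}\|\le 2$ for all $p,q$. This reproduces the constant $2$ without any further assumptions on the regions.
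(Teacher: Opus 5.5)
Your proposal is correct and follows the paper's proof: you write $f_{\text{regrot}}$ as $s+g(s,W)\cdot a$ with $g(s,W)=\text{Rot}_{W_i}$ on $\cP_i$, note that each rotation matrix has induced norm $1$, and apply Proposition~\ref{prop:lpe_for_bounded_transform} to get $L_{f_{\text{regrot}}}=2$. Your remarks that the norm must be the induced operator norm, and that the block construction needs padding when $d$ is odd, are sensible refinements that the paper leaves implicit.
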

\begin{proof}
We write
$f_{\mathrm{regrot}}(s,a,W)=s+g(s, W)\cdot a$ with $g(s,W)=\mathrm{Rot}_{W_i}$ whenever $s\in\mathcal
\cP_i$, where $\cP_1,\ldots,\cP_c$ are the partitions of $\cS$ from
Section~\ref{s:movement_distortion}. For every state $(s, W)$, $g(s, W)$ is a rotation matrix and
thus $\|g(s, W)\|=1$ and $g$ statisfies the the claim follows from 
Proposition~\ref{prop:lpe_for_bounded_transform}.
\end{proof}

\begin{prop}
The distortion $f_{\mathrm{sin}}$ 
has LPE with $L_{f_{\text{sin}}}=\sqrt{d}\sigma$.
\end{prop}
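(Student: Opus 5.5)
The plan is to verify Definition~\ref{def:linear_placement_errors} directly, mirroring the proof of Proposition~\ref{prop:lpe_for_bounded_transform}, rather than going through Proposition~\ref{prop:linear_placement_erros}. The stronger two-step property looks awkward here. The offset term $W\,h(s_1)\|a'\|$, with $h(s):=\sin(s)\circ\cos(s)$, produces a difference $h(s)-h(s_1)$ multiplied by $\|a'\|$. That term is naturally bounded by $\|a\|\cdot\|a'\|$, not by $\|a\|$ alone. The direct route avoids this issue entirely.

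Fix $W\in[0,\sigma]$, a start $s_0$, an action chain $a_0,\ldots,a_{k-1}$ with $s_i=f_{\mathrm{sin}}(s_{i-1},a_{i-1},W)$, and set $A=\sum_{i=0}^{k-1}a_i$. Unrolling the recursion gives
$s_k=s_0+A+W\sum_{i=0}^{k-1}h(s_i)\|a_i\|$.
Applying a single step with the summed action gives
$f_{\mathrm{sin}}(s_0,A,W)=s_0+A+W\,h(s_0)\|A\|$.
The linear parts cancel, leaving
\[
f_{\mathrm{sin}}(s_0,A,W)-s_k \;=\; W\Bigl(h(s_0)\|A\| - \sum_{i=0}^{k-1} h(s_i)\|a_i\|\Bigr).
\]

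Next I bound $h$ uniformly. Componentwise $\sin(x)\cos(x)=\tfrac12\sin(2x)\in[-\tfrac12,\tfrac12]$, so $\|h(s)\|\le \tfrac{\sqrt d}{2}$ for every $s$. Using the triangle inequality, $0\le W\le\sigma$, and $\|A\|\le\sum_i\|a_i\|$, I get
\[
\|f_{\mathrm{sin}}(s_0,A,W)-s_k\|\le \sigma\tfrac{\sqrt d}{2}\|A\|+\sigma\tfrac{\sqrt d}{2}\sum_{i}\|a_i\|\le \sigma\sqrt d\sum_{i=0}^{k-1}\|a_i\|,
\]
which is LPE with $L_{f_{\mathrm{sin}}}=\sqrt d\,\sigma$.

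The only real subtlety is the obstacle noted at the start: choosing the direct unrolling over the two-step criterion. Once that choice is made, the argument is the same bounded-coefficient estimate as in Proposition~\ref{prop:lpe_for_bounded_transform}. The one difference is that the bounded quantity multiplies $\|a\|$ rather than $a$, which is harmless for norm bounds.
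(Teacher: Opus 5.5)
Your proof is correct and is essentially the paper's argument: unroll the recursion, cancel the linear parts so that only $W\bigl(h(s_0)\|A\| - \sum_i h(s_i)\|a_i\|\bigr)$ remains, then apply a uniform bound on the offset coefficient together with $\|A\|\le\sum_i\|a_i\|$. Your sharper estimate $|\sin x\cos x|\le\tfrac12$ is what actually makes the constant $\sqrt d\,\sigma$ come out, since the paper bounds only $\|g(s,W)\|\le\sigma\sqrt d$, and applying that to both terms would strictly justify only $2\sigma\sqrt d$.
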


\begin{proof}
Let $f_{\mathrm{sin}}(s,a,W)=s+a+g(s)\cdot\|a\|$ with
$g(s, W):=W\cdot\sin(s)\odot\cos(s)$. 
Although we cannot apply Proposition~\ref{prop:lpe_for_bounded_transform} as $f_{\mathrm{sin}}$ has
not the desired form, we can follow a similar strategy. First, we observe that $g$ is bounded:
$$\|g(s, W)\|=|W|\cdot\sqrt{\sum_{i=1}^d\sin(s_i)^2\cdot\cos(s_i)^2}\le \sigma \sqrt{d}$$
because $W\sim\mathcal{U}(0,\sigma)$.
Let $a_0,\ldots,a_{k-1}$ be a chain of actions and set $A=\sum_{i=1}^{k-1}a_i$ and $s_i=f(s_{i-1},
a_{i-1}, W)$, then
$$
f_{\mathrm{sin}}(s_0,A,W)-s_k
= A+g(s_0, W)\|A\|
   -\sum_{i=0}^{k-1}\Bigl(a_i+g(s_i, W)\|a_i\|\Bigr)
= g(s_0, W)\|A\|-\sum_{i=0}^{k-1}g(s_i, W)\|a_i\|$$
and thus:
$$
\|f_{\mathrm{sin}}(s_0,A,W)-s_k\|\le\|g(s_0, W)\|A\|+\sum_{i=0}^{k-1}\|g(s_i,
W)\|a_i\|\le\sigma\sqrt{d}\sum_{i=0}^{k-1}\|a_i\|
$$
because $\|A\|\le \sum_{i=0}^{k-1}\|a_i\|$ by the triangle inequality.
\end{proof}

Next, we show that $f_{\mathrm{sqrt}}$ is not LPE:

\begin{prop}
The distortion $f_{\mathrm{sqrt}}$ 
does not have LPE.
\end{prop}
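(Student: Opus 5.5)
The plan is to argue by contradiction. Suppose $f_{\mathrm{sqrt}}$ had LPE with some finite constant $L_f$. I will exhibit, inside the bounded position set $\cP$ and using only short actions from $\cA$, a family of action-chains whose regrouping error divided by path length is unbounded. The construction exploits two features of $f_{\mathrm{sqrt}}$: the superlinear factor $\sqrt{\|a\|}\,a$, and the fact that $L_f$ must be uniform over all contexts $W$ with $W\sim\mathcal{N}_{d\times d}(0,\sigma)$. The support of this Gaussian is all of $\RR^{d\times d}$, so in particular it contains $W=cI_{d\times d}$ for every $c>0$.

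\textbf{Step 1: collinear chains.} Fix a unit vector $u$ and an interior point $s_0\in\cP$. Take the chain of $k=4$ equal actions $a_i=\frac{T}{4}u$, with $T>0$ small enough that $\frac{T}{4}u\in\cA$ and $Tu\in\cA$. Write $M=I+W$. Each step moves the state by the same vector $M\sqrt{T/4}\,\tfrac{T}{4}u$, regardless of the current state. Hence
\[
s_4=s_0+4\Bigl(\tfrac{T}{4}\Bigr)^{3/2}Mu=s_0+\tfrac{1}{2}T^{3/2}Mu.
\]
The grouped action $A=Tu$ gives $f(s_0,A,W)=s_0+T^{3/2}Mu$. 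The path length is $\sum\|a_i\|=T$. Therefore
\[
\frac{\|f(s_0,A,W)-s_4\|}{\sum_i\|a_i\|}=\tfrac12\,T^{1/2}\,\|Mu\|.
\]

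\textbf{Step 2: send the distortion to infinity while staying in $\cP$.} Choose $W=cI_{d\times d}$, so that $\|Mu\|=1+c$. To keep every intermediate state and $f(s_0,A,W)$ inside $\cP$, I fix a small $\delta>0$ such that the ball of radius $\delta$ around $s_0$ lies in $\cP$. I then couple $T$ to $c$ via $(1+c)T^{3/2}=\delta$, i.e.\ $T=(\delta/(1+c))^{2/3}$. Every displacement in the construction is at most $(1+c)T^{3/2}=\delta$, and $T\to0$ as $c\to\infty$, so the actions are eventually in $\cA$. Substituting, the ratio from Step 1 becomes $\tfrac12(1+c)^{2/3}\delta^{1/3}$. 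This tends to $\infty$ as $c\to\infty$, so no finite $L_f$ can satisfy Definition~\ref{def:linear_placement_errors}.

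\textbf{Expected obstacle.} The delicate point is ensuring the counterexample is genuinely admissible: all states must lie in the bounded $\cP$, all actions (including the summed one) must lie in $\cA$, and the context must lie in $\cW$. The coupling in Step 2 is designed for exactly this. The remaining thing to confirm is that $\cW$ for $f_{\mathrm{sqrt}}$ is the full Gaussian support. Ratios with fixed $W$ are bounded by $\|M\|\sqrt{\operatorname{diam}\cA}$, so unboundedness must come from the context.

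If one instead insists on a bounded context set, I would try a complementary route. Let the step size $\varepsilon\to0$ with $k\varepsilon=T$ fixed or growing. The chain then barely moves, with displacement $T\varepsilon^{1/2}\,\|Mu\|$, while the grouped action moves by $T^{3/2}\|Mu\|$. The ratio is then $\|Mu\|(T^{1/2}-\varepsilon^{1/2})$, which is unbounded only if the evaluation $f(s_0,\sum a_i,W)$ is allowed for large $T$. Whether that is permitted depends on whether Definition~\ref{def:linear_placement_errors} demands $\sum a_i\in\cA$, so I would state the result via the context-based argument above.
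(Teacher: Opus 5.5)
Your argument is correct, but it takes a different route from the paper's.

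\textbf{The paper's route.} The paper fixes the context $W=0$ and the start $s_0=0$, and takes two equal actions $a_0=a_1=cv$. It computes $\|f(s_0,a_0+a_1,0)-s_2\|=(2\sqrt2-2)\,c^{3/2}$ against a path length of $2c$, so the intended blow-up comes from growing the action magnitude $c$. As written, however, the paper also imposes $c\le\lambda$. Under that restriction the error-to-length ratio $(\sqrt2-1)\sqrt{c}$ stays below $(\sqrt2-1)\sqrt{\lambda}$. The computation therefore only gives a contradiction if $c$ may grow without bound, i.e.\ outside a bounded $\cA$. Your side remark, that for fixed $W$ and bounded $\cA$ the ratio is bounded by a multiple of $\|I+W\|$, is exactly why no fixed-context argument can work in the bounded setting.

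\textbf{Your route.} You keep every action small and every state inside a $\delta$-ball in $\cP$. The divergence then comes from the unbounded support of the Gaussian context. Coupling $T=(\delta/(1+c))^{2/3}$ makes the ratio $\tfrac12\,\delta^{1/3}(1+c)^{2/3}$ diverge. This is admissible under the paper's definitions. LPE requires a single $L_f$ that works for all $W$, and the paper's own $f_{\mathrm{sin}}$ proof relies on the bounded context support $W\sim\mathcal{U}(0,\sigma)$.

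\textbf{Trade-off.} The paper's computation isolates the $\sqrt{\|a\|}$ superlinearity, which is the intuition the authors want to convey, but it needs unbounded actions to close. Your argument is rigorous with bounded $\cP$ and $\cA$. Its cost is that it shows the proposition rests on the unbounded context distribution. If contexts were bounded and the summed action had to lie in $\cA=B_\lambda(0)$, then $f_{\mathrm{sqrt}}$ would satisfy LPE, with a constant of order $\sqrt{\lambda}\,\sup\|I+W\|$.

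Your implicit assumptions are harmless: that $\cP$ has nonempty interior and that $\cA$ contains a neighborhood of $0$ (true for $\cA=B_\lambda(0)$).
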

\begin{proof}
Let $v\in\RR^d$ be a unit vector and let $a_0=a_1=c\cdot v$ with $c\le \lambda$. 
Let $(0, 0)\in\RR^d\times\RR^{d\times d}$ be an initial state, then
$s_1=f_{\mathrm{sqrt}}(0, a_0, 0)=\sqrt{c}\cdot c\cdot v$
and
$s_2=f_{\mathrm{sqrt}}(s_1, a_1, 0)=2\sqrt{c}\cdot c\cdot v$. Moreover, we have
$f(s_0, a_0+a_1, 0)=f(0, 2\cdot c\cdot v, 0)=2\sqrt{2c}\cdot c\cdot v$ and hence
$$\|f(s_0, a_0+a_1, 0)-s_2\|=(2\sqrt{2}-2)\cdot \sqrt{c}\cdot c.$$
which cannot be bounded by $L_f\cdot (\|a_0\|+\|a_1\|)=2\cdot L_f\cdot c$ for any constant $L_f$.
\end{proof}

\subsection{Contractions and Lipschitz-continuity in real-world applications}\label{app:contractions}

We do not expect that policies and distortions from real-world applications satisfy the rigorous
mathematical assumptions stated in Section~\ref{sec:theory}.  Pedantically, even simple modeling
choices already break global smoothness: for instance, having $\cA=B_\lambda(0)$ with $\cA$ a strict
subset of $S$, combined with an optimality region defined by a threshold $\theta$, induces
discontinuities in the value function. The same holds for the coordinate walk policy in
Section~\ref{s:logging_policy}, where a fixed step length produces value functions with sharp
discontinuities, as shown in Figure~\ref{f:value_functions}.

Nevertheless, global mathematical rigor is not required to detect local shortcuts in real
trajectories. A striking example is the coordinate walk under $f_{\text{regrot}}$: since different
rotations apply in different regions, the policy is not an $f$-contraction globally, because nearby
states $s$ and $s’$ lying in different regions $\cP_i$ and $\cP_j$ may be rotated in different
directions (Figure~\ref{fig:contraction_failure}). Yet, for states within same region where the
coordinate walk applies same actions, the contraction property is preserved
(Figure~\ref{fig:contraction_sucess}). This illustrates that shortcut identification relies less on
global guarantees and more on local structure along trajectory segments.

Informally speaking, it suffices that the value function does not change too abruptly for small
misplacements, so that local improvements can be exploited as shortcuts. In practice, this condition
is often met: physical systems typically exhibit continuity over small ranges of motion, even if
discontinuities or non-contractive behavior emerge globally. Hence, while our theoretical
assumptions provide clean guarantees, the underlying ideas remain applicable well beyond the
idealized setting as demonstrated by our experiments in Section~\ref{sec:experiments}.

\begin{figure}[ht]
    \centering
    \subfloat[$f_{\text{regrot}}$ in different regions]{\label{fig:contraction_failure}
    \begin{tikzpicture}

        \node[draw, inner sep=2pt, circle, fill=black] at (-0.5,0) (s0) {};
        \node[draw, inner sep=2pt, circle, fill=black] at (-1.5,1.4) (s2) {};
        \node[draw, inner sep=2pt, circle, fill=black] at (0.5,0.0) (s1) {};
        \node[draw, inner sep=2pt, circle, fill=black] at (1.5,1.4) (s3) {};

        \node[below=0.5cm of s0,anchor=south] () {$s$};
        \node[below=0.5cm of s1,anchor=south] () {$s'$};

        \draw[->,thick] (s0) to node[midway, fill=white, scale=0.8] {$\pi(O(s, W))$} (s2);
        \draw[->,thick] (s1) to node[midway, fill=white, scale=0.8] {$\pi(O(s', W))$} (s3);

        \node[anchor=west] at (-1,1.8) () {$\cP_i$};
        \node[anchor=east] at (1,1.8) () {$\cP_j$};
        \draw[dashed, thin] (0, -0.7) to (0, 2);

        \draw[dotted] (s0) to (s1);
        \draw[dotted] (s2) to (s3);

    \end{tikzpicture}
}
\hspace{1cm}
    \subfloat[$f_{\text{regrot}}$ in same partition]{\label{fig:contraction_sucess}
    \begin{tikzpicture}

        \node[draw, inner sep=2pt, circle, fill=black] at (-0.5,0) (s0) {};
        \node[draw, inner sep=2pt, circle, fill=black] at (2.0,0.0) (s1) {};
        \node[draw, inner sep=2pt, circle, fill=black] at (-1.5,1.5) (s2) {};
        \node[draw, inner sep=2pt, circle, fill=black] at (1,1.5) (s3) {};

        \node[below=0.5cm of s0,anchor=south] () {$s$};
        \node[below=0.5cm of s1,anchor=south] () {$s'$};

        \draw[->,thick] (s0) to node[midway, fill=white, scale=0.8] {$\pi(O(s, W))$} (s2);
        \draw[->,thick] (s1) to node[midway, fill=white, scale=0.8] {$\pi(O(s', W))$} (s3);
        \node[anchor=west] at (-1,2) () {$\cP_i$};

        \draw[dotted] (s0) to (s1);
        \draw[dotted] (s2) to (s3);

    \end{tikzpicture}
}
    \caption{In $f_{\text{regrot}}$, starting at two close-by states $s$ and $s'$ in different
    regions $\cP_1$ and $\cP_2$ can increase the distance between subsequent states as opposed
rotation matrices apply.}
\end{figure}

\begin{figure}[ht]
    \centering
    \includegraphics[width=\textwidth]{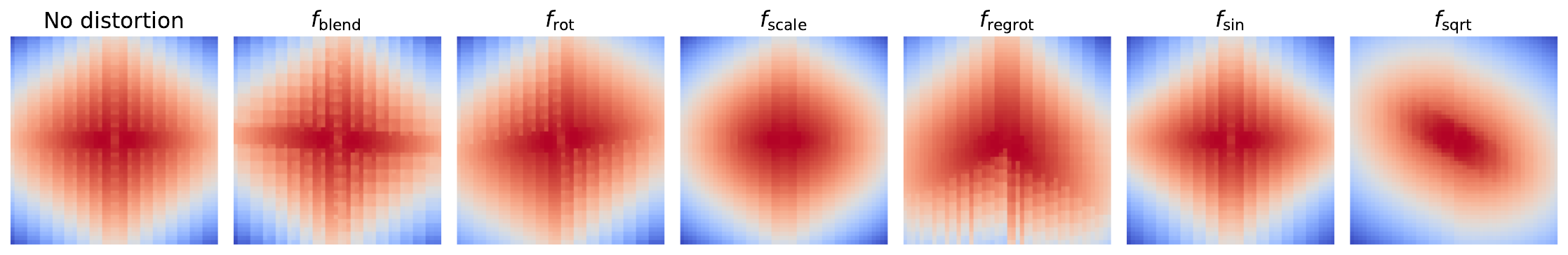}
    \caption{Value functions $V^\pi(\cdot, W)$ of coordinate walk for a random but fixed context~$W$
    each.}\label{f:value_functions}
\end{figure}

\section{Additional details for structured logging policies}\label{app:logging_policy}

This section provides additional details on the coordinate walk policy $\cw{l}$ introduced in
Section~\ref{s:logging_policy} and some insights on optimal policies for active positioning tasks. Figure~\ref{f:step_size_effect} illustrates how the step size $l$
impacts the expertness of the coordinate walk policy in terms of the average number of steps to
reach $s_W$. As designed, smaller step sizes lead to more expert behavior.

\begin{figure}[H]
\centering
    \includegraphics[height=5cm]{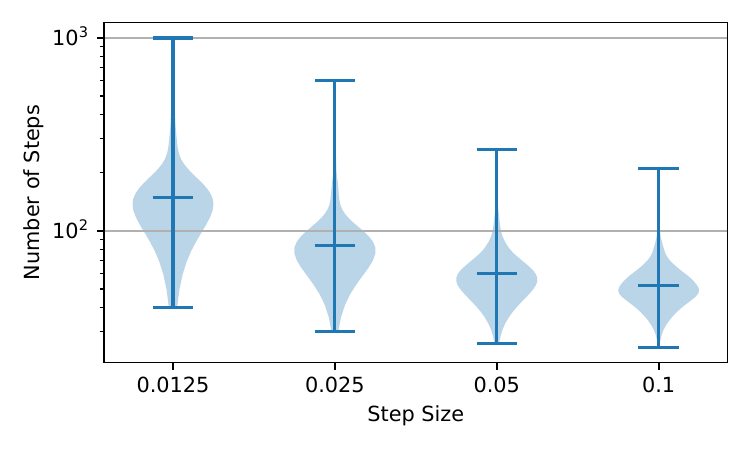}
    \caption{Expertness of $\cw{l}$.}\label{f:step_size_effect}
\end{figure}

The coordinate walk policy interacts quite differently with the various movement distortions.
Figure~\ref{fig:example_trajectories} shows example trajectories of the coordinate policy for
different movement distortions. There, we also compare to a direct policy
$\pi_{\text{direct}}$ that always takes the largest possible step $\mathrm{clip}_\lambda (s_W-s)$
towards the goal.
\begin{figure}[H]
    \centering

    \subfloat[Direct policy in $f_{\text{blend}}$]{
    \includegraphics[width=0.45\textwidth]{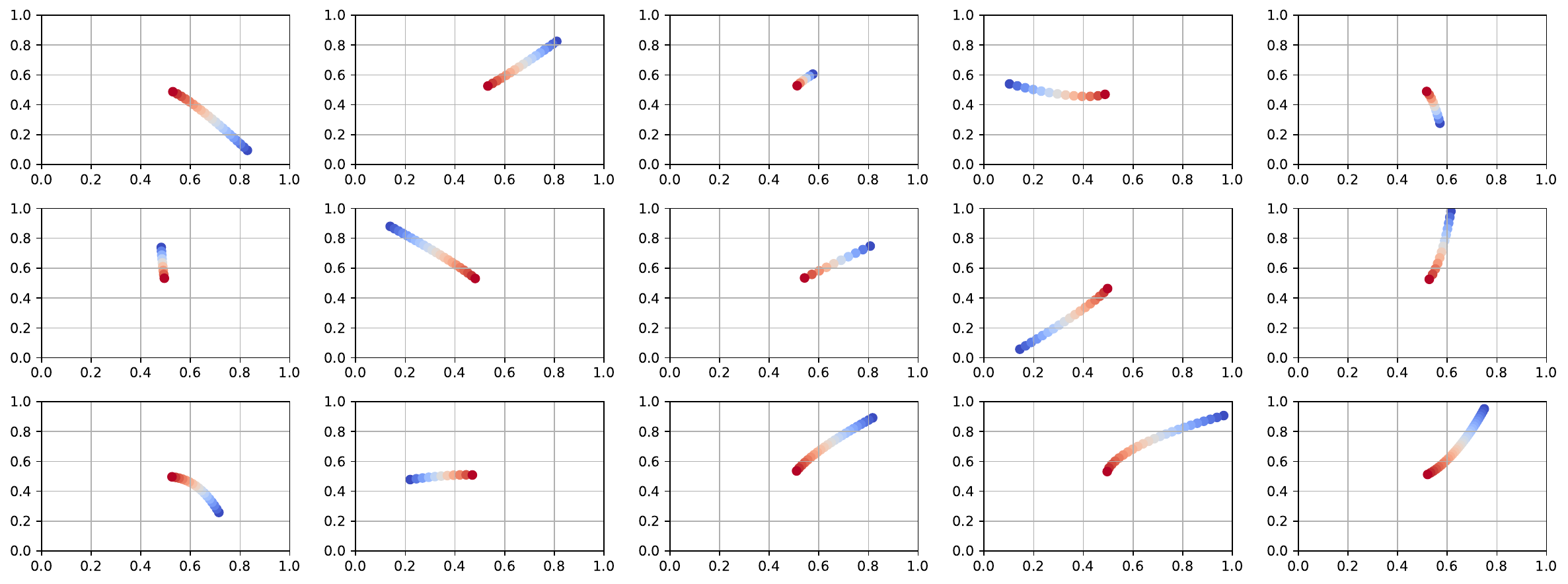}
    }
    \hfill
    \subfloat[Coordinate walk in $f_{\text{blend}}$]{
    \includegraphics[width=0.45\textwidth]{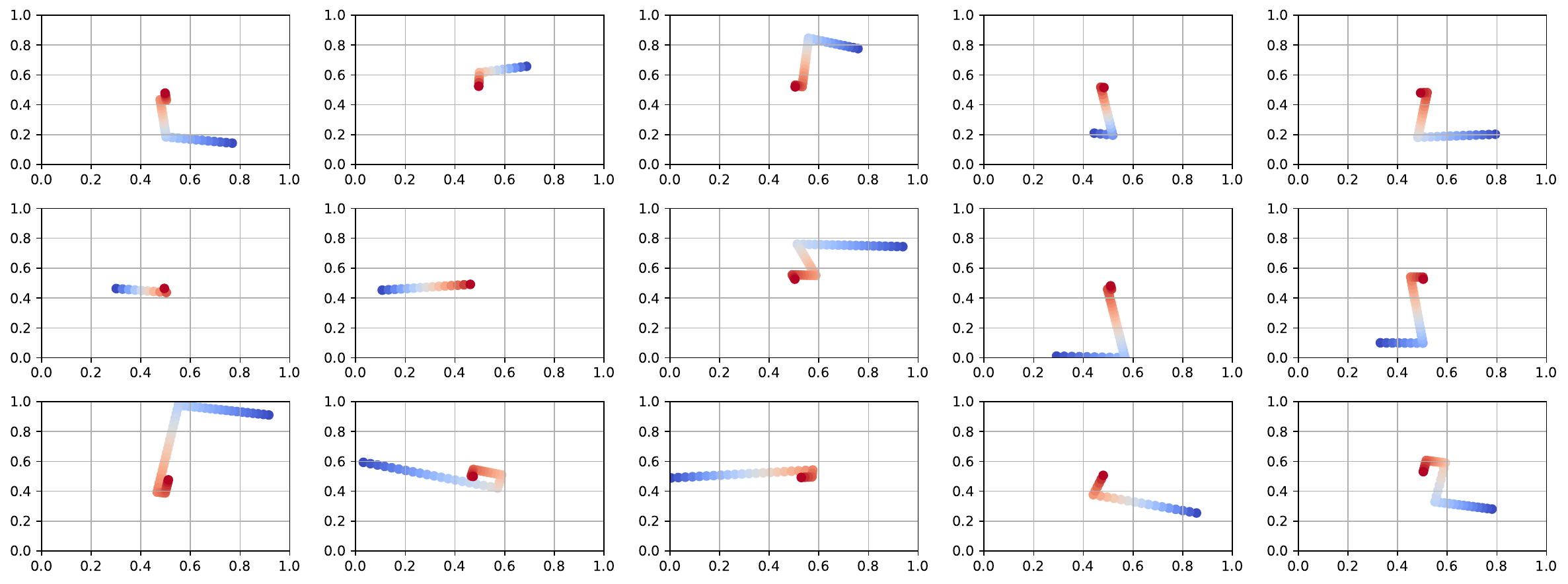}
    }

    \subfloat[Direct policy in $f_{\text{scale}}$]{
    \includegraphics[width=0.45\textwidth]{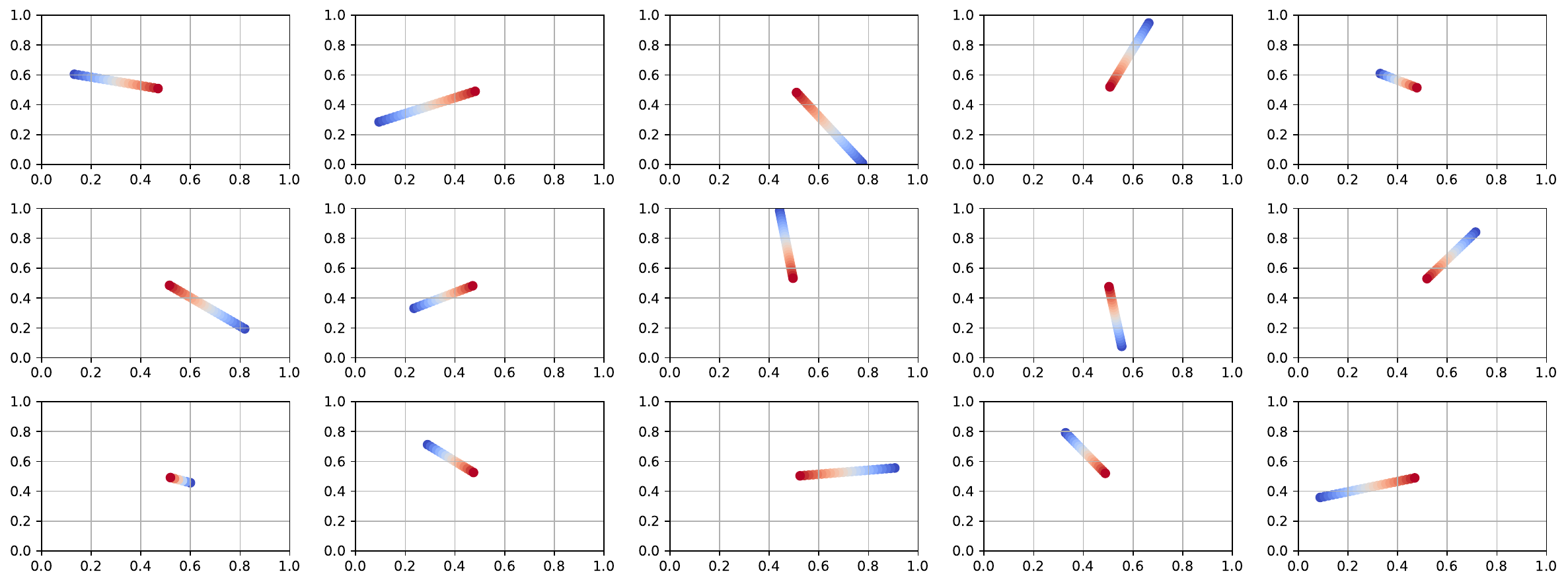}
    }
    \hfill
    \subfloat[Coordinate walk in $f_{\text{scale}}$]{
    \includegraphics[width=0.45\textwidth]{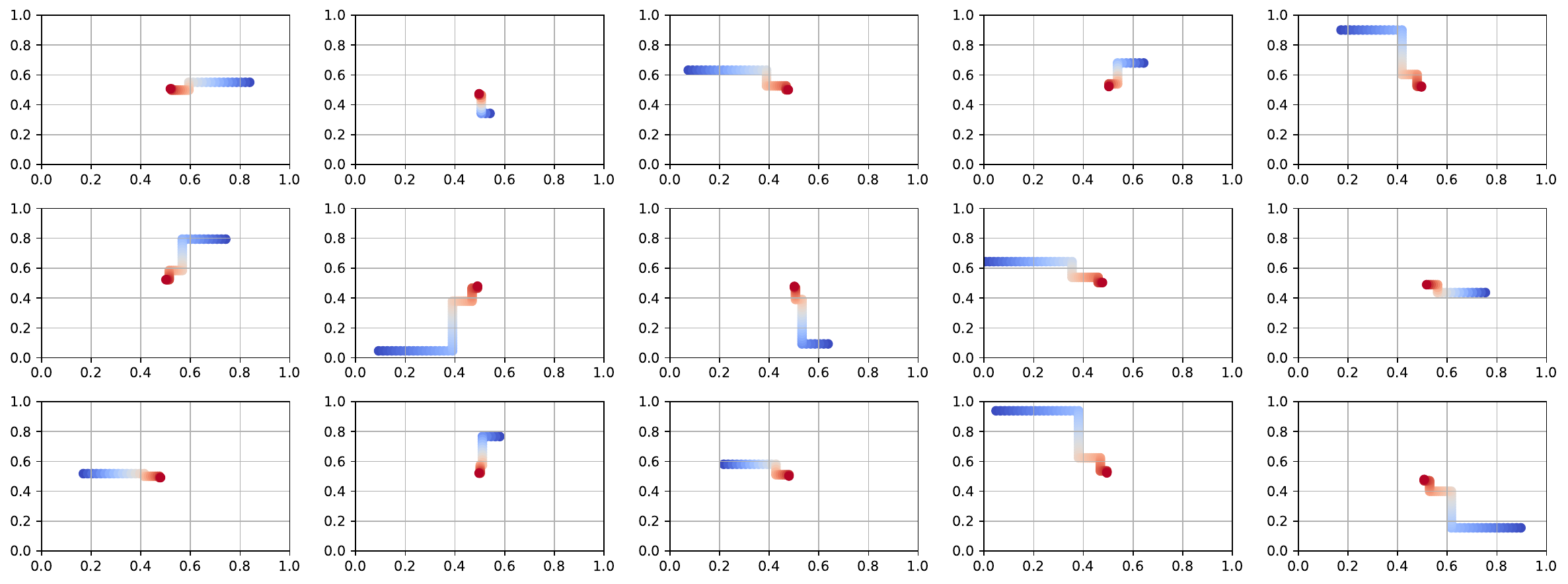}
    }

    \subfloat[Direct policy in $f_{\text{rot}}$]{
    \includegraphics[width=0.45\textwidth]{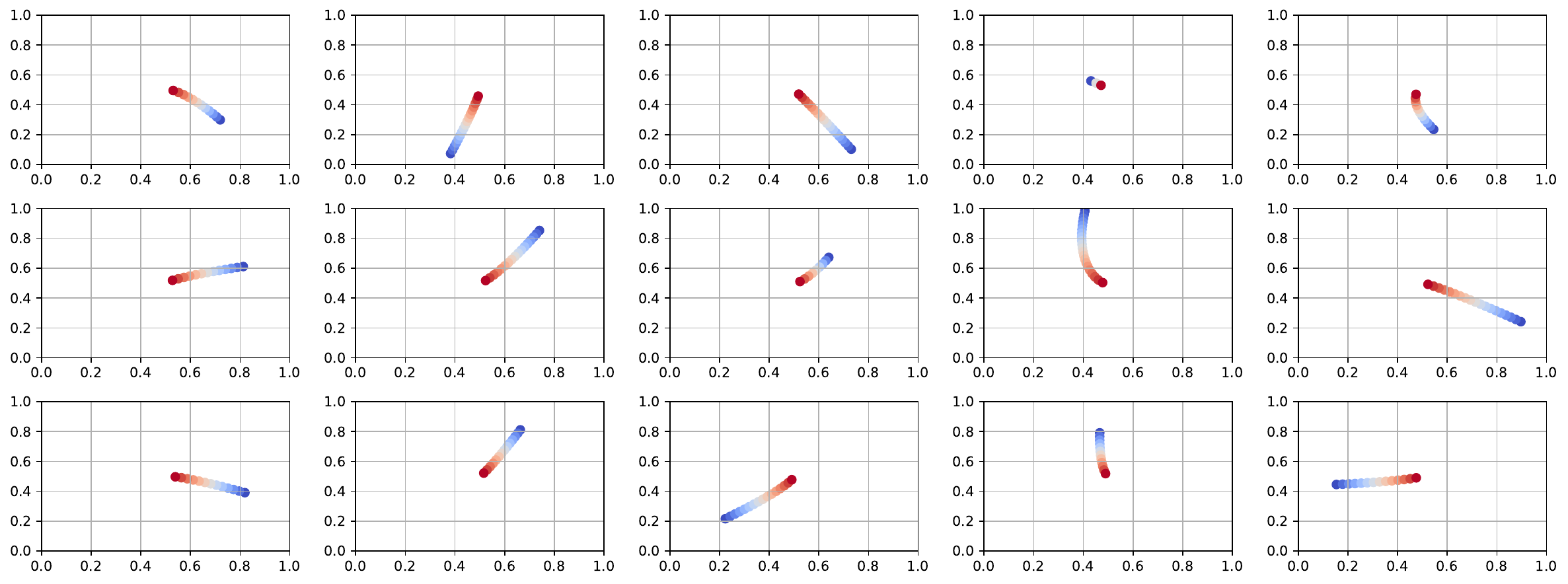}
    }
    \hfill
    \subfloat[Coordinate walk in $f_{\text{rot}}$]{
    \includegraphics[width=0.45\textwidth]{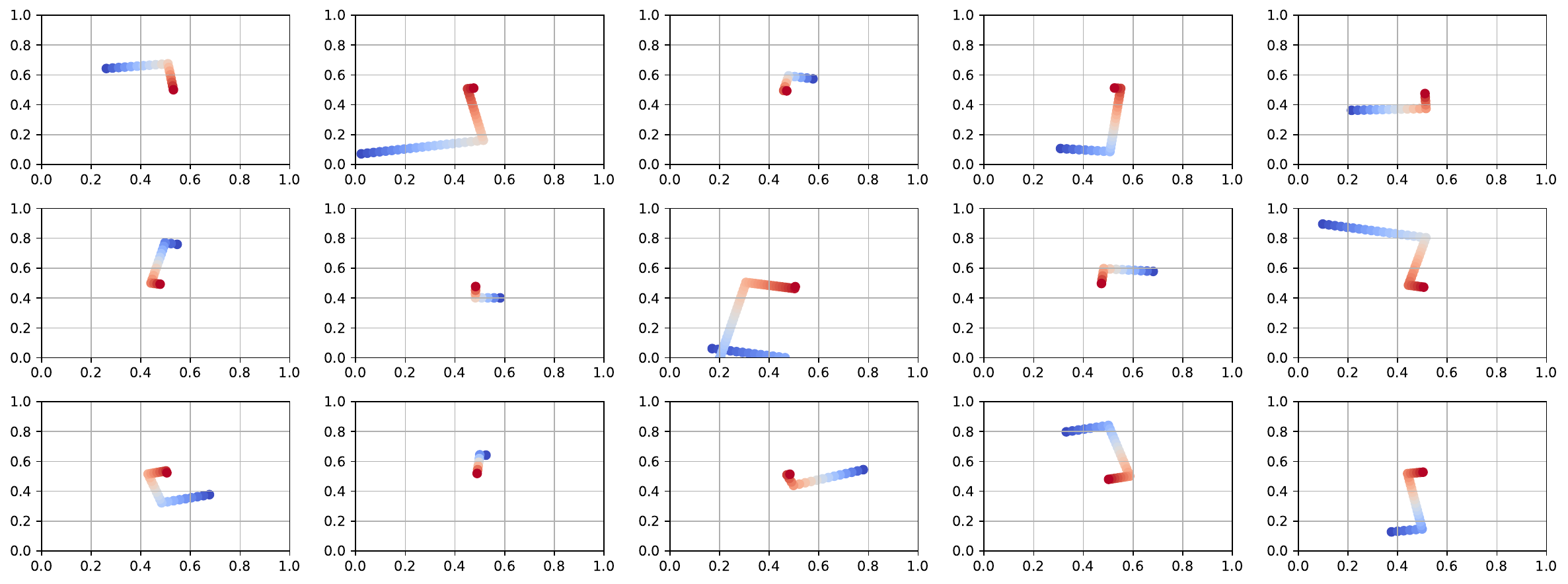}
    }

    \subfloat[Direct policy in $f_{\text{regrot}}$]{
    \includegraphics[width=0.45\textwidth]{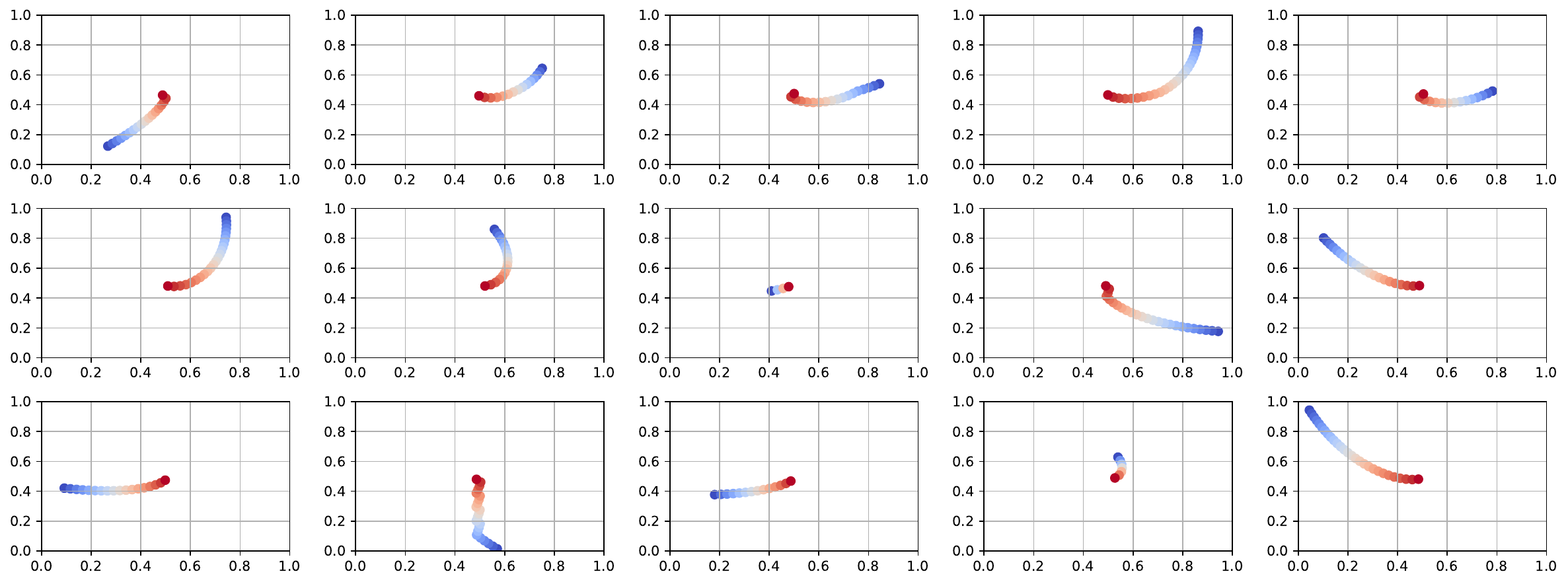}
    }
    \hfill
    \subfloat[Coordinate walk in $f_{\text{regrot}}$]{
    \includegraphics[width=0.45\textwidth]{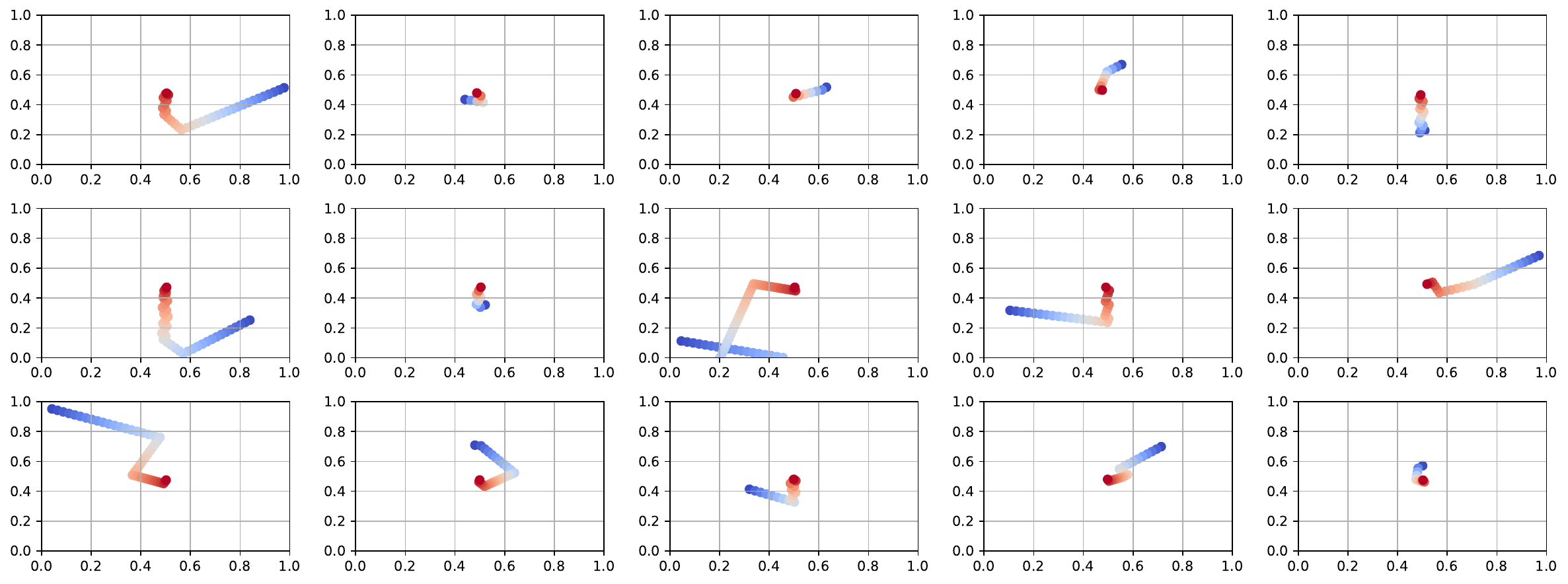}
    }

    \subfloat[Direct policy in $f_{\text{sin}}$]{
    \includegraphics[width=0.45\textwidth]{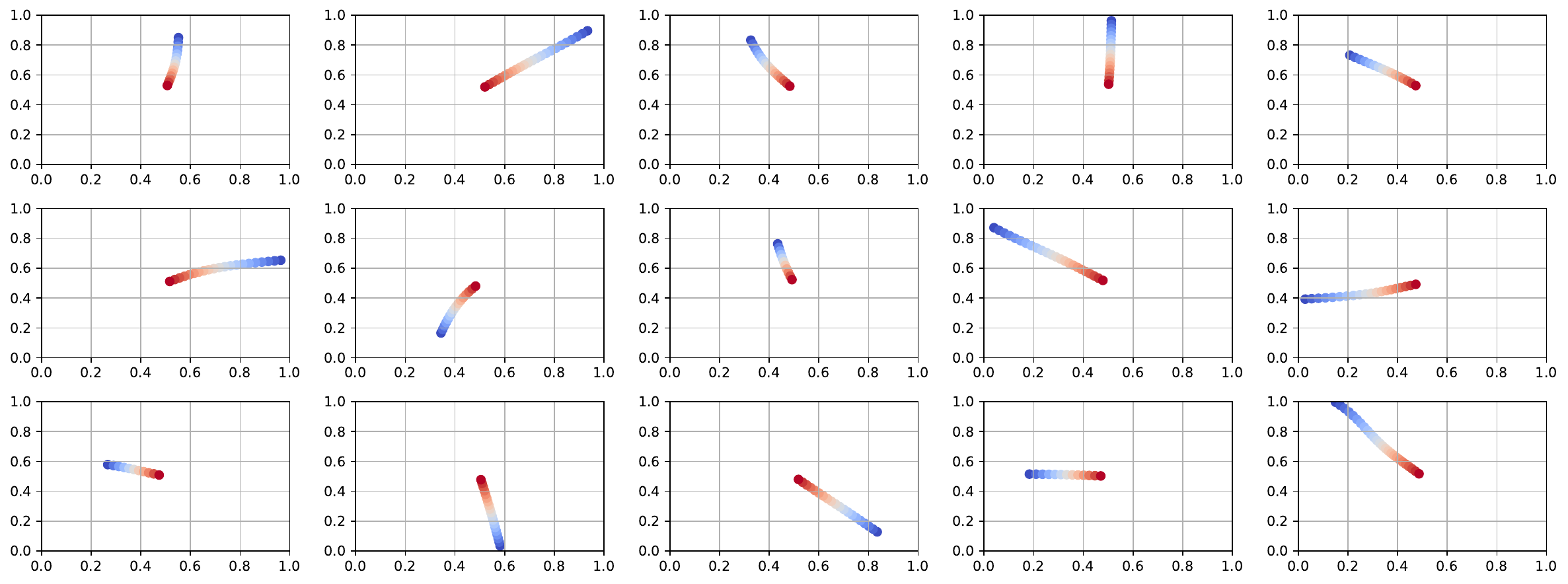}
    }
    \hfill
    \subfloat[Coordinate walk in $f_{\text{sin}}$]{
    \includegraphics[width=0.45\textwidth]{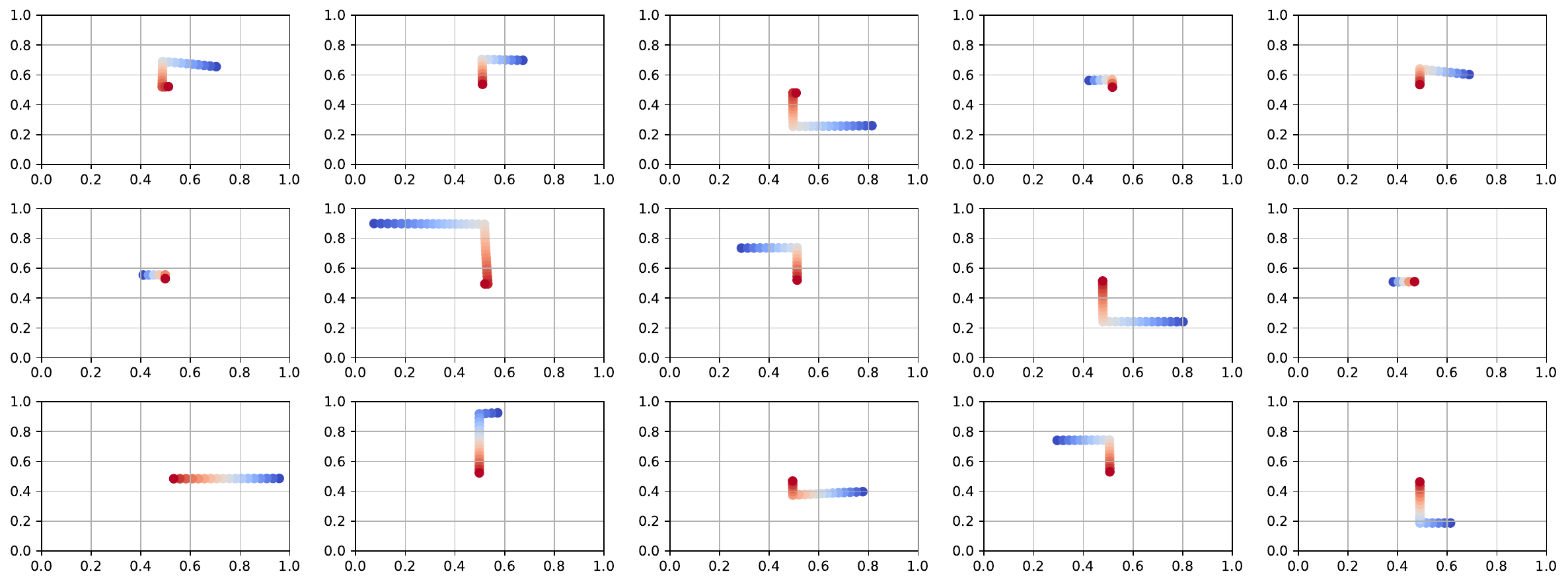}
    }

    \subfloat[Direct policy in $f_{\text{sqrt}}$]{
    \includegraphics[width=0.45\textwidth]{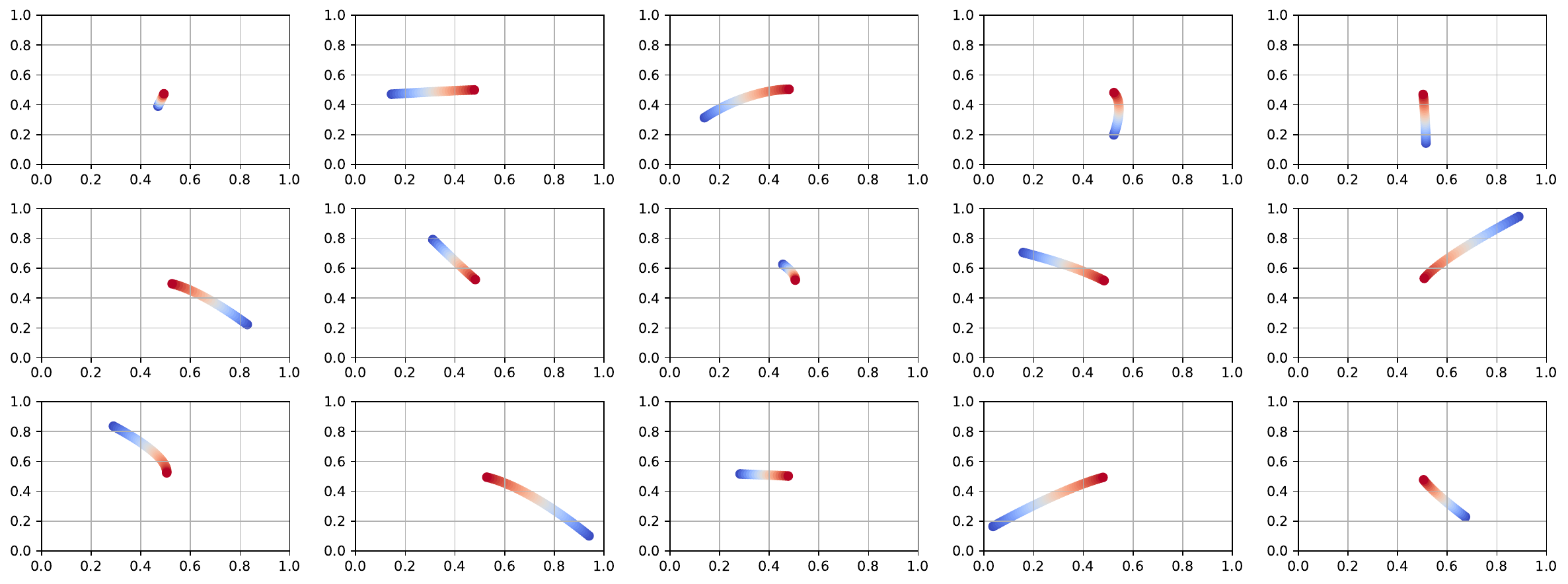}
    }
    \hfill
    \subfloat[Direct walk in $f_{\text{sqrt}}$]{
    \includegraphics[width=0.45\textwidth]{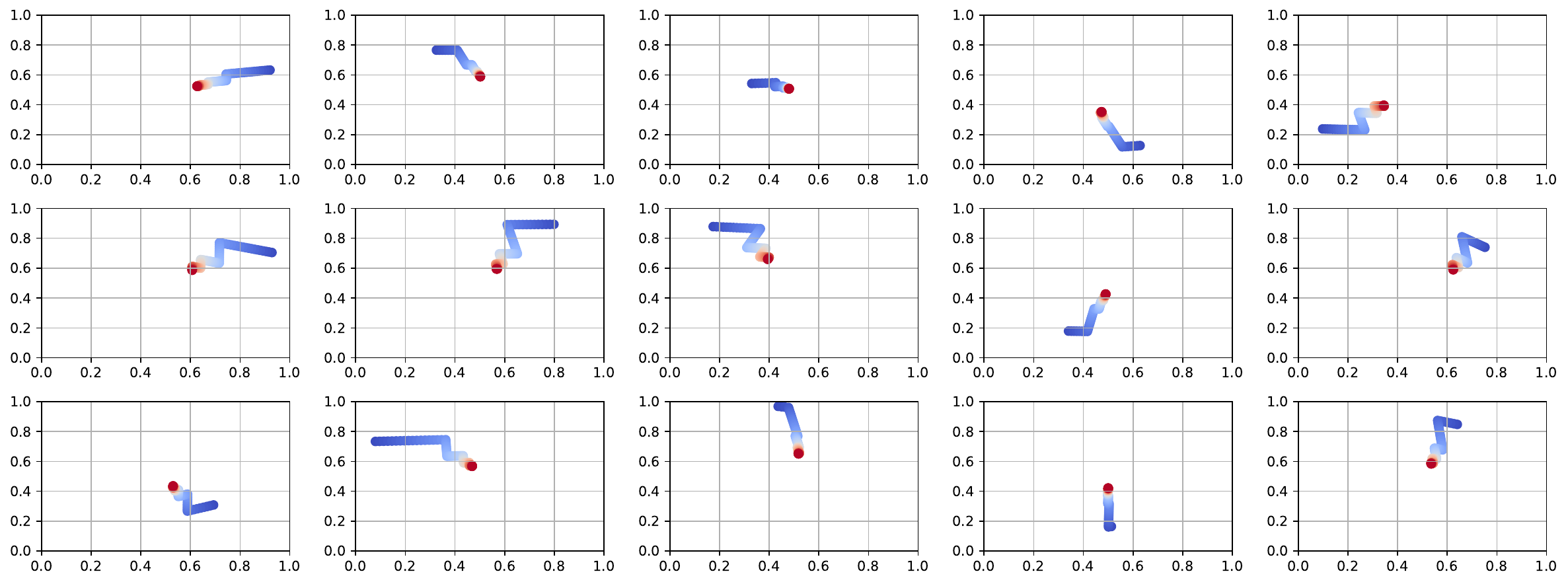}
    }
    \caption{Trajectories of direct policy and coordinate walk in different movement dynamics.}
    \label{fig:example_trajectories}
\end{figure}
Under mild distortions and additional assumptions on the distribution of $s_W$, the direct policy is the optimal policy

 the optimal behavior under mild distortion and additional assumptions on the
distribution of $s_W$. First, in case of full observability, the optimal policy is as follows:
\begin{prop}\label{prop:optimal_policy_full_observability}
Under full observability, i.e., $O(s, W)=(s,W)$, the optimal policy $\pi^*_\lambda$ is given by
$$\pi^*_\lambda(s, W) = \arg\min_{\|a\|\le \lambda} \|f(s, a, W) - s_W\|.$$
\end{prop}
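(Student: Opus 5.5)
The plan is a pathwise comparison argument. Since $O(s,W)=(s,W)$ and the dynamics are deterministic given $W$, it suffices to fix $W$ and an initial position $s_0$ and show that, for every policy $\pi$ with $\|\pi(\cdot)\|\le\lambda$, the return of $\pi^*_\lambda$ from $(s_0,W)$ is at least that of $\pi$. Taking expectations over $s_0$ and $W$ then gives $J(\pi^*_\lambda)\ge J(\pi)$. Write $d_t^*=\|s_t^*-s_W\|$ and $d_t=\|s_t-s_W\|$ for the distances along the two trajectories. The return is $-\sum_t\gamma^t d_{t+1}$, truncated at termination $d\le\theta$, and every term is nonpositive. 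So it suffices to prove $d_t^*\le d_t$ for all $t$, and that $\pi^*_\lambda$ terminates no later than $\pi$.

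The key step is an induction on $t$. For this I will make the ``mild distortion'' hypothesis explicit as a one-step monotonicity property. Define the one-step optimal distance
$$
\phi_W(s):=\min_{\|a\|\le\lambda}\|f(s,a,W)-s_W\|.
$$
I assume $\phi_W(s)=h_W(\|s-s_W\|)$ for a nondecreasing function $h_W$. In words, the best reachable distance depends only on the current distance to the target, and it is monotone in that distance. I would check that this holds for the distortions of the form $f(s,a,W)=s+M_W a$ with $M_W$ a scaled orthogonal matrix, such as $f_{\text{rot}}$ and, for $\lambda$ small, $f_{\text{scale}}$. In those cases $h_W(d)=\max(d-c\lambda,0)$. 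The same assumption should also capture the intended regime for $f_{\text{blend}}$ under a suitable distribution of $s_W$.

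The induction then runs as follows. The base case $d_0^*=d_0$ is trivial. For the step, the greedy choice gives $d_{t+1}^*=h_W(d_t^*)$. The induction hypothesis together with monotonicity of $h_W$ gives $h_W(d_t^*)\le h_W(d_t)$. Finally, $h_W(d_t)=\phi_W(s_t)\le d_{t+1}$, because $\pi$ picks some admissible action. Termination is handled in the same way: if $d_t\le\theta$ then $d_t^*\le\theta$, so $\pi^*_\lambda$ stops no later. After stopping, $\pi^*_\lambda$ collects zero reward while $\pi$ collects nonpositive reward. Summing these comparisons with weights $\gamma^t$ yields the claim. The existence of the $\arg\min$ follows from compactness of the ball $\{\|a\|\le\lambda\}$, provided $f(s,\cdot,W)$ is continuous in $a$. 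For discontinuous $f$ I would replace the minimum by an infimum and run an $\epsilon$-argument.

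The main obstacle is justifying the monotonicity assumption, because without it the statement is false in general. Greedy one-step distance reduction can lead into a region from which progress is slow; $f_{\text{regrot}}$ is a natural counterexample. My first approach is therefore to state the hypothesis explicitly and verify it for the distortions covered by ``mild''. If a weaker hypothesis is desired, the alternative is a dynamic-programming verification. One shows that $V^*(s,W)=-G_W(\|s-s_W\|)$ with $G_W$ nondecreasing, by value iteration preserving this form. The Bellman minimizer of $\|f(s,a,W)-s_W\|+\gamma G_W(\|f(s,a,W)-s_W\|)$ is then exactly the distance minimizer, so it coincides with $\pi^*_\lambda$.
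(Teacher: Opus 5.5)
Your argument is correct for the statement you actually prove, and it takes a different and more careful route than the paper. The paper's proof is a one-liner. Since $\pi^*_\lambda$ maximizes the immediate reward $-\|f(s,a,W)-s_W\|$ at every state, it concludes directly that $\max_a Q^\pi(s,a,W)\le\max_a Q^{\pi^*_\lambda}(s,a,W)$. That passage from per-step greediness to optimality is exactly what needs proof. It silently requires that being closer now never hurts later, which is your monotonicity of $h_W$ (equivalently, $V^*(\cdot,W)$ being a nonincreasing function of $\|s-s_W\|$ alone).

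Your pathwise chain $d^*_{t+1}=h_W(d^*_t)\le h_W(d_t)\le d_{t+1}$, together with the termination comparison, supplies this step. You are right that the unqualified proposition is false without some such hypothesis; the paper itself only gestures at ``mild distortions'' in the sentence preceding it. The paper's route buys brevity. Yours buys an actual proof of a correctly qualified statement, and because the comparison is pathwise it also covers stochastic or history-dependent competitors.

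Some of your side claims need correcting.

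\emph{$f_{\text{regrot}}$.} This is not a counterexample under full observability. Every $\mathrm{Rot}_{W_i}$ is orthogonal, so $\{f(s,a,W):\|a\|\le\lambda\}=B_\lambda(s)$ in every region and $h_W(d)=\max(d-\lambda,0)$. Your hypothesis therefore holds; the policy simply undoes the known rotation.

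\emph{A genuine counterexample.} You need state-dependent step magnitudes. In one dimension take $s_W=0$, $\lambda=1$ and $f(s,a,W)=s+g(s)a$, with $g=10^{-2}$ on $(1.5,2]$, $g=10$ on $(2,2.5)$ and $g=1$ elsewhere. From $s_0=3$, greedy moves to $2$ and then crawls, at cost at least $2+1.99\gamma$. The path $3\to2.1\to0$ costs only $2.1$.

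\emph{$f_{\text{blend}}$.} This is \emph{not} covered by your hypothesis. For generic $W$ the set $s+(I+W)B_\lambda$ is an ellipsoid, so $\phi_W(s)$ depends on the direction of $s-s_W$, not just its norm. Since $W$ is observed, no assumption on the distribution of $s_W$ can repair a per-$W$ hypothesis.

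\emph{$f_{\text{scale}}$.} Here the gain depends on the distance itself, so the correct profile is $h_W(d)=\max(d-\mathrm{clip}_{C,\lambda}(d)\,\lambda,0)$ rather than $\max(d-c\lambda,0)$. Checking the three pieces shows it is nondecreasing for every $\lambda$, so no smallness of $\lambda$ is needed.

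\emph{Dynamic-programming variant.} This does not give a weaker hypothesis. The Bellman equation gives $V^*(s,W)=-\psi_W(\phi_W(s))$ with $\psi_W(r)=r+\gamma G_W(r)$ strictly increasing. Hence a radial, monotone $V^*$ forces $\phi_W$ to be a nondecreasing function of $\|s-s_W\|$ off the terminal set, which is the same hypothesis.
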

\begin{proof}
    First, we define the state-action value
functions $Q^\pi(s, a, W)$ and $Q^\pi(s, a)$ similarly to the value functions $V^\pi(s, W)$ and
$V^\pi(s)$ from Section~\ref{sec:active_positioning}.
Clearly, the policy $\pi^*_\lambda$ is the policy yielding the maximal expected reward in each
step. This is due to the fact as it gets
closest to the terminal state $s_W$ and the reward depends only on the distance to $s_W$.
Thus
$$\max_{a} Q^\pi(s, a, W)\le\max_{a} Q^{\pi^*_\lambda}(s, a, W)$$
for any state $(s, W)$ and the same holds for the expected values over $W\sim\cW$, i.e.,
$\max_{a} Q^\pi(s, a)\le\max_{a} Q^{\pi^*_\lambda}(s, a)$.
\end{proof}

Clearly, the policy $\pi^*_\lambda$ from Proposition~\ref{prop:optimal_policy_full_observability} is not
applicable in practice as neither the context $W$ is observed nor the movement dynamics $f$
is explicitly known which is needed to solve the minimization problem in each step.
In case only $s$ is observed as in $\mathcal{O}_{\text{PO}}$, the best action a policy can take is the one
where the expected distance to the terminal state over all contexts $W$ is minimized, that is:
$$\pi^*_\lambda(s) = \arg\min_{\|a\|\le \lambda} \mathbb{E}_{W\sim\cW}[\|f(s, a, W) -s_W\|].$$
Still, without further assumptions on $f$, $s_W$, and $\cW$, computing $\pi^*_\lambda(s)$ is intractable.
However, assuming the expected value of $s_W$ exists and is available and that
the placement error does not depend on the state, i.e., $f(s, a, w)=s+g(a, W)$, the optimal is
explicitly given as follows:
\begin{prop}
    Let $f(s, a, W)=s+g(a, W)$ with $\mathbb{E}_{W\sim\cW}[g(a, W)]=a$ and assume
    that $\mathbb{E}_{W\sim\cW}[s_W]=s^*$.  Then the optimal policy is
    $\pi^*_\lambda(s) = \mathrm{clip}_\lambda (s^*-s)$.
\end{prop}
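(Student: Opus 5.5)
The plan is to reduce the claim to a one-step statement and then argue that the greedy one-step choice is also optimal over the whole horizon. The observation is only $s$, so we must minimize an expectation over $W$.

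Fix a state $s$. For any admissible action $a$ with $\|a\|\le\lambda$, the next position is $s+g(a,W)$. The first step is to compute the expected one-step cost
\[
\mathbb{E}_{W}\bigl[\|s+g(a,W)-s_W\|\bigr].
\]
The direct tool is Jensen's inequality. The norm is convex, so
\[
\mathbb{E}_W\|s+g(a,W)-s_W\|\ge \|s+\mathbb{E}_W[g(a,W)]-\mathbb{E}_W[s_W]\| = \|s+a-s^*\|.
\]
However, Jensen only gives a lower bound. To make this an actual characterization we need the expected objective to be a monotone function of $\|s+a-s^*\|$. I would therefore work with the quantity the paper implicitly uses, namely the distance to the expected target, treating $\pi^*_\lambda$ as in the preceding display. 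Under that reading, the problem becomes minimizing $\|s+a-s^*\|$ over the ball $\|a\|\le\lambda$. This is a Euclidean projection problem. If $\|s^*-s\|\le\lambda$, the minimizer is $a=s^*-s$ with value $0$. Otherwise the minimizer is $\lambda(s^*-s)/\|s^*-s\|$, by Cauchy--Schwarz, since $\|s+a-s^*\|\ge \|s^*-s\|-\|a\|$ with equality exactly when $a$ is aligned with $s^*-s$. Both cases together are $\mathrm{clip}_\lambda(s^*-s)$.

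The second step lifts the one-step optimality to the full return, mirroring Proposition~\ref{prop:optimal_policy_full_observability}. The argument is an induction on the horizon, or equivalently a policy-improvement comparison of $Q$-values. Under the clipped policy, the expected position after each step is the point of the ball around the current state closest to $s^*$. For any other policy, a triangle-inequality argument shows its expected distance to $s^*$ after $t$ steps is at least $\max(\|s_0-s^*\|-t\lambda,0)$. The clipped policy attains this bound at every $t$. Each reward term is therefore dominated, and summing with discounting gives $V^{\pi}\le V^{\pi^*_\lambda}$ pointwise, hence $J(\pi)\le J(\pi^*_\lambda)$.

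The main obstacle is the gap between $\mathbb{E}\|\cdot\|$ and $\|\mathbb{E}\cdot\|$. Exact optimality with respect to the true expected reward needs more than the stated hypotheses, for example a symmetry of the noise $g(a,W)-a$ and of $s_W-s^*$ around zero that keeps the expected norm monotone in $\|s+a-s^*\|$. My first attempt would be to exploit a radial symmetry assumption so that the expected norm is an increasing function of the mean displacement. If that fails, I would state the result for the surrogate objective $\|s+a-s^*\|$, where the proof above is complete.
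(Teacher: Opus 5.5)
Your diagnosis of the main obstacle is correct, and it is exactly where the paper's own proof breaks. The paper takes $\pi^*_\lambda(s)$ to be the one-step minimizer $\arg\min_{\|a\|\le\lambda}\mathbb{E}_W\|f(s,a,W)-s_W\|$ from the display preceding the proposition. It then replaces $\mathbb{E}_W\|s+g(a,W)-s_W\|$ by $\|s+a-s^*\|$ inside the arg-min, and finishes with the same projection onto the ball that you carry out. That exchange of expectation and norm does not follow from the hypotheses, and the statement is in fact false as written. Take $d=1$, $g(a,W)=a$, $s_W=1$ with probability $2/3$ and $s_W=4$ with probability $1/3$ (so $s^*=2$), $s=0$ and $\lambda\ge 2$. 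Then $\mathrm{clip}_\lambda(s^*-s)=2$ has expected distance $\tfrac23\cdot 1+\tfrac13\cdot 2=\tfrac43$, while $a=1$ has expected distance $1$. The one-step minimizer is governed by a median of $s_W$, not its mean. So your projection argument for the surrogate $\|s+a-s^*\|$ already contains everything in the paper's proof that is valid, and no proof from these hypotheses alone can exist.

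The remaining problems are in your proposed repairs. Symmetry of $g(a,W)-a$ about zero is not enough, because the law of that noise may depend on $a$, as in the paper's $f_{\text{blend}}$. With $d=1$, $s_W=s^*=1$, $g(a,W)=(1+W)a$, $W=\pm1$ equally likely, $s=0$ and $\lambda=1$, the clipped action $a=1$ has expected distance $1$, whereas $a=1/2$ has expected distance $1/2$. For the paper's Gaussian $f_{\text{blend}}$ the same failure occurs whenever $0<\|s^*-s\|\le\lambda$: the derivative of the expected distance in the step length at the clipped action equals $\mathbb{E}\|W(s^*-s)\|>0$.

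A sufficient condition that makes your argument go through is that $\zeta:=g(a,W)-a-(s_W-s^*)$ has a law independent of $a$ and invariant under orthogonal transformations. Then $\phi(y):=\mathbb{E}\|y+\zeta\|$ is convex and radial, hence nondecreasing in $\|y\|$, so $\mathrm{clip}_\lambda(s^*-s)$ is a (not necessarily unique) minimizer.

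Your multi-step lifting is unnecessary, since $\pi^*_\lambda$ is defined by the one-step arg-min. As written it is also unsound, for four reasons:
\begin{itemize}
\item $W$ is fixed for the whole episode, so $s_t$ is correlated with $W$, and $\mathbb{E}[g(a_t,W)]=\mathbb{E}[a_t]$ can fail once $a_t$ depends on $s_t$.
\item $\|g(a,W)\|$ is not bounded by $\lambda$, so a policy that infers $W$ from the visited positions can beat $\max(\|s_0-s^*\|-t\lambda,0)$.
\item By Jensen, the clipped policy generally does not attain that bound for $\mathbb{E}\|s_t-s^*\|$.
\item The rewards measure distance to $s_W$, not to $s^*$.
\end{itemize}
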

\begin{proof}
    We have
    \begin{align*}
        \pi^*_\lambda(s) &= \arg\min_{\|a\|\le \lambda} \mathbb{E}_{W\sim\cW}[\|f(s, a, W) -s_W\|]\\
                  &= \arg\min_{\|a\|\le \lambda} \mathbb{E}_{W\sim\cW}[\|s+g(a, W) -s_W\|]\\
                  &= \arg\min_{\|a\|\le \lambda} [\|s+a -s^*\|]\\
                  &= \mathrm{clip}_\lambda (s^*-s)
    \end{align*}
\end{proof}

\section{Additional experiments in Fetch-environment}\label{app:fetch}

In extension to the reach experiments in Section~\ref{sec:experiments} where the positional
differences are directly observed, we provide in this section a proof of principle that shortcut
augmentations can also benefit offline RL methods in more involved robotic environments. 
To this end, we consider two scenarios based on the
Fetch environment~\cite{fetch}. In the first scenario, we study a reaching task in which the robotic arm must reach a target
position in 3D space. The observation is an image of the scene. We collect $100$ trajectories using the coordinate walk policy
described in Section~\ref{s:logging_policy}.

\begin{figure}[H]
    \centering

    \hfill
    \subfloat[Image-based reaching in $d=3$ with $100$ trajectories]{
    \includegraphics[width=0.4\textwidth]{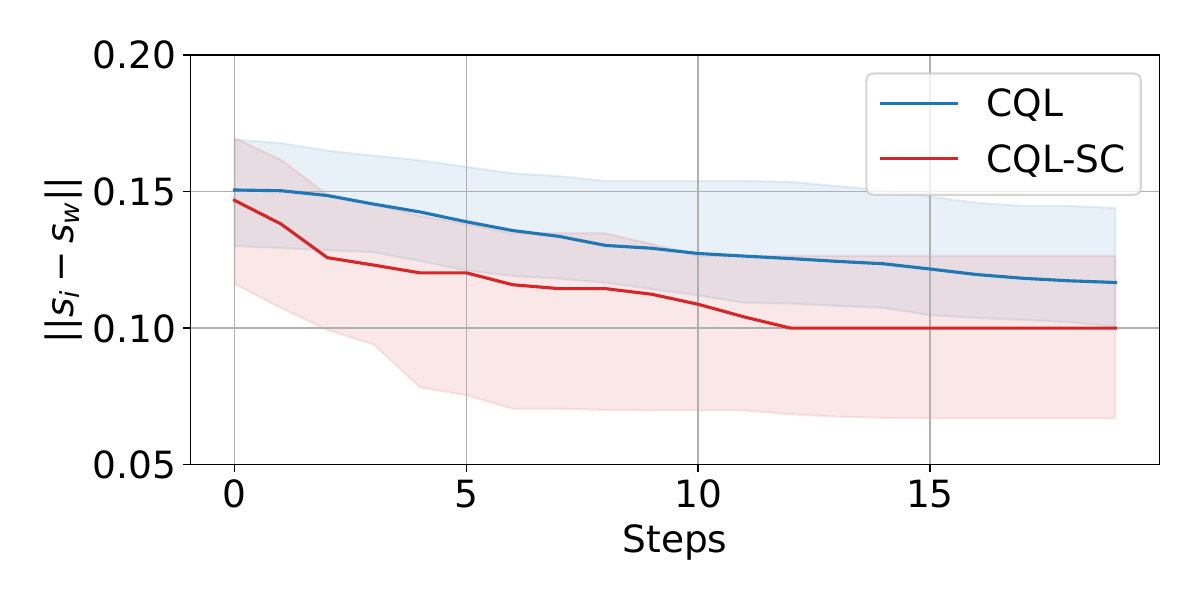}
    }
    \hfill
    \subfloat[Position based pick-and-place in $d=3$ with $1000$ trajectories]{
    \includegraphics[width=0.4\textwidth]{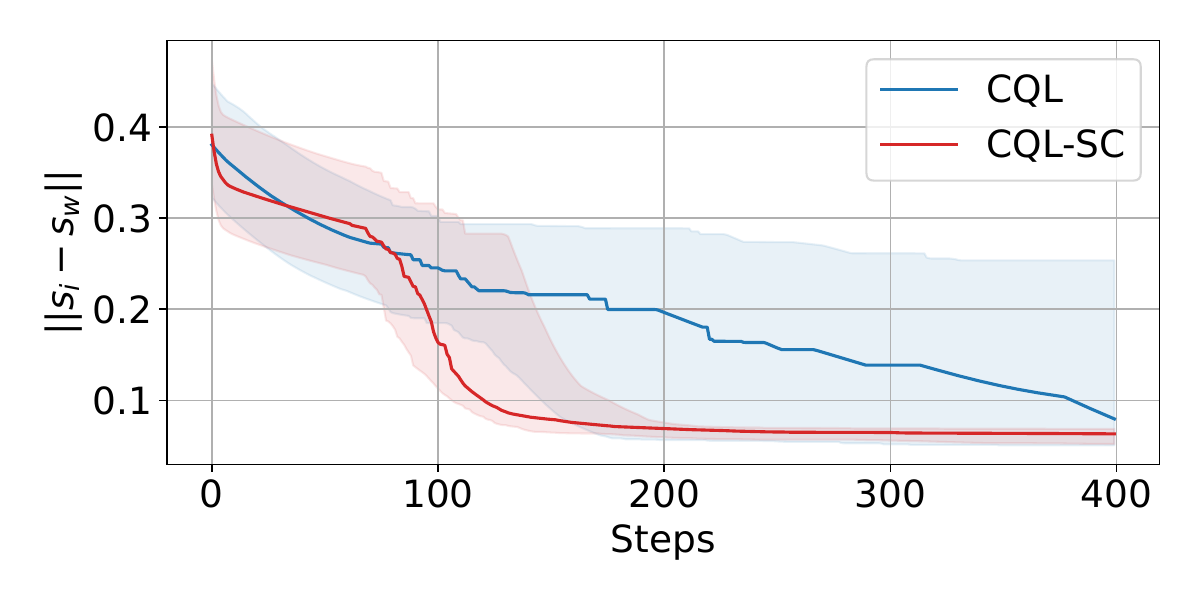}
    }
    \hfill
    \caption{Experiments in the Fetch environment.}\label{fig:fetch_experiments}.
\end{figure}

In the second scenario, we consider a variant of the pick-and-place task where the robotic arm must
move an object from a random initial position to a random target position. We focus solely on the
positioning, i.e., the object does not need to be grasped, only touched, assuming perfect gripper control.
The policy used here performs two consecutive coordinate walks: one to reach the object and one to
reach the target position. The observations are given by the distances from the gripper to the
object and from the gripper to the target where the first distance is zeroed once solves the
touching task. In this setting, we collect $1000$ trajectories.
On the collected datasets, we train CQL both with and without shortcuts, and the results are
reported in Figure~\ref{fig:fetch_experiments}.

\section{Details for Experimental Results}\label{app:additional_experiments}

\subsection{Hyperparameters of learning algorithms}\label{sec:effect_of_hyperparameters}

\begin{minipage}[t]{0.30\textwidth}
    \centering
    \begin{tabular}{c l}
    \hline
    \textbf{Parameter} & \textbf{Value} \\
    \hline
    actor learning rate  & $ 10^{-3}$ \\
    critic learning rate & $10^{-3}$ \\
    conservative weight   & $5.0$ \\
    $\alpha$-threshold       & $10.0$ \\
    batch size            & $500$ \\
    $\gamma$                  & $0.99$ \\
    $\tau$                    & $0.005$ \\
    \hline
    \end{tabular}
    \captionof{table}{Parameter for CQL trained on collected datasets.}
\end{minipage}
\hfill
\begin{minipage}[t]{0.30\textwidth}

    \centering
    \begin{tabular}{c l}
    \hline
    \textbf{Parameter} & \textbf{Value} \\
    \hline
    actor learning rate  & $ 10^{-3}$ \\
    critic learning rate & $10^{-3}$ \\
    conservative weight   & $5.0$ \\
    $\alpha$-threshold       & $10.0$ \\
    batch size            & $500$ \\
    $\gamma$                  & $0.99$ \\
    $\tau$                    & $0.005$ \\
    \hline
    \end{tabular}
    \captionof{table}{Parameter for CQL trained as \ourmethod{} augmentor.}
\end{minipage}
\hfill
\begin{minipage}[t]{0.30\textwidth}

    \begin{tabular}{c l}
    \hline
    \textbf{Parameter} & \textbf{Value} \\
    \hline
    actor learning rate  & $ 10^{-3}$ \\
    critic learning rate & $10^{-3}$ \\
    batch size            & $256$ \\
    n updates per step   & $5$ \\
    n critics & $2$ \\
    $\gamma$                  & $0.99$ \\
    $\tau$                    & $0.005$ \\
    \hline
    \end{tabular}
    \captionof{table}{Parameter for SAC.}

\end{minipage}

\subsection{Hyperparameter study of \ourmethod{}}

In this section, we study effects of the different hyperparameters of the shortcut computation
(Algorithm~\ref{alg:shortcut_computation}) and \ourmethod{} (Algorithm~\ref{alg:data_augmentation}).
First, we study the effect of the number of augmentations per trajectory $n$ and the probability of
applying an augmentation $p$. The results are shown in Figure~\ref{fig:hyperparameter_n}.
One can see that as few as $20$ augmentations per trajectory are sufficient to achieve a substantial
improvement in performance, provided that the augmentation probability is not too low. Notably,
higher probabilities correspond to augmentations being applied earlier in the trajectory. This
suggests that augmentations at the beginning of a trajectory are more beneficial than those applied
later.

\begin{figure}[H]
    \centering
    \includegraphics[width=0.99\textwidth]{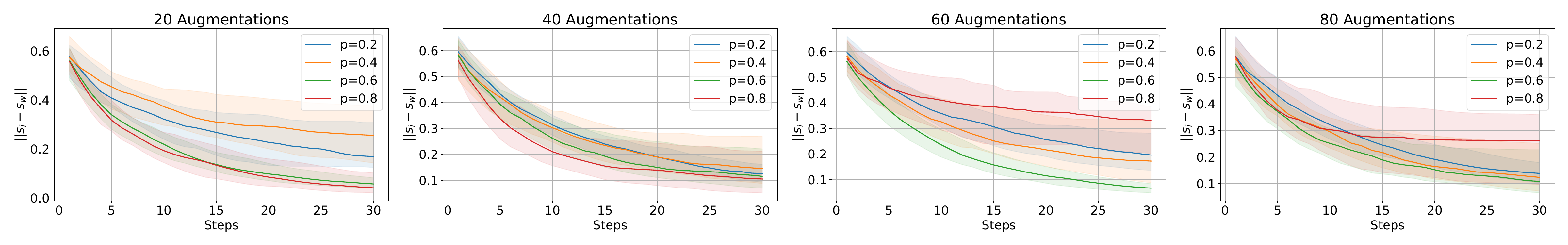}
    \caption{Experiments in $f_{\text{blend}}$ with step size $0.025$ and different probabilities
    $p$ of applying augmentations and different maximal number of augmentations per
trajectory}\label{fig:hyperparameter_n}
\end{figure}

Next, we analyse the effect of the sampling scheme of shortcuts along a trajectory. Here, we denote
the sampling mechanism described in Algorithm~\ref{alg:shortcut_computation} as \emph{weighted}.
Another way to sample shortcuts from the set $S$ computed in
Algorithm~\ref{alg:shortcut_computation} is to use a distribution that is proportional to the
inverse distance to the optimum, i.e. $p(i)\sim \frac{1}{\|s_i-s_W\|}$ or to sample uniformly from
$S$. Instead of sampling, one can also just use the shortcut residing within the action space that
leads to the point of highest reward within the trajectory called \emph{best}. The results are shown
in Figure~\ref{fig:hyperparameters_p} for $n=20$ augmentations per trajectory and $p=0.4$ showing
that in the environments we consider, the sampling strategy does not have a significant effect on
the performance.

\begin{figure}[H]
    \centering
    \includegraphics[width=0.99\textwidth]{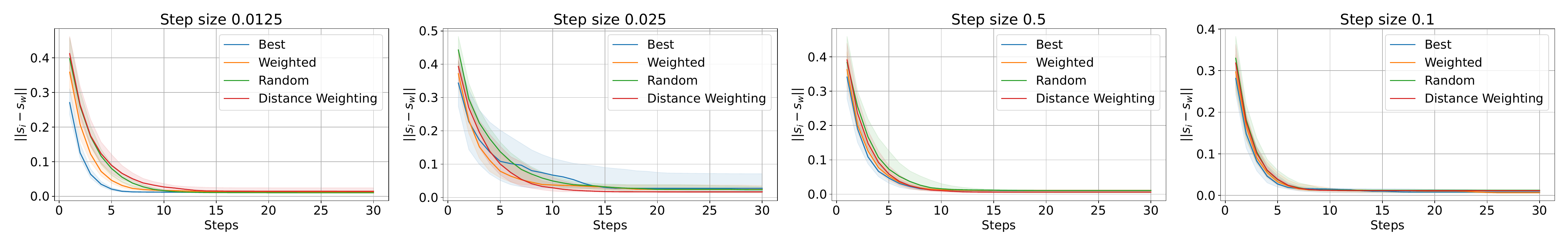}
    \hfill
    \caption{Experiments in $f_{\text{blend}}$ with different step size and different sampling
    strategies.}\label{fig:hyperparameters_p}
\end{figure}

\subsection{Comparison of \ourmethod{} and SAC}\label{sec:sac_vs_lift}

Table~\ref{tab:sac_vs_lift} summarizes settings in which \ourmethod{}-SC achieves a smaller distance
to the optimum than the SAC baseline after 30 interaction steps in 
environment $\mathcal{O}_{\text{PO}}$ with dimensionality $d=5$, across different step sizes of the
logging policy and movement distortions. Figures \ref{fig:movement}--\ref{fig:sqrt} provide a complete comparison of all methods
over the first 30  steps, showing the median distance to the target across multiple
runs.

\begin{table}[H]
  \centering
  \begin{tabular}{l p{0.5cm} p{0.5cm} p{0.5cm} p{0.5cm}}
    $\cw{l}$ & $.0125$ & $.025$ & $.05$ & $.1$ \\
    \toprule
    $f_{\text{blend}}$ & $\bullet$ & $\bullet$ & $\bullet$ & $\bullet$  \\
    $f_{\text{scale}}$ & $\bullet$ & $\bullet$ & $\bullet$ & $\bullet$  \\
    $f_{\text{rot}}$ & $\bullet$ & $\bullet$ & $\bullet$ & $\bullet$  \\
    $f_{\text{regrot}}$ & $\bullet$ &  & &  \\
    $f_{\text{sin}}$ & $\bullet$ & $\bullet$ & $\bullet$ & $\bullet$  \\
    $f_{\text{sqrt}}$ &  & $\bullet$ & $\bullet$ & $\bullet$ 
  \end{tabular}
  \caption{Cases where \ourmethod{}-SC outperforms SAC baseline in
  $\mathcal{O}_{\text{PO}}$, $d=5$.}
  \label{tab:sac_vs_lift}
\end{table}

\subsection{Ablation on structure of logging policy\label{sec:ablation_logging_policy}}

In this section, we analyse the effect of absence of structure in the logging policy on the
performance of the shortcut augmentation by injecting noise into the $\cw{l}$.
Specifically, we used $\mathcal{O}_{\mathrm{PO}}$ under three different dynamics. At each step of
the coordinate-walk logging policy, we added Gaussian noise to the action and considered a range of
noise levels, from $\lambda=0$ (the original coordinate walk) up to $\lambda=2$, where the behavior
is close to a random walk and little of the original coordinate structure remains visible (see
Figure~\ref{fig:randomized_trajectories}). We then train and evaluate three CQL models with and
without shortcut augmentation respectively on datasets generated by these noisy-variant of $\cw{l}$.
The results are in shown in Figure~\ref{fig:ablation_logging_policy}: Across all tested scenarios,
shortcut augmentation consistently yields substantially better policies, suggesting that the method
is not limited to highly structured logging policies. 

\begin{figure}[H]
    \centering
    \includegraphics[width=0.99\textwidth]{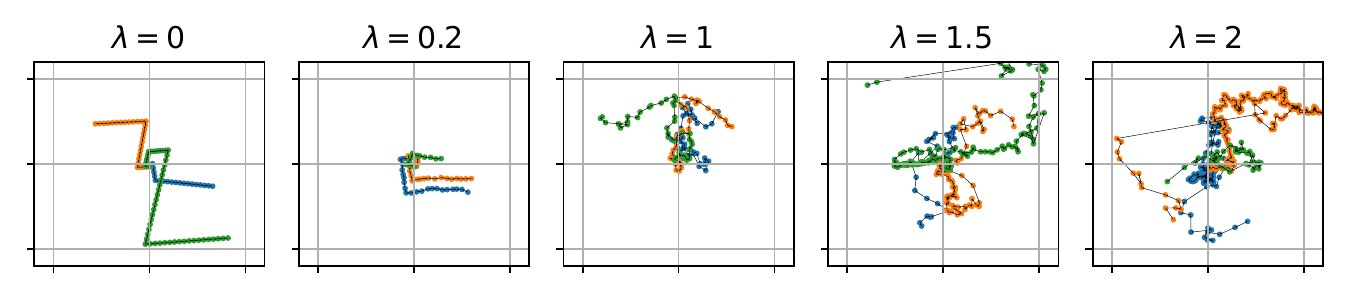}
    \caption{Comparison of different logging policies in $f_{\text{blend}}$ with $d=5$ and step size
    $0.05$.}\label{fig:randomized_trajectories}
\end{figure}

\begin{figure}[H]
    \centering
    \includegraphics[width=0.99\textwidth]{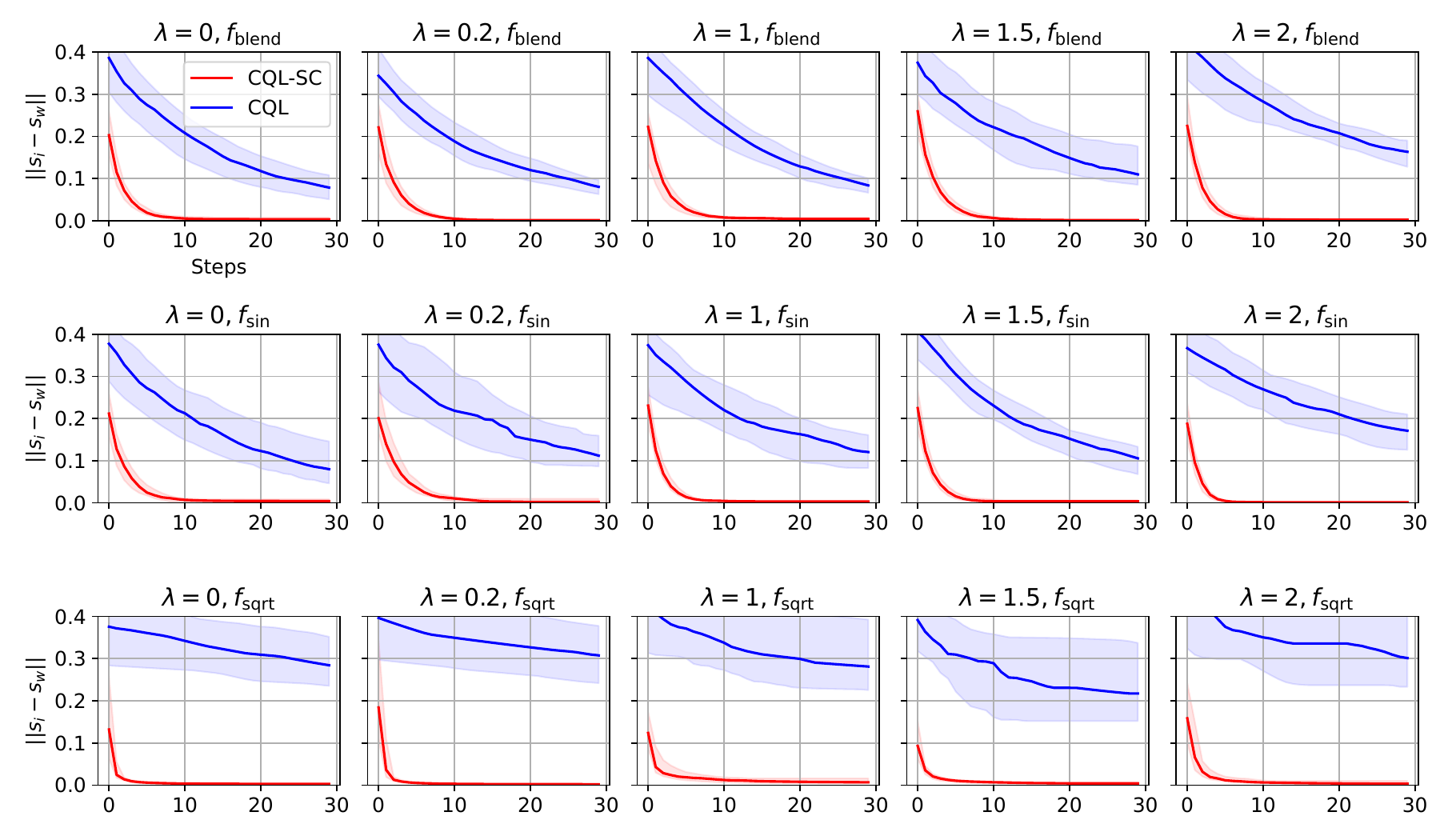}
    \caption{Comparison of noisy $\cw{l}$ with different noise levels $\lambda$ for different
        movement distortions.}\label{fig:ablation_logging_policy}
\end{figure}

\subsection{Analysis of the Influence of $C$}\label{sec:C_analyse}
In this section, we study the influence of the hyperparameter $C$ during shortcut computation
(Algorithm~\ref{alg:shortcut_computation}). Higher values of $C$ lead to more restrictive shortcut selection.

 \begin{figure}[ht]
    \centering
    \begin{tikzpicture}
        \node[] at (0,0) {\includegraphics[width=\columnwidth]{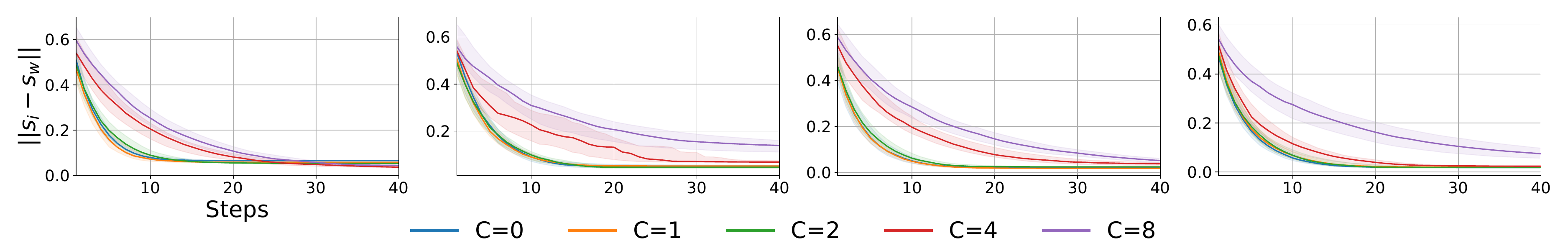}};
        \node[scale=0.7, align=center] at (-6, 1.4) {$\mathcal{O}_{\text{PO}}, f_{\text{regrot}}$ $d=5, l=0.1$};
        \node[scale=0.7, align=center] at (-2, 1.4) {$\mathcal{O}_{\text{PO}}, f_{\text{regrot}}$ $d=5, l=0.05$};
        \node[scale=0.7, align=center] at (2.2, 1.4) {$\mathcal{O}_{\text{PO}}, f_{\text{blend}}$ $d=5, l=0.1$};
        \node[scale=0.7, align=center] at (6.3, 1.4) {$\mathcal{O}_{\text{PO}}, f_{\text{blend}}$ $d=5, l=0.05$};
    \end{tikzpicture}
    \caption{Comparisons of our methods for selected scenarios.}
\end{figure}

\begin{figure}[H]
    \centering

    \subfloat[$\mathcal{O}_{\text{PO}}, f_{\text{blend}}$ $d=5$]{
    \includegraphics[width=0.48\textwidth]{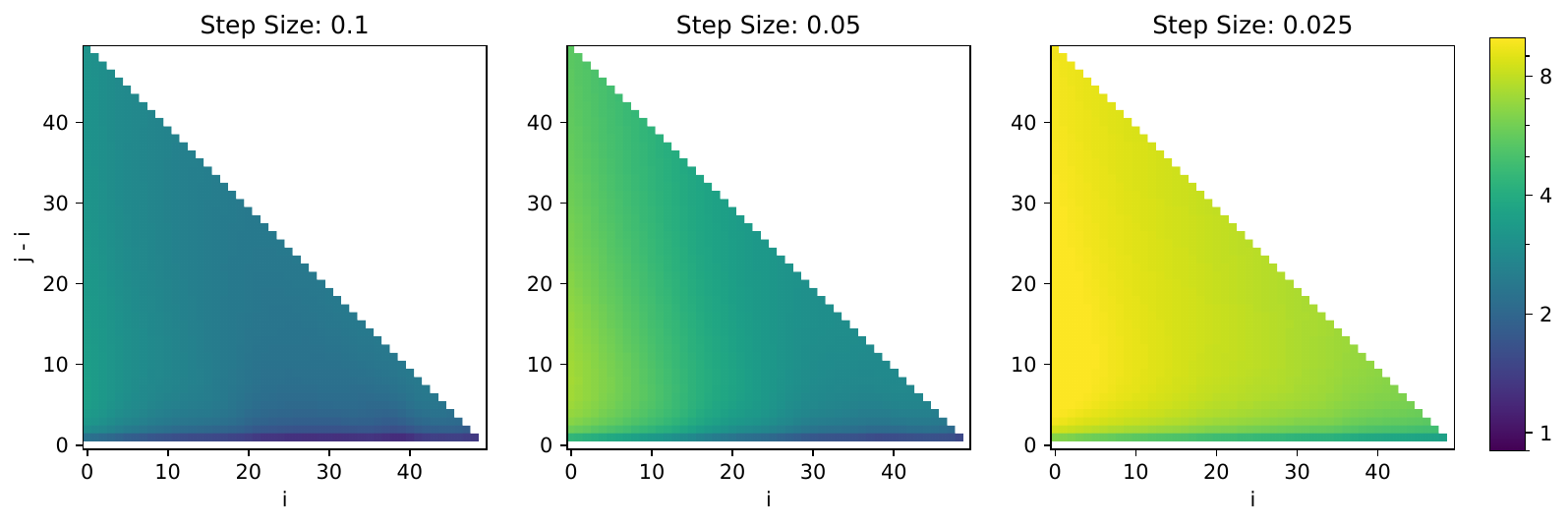}
    }
    \hfill
    \subfloat[$\mathcal{O}_{\text{PO}}, f_{\text{regrot}}$ $d=5$]{
    \includegraphics[width=0.48\textwidth]{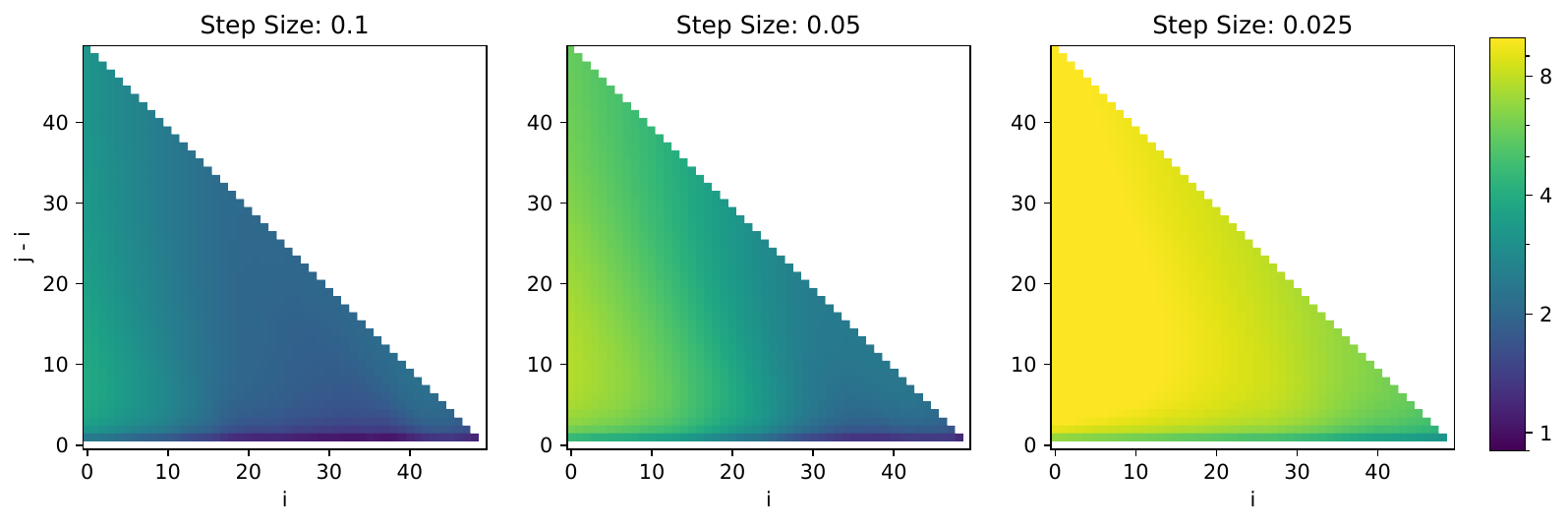}
    }
    \caption{
    Dependence which values of $C$ give valid shortcut from $i$ (x-axis) to $j-i$ (y-axis), averaged
    over $500$ episodes of $\mathcal{O}_{\text{PO}}$.
}
\end{figure}

\subsection{Additional visualization}

\begin{figure}[H]
    \centering

    \subfloat[100 Episodes $\mathcal{O}_{\text{PO}}$ with $d=2$]{
    \includegraphics[width=0.99\textwidth]{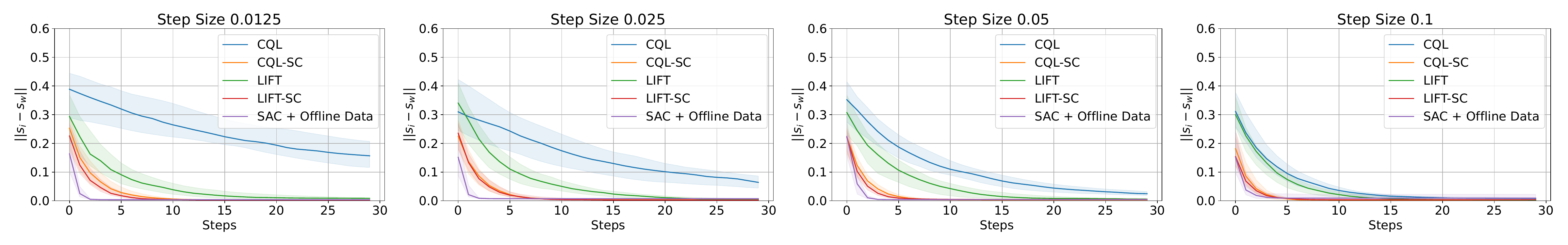}
    }
    \hfill
    \subfloat[500 Episodes $\mathcal{O}_{\text{PO}}$ with $d=5$]{
    \includegraphics[width=0.99\textwidth]{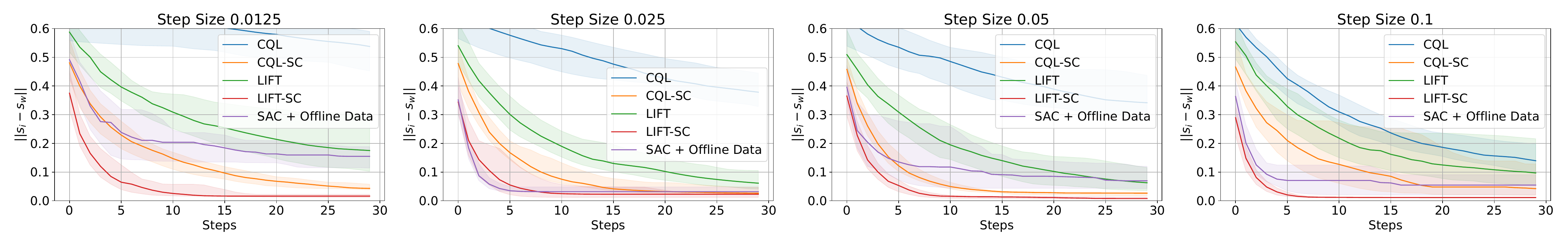}
    }
    \caption{Experiments in $f_{\text{blend}}$.}
    \label{fig:movement}
\end{figure}

\begin{figure}[H]
    \centering

    \subfloat[100 Episodes $\mathcal{O}_{\text{PO}}$ with $d=2$]{
    \includegraphics[width=0.99\textwidth]{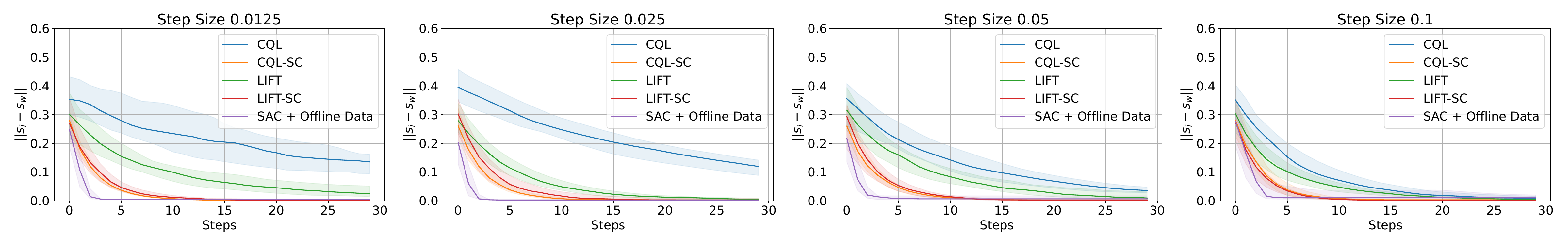}
    }
    \hfill
    \subfloat[500 Episodes $\mathcal{O}_{\text{PO}}$ with $d=5$]{
    \includegraphics[width=0.99\textwidth]{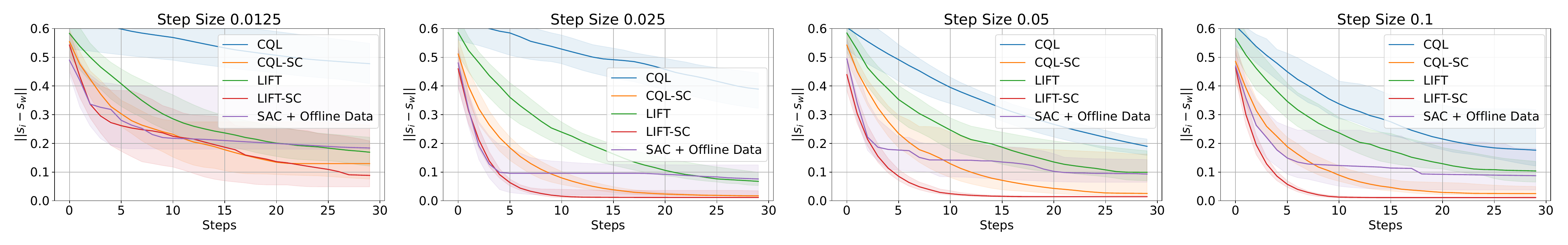}
    }
    \caption{Experiments in $f_{\text{scale}}$.}
\end{figure}

\begin{figure}[H]
    \centering

    \subfloat[100 Episodes $\mathcal{O}_{\text{PO}}$ with $d=2$]{
    \includegraphics[width=0.99\textwidth]{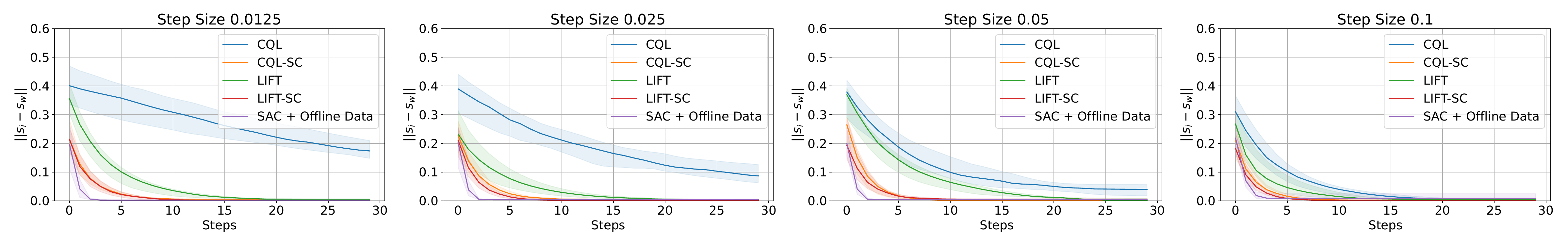}
    }
    \hfill
    \subfloat[500 Episodes $\mathcal{O}_{\text{PO}}$ with $d=5$]{
    \includegraphics[width=0.99\textwidth]{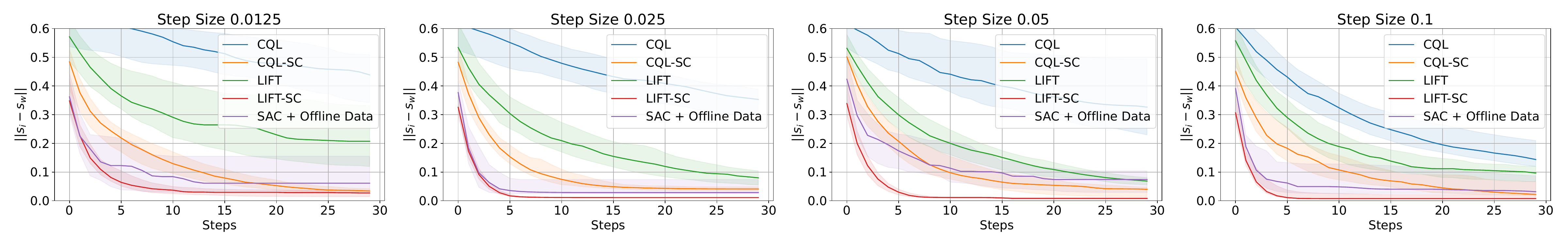}
    }
    \caption{Experiments in $f_{\text{rot}}$.}
\end{figure}

\begin{figure}[H]
    \centering

    \subfloat[100 Episodes $\mathcal{O}_{\text{PO}}$ with $d=2$]{
    \includegraphics[width=0.99\textwidth]{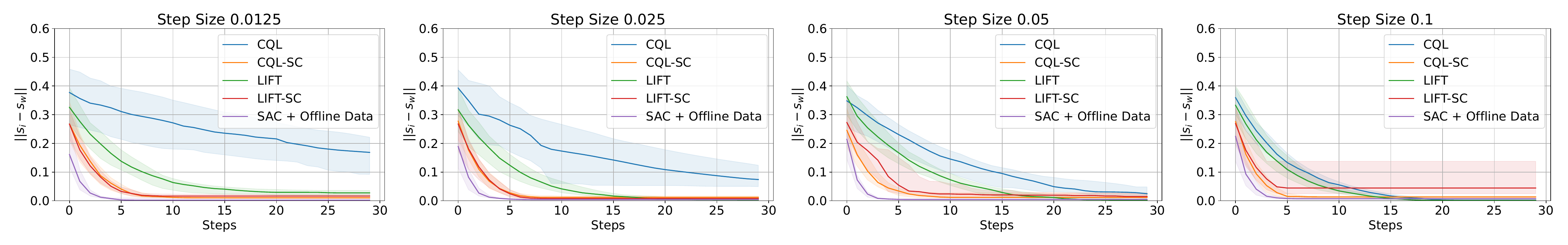}
    }
    \hfill
    \subfloat[500 Episodes $\mathcal{O}_{\text{PO}}$ with $d=5$]{
    \includegraphics[width=0.99\textwidth]{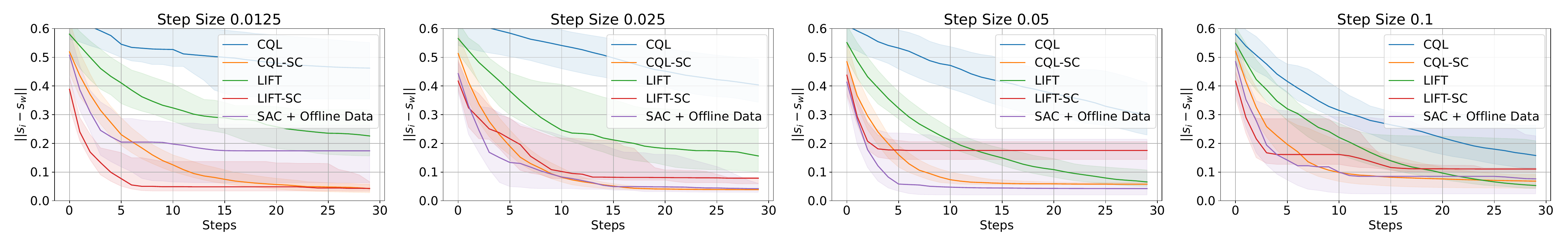}
    }
    \caption{Experiments in $f_{\text{regrot}}$.}
\end{figure}

\begin{figure}[H]
    \centering

    \subfloat[100 Episodes $\mathcal{O}_{\text{PO}}$ with $d=2$]{
    \includegraphics[width=0.99\textwidth]{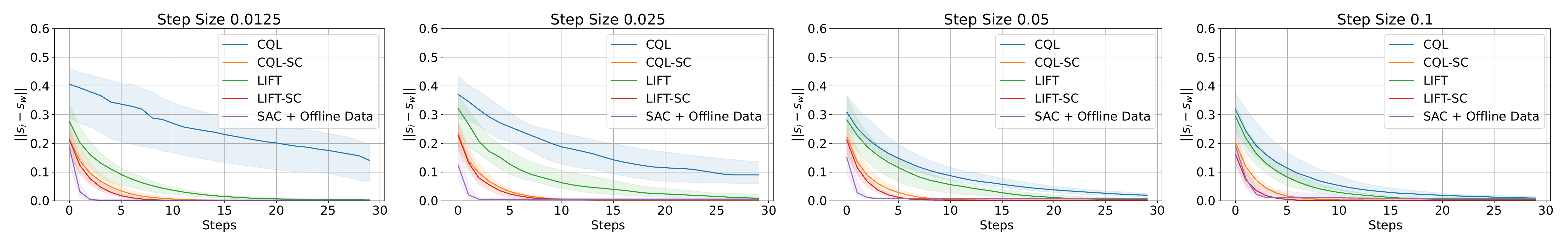}
    }
    \hfill
    \subfloat[500 Episodes $\mathcal{O}_{\text{PO}}$ with $d=5$]{
    \includegraphics[width=0.99\textwidth]{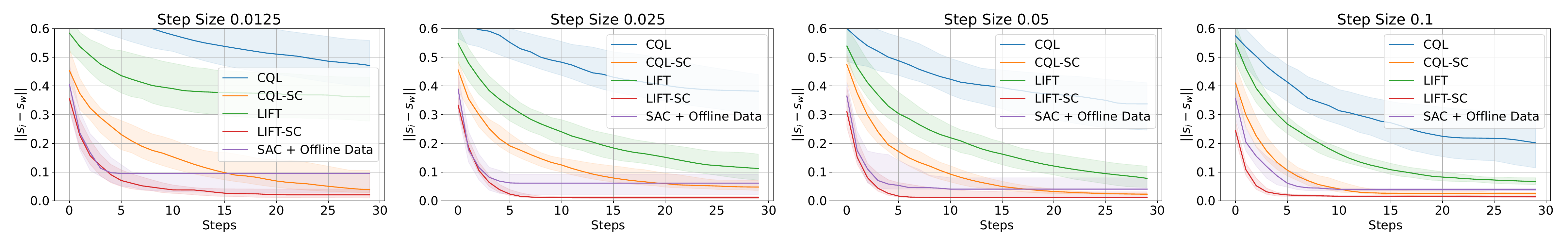}
    }
    \caption{Experiments in $f_{\text{sin}}$.}
\end{figure}

\begin{figure}[H]
    \centering

    \subfloat[100 Episodes $\mathcal{O}_{\text{PO}}$ with $d=2$]{
    \includegraphics[width=0.99\textwidth]{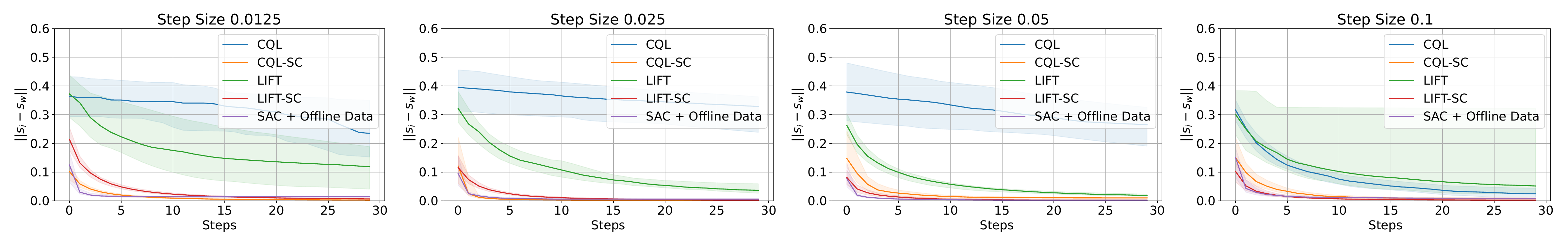}
    }
    \hfill
    \subfloat[500 Episodes $\mathcal{O}_{\text{PO}}$ with $d=5$]{
    \includegraphics[width=0.99\textwidth]{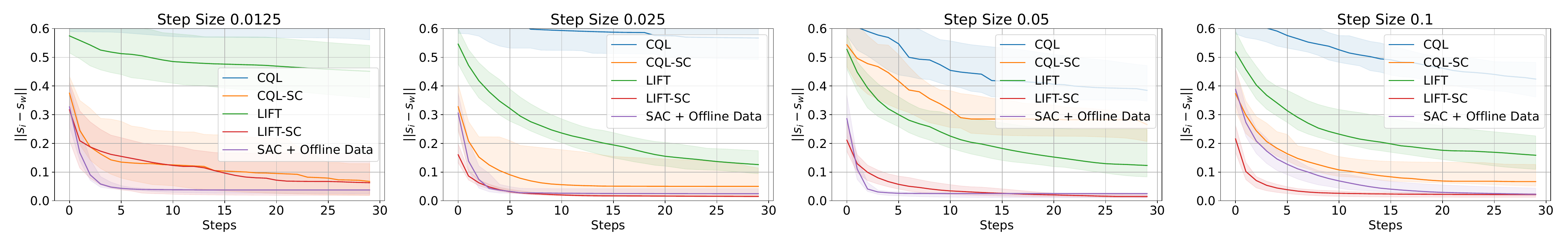}
    }
    \caption{Experiments in $f_{\text{sqrt}}$.}
    \label{fig:sqrt}
\end{figure}

\begin{figure}[H]
    \centering

    \subfloat[]{
    \includegraphics[width=0.27\textwidth]{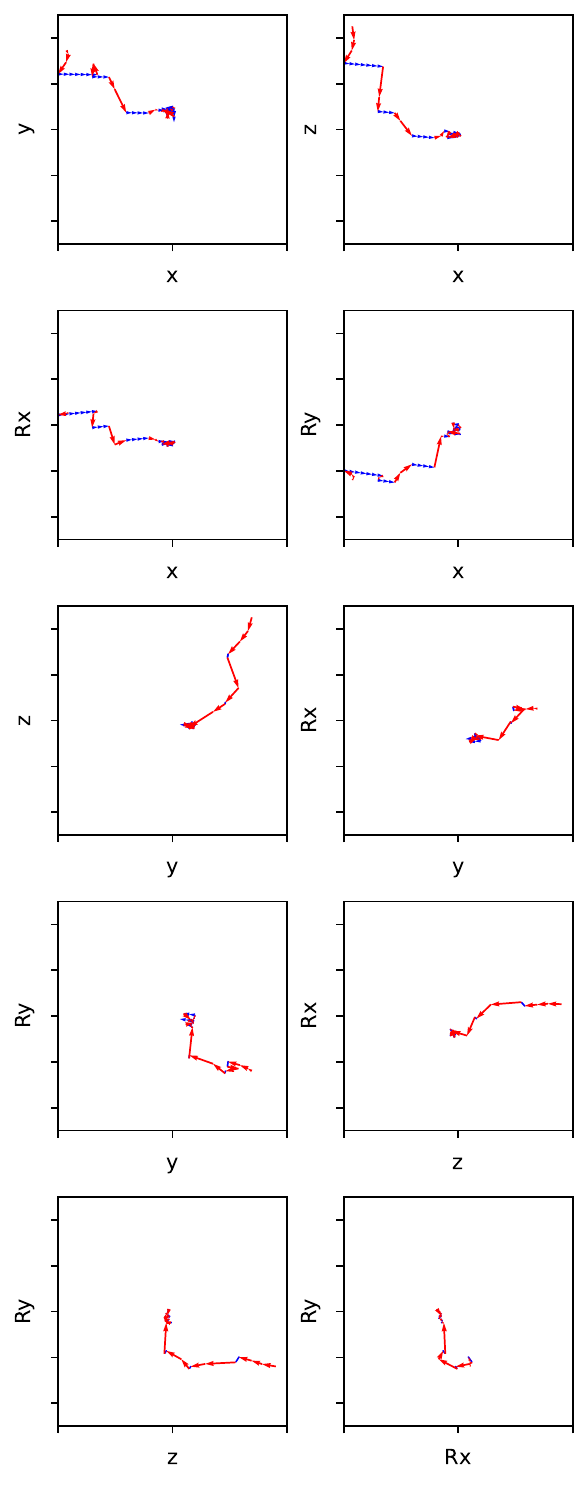}
    }
    \hfill
    \subfloat[]{
    \includegraphics[width=0.27\textwidth]{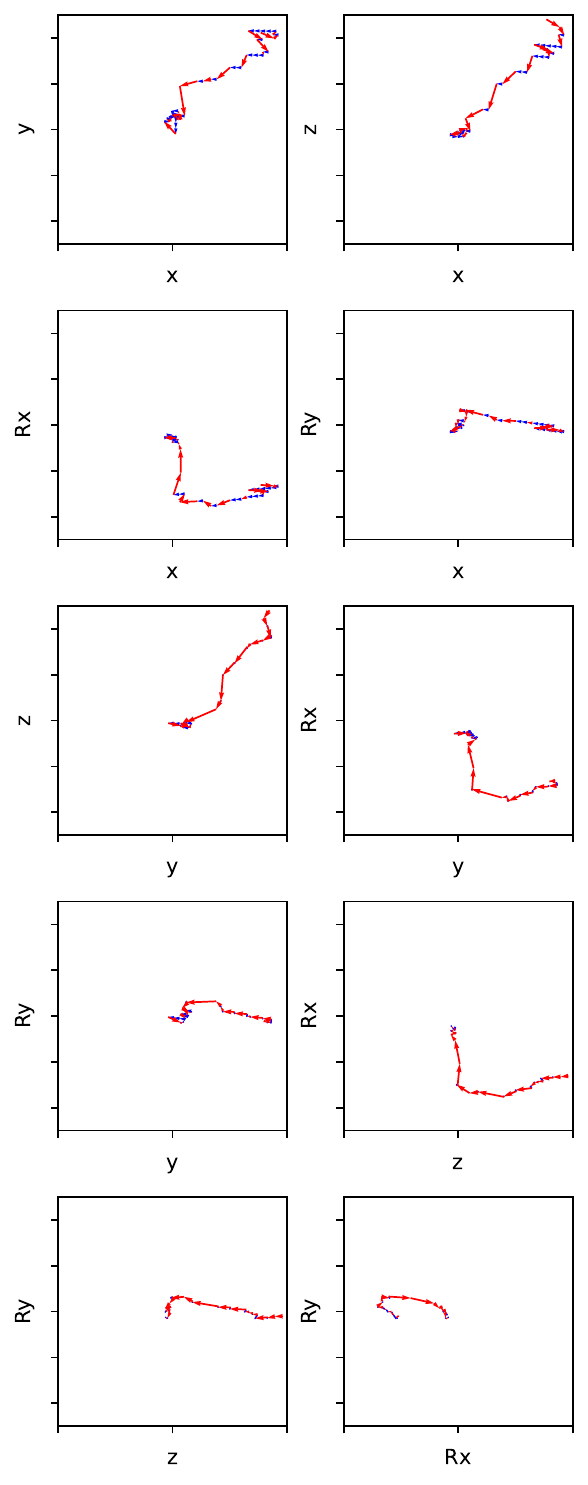}
    }
    \hfill
    \subfloat[]{
    \includegraphics[width=0.27\textwidth]{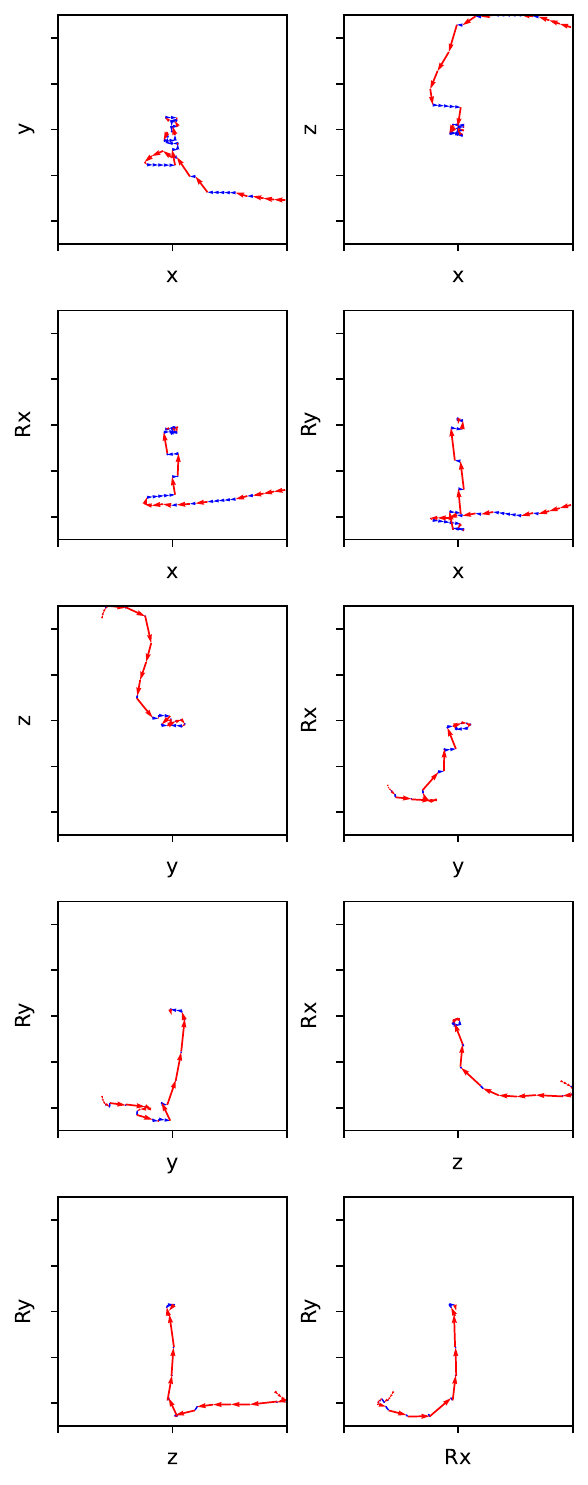}
    }
    \caption{Augmented trajectories generated by \ourmethod{} for $\mathcal{O}_{\mathrm{LP}}$ in $5$
        dimensional hidden position space: Actions coming from the augmentor in
    red and actions from the logging policy in blue.}
\end{figure}


\end{document}